\newtheorem{thm}{Theorem}
\newcommand{\argmax}{\mathop{\mathrm{arg~max}}\limits}
\newcommand{\argmin}{\mathop{\mathrm{arg~min}}\limits}
\begin{document}
%
\title{Bayesian posterior approximation via greedy particle optimization \thanks{A longer version including Appendix is available at https://arxiv.org/abs/1805.07912}}

\author{
  Futoshi Futami\\
  The University of Tokyo, RIKEN\\
  \texttt{futami@ms.k.u-tokyo.ac.jp }\\
   \And
   Zhenghang Cui\\
   The University of Tokyo, RIKEN\\
   \texttt{cui@ms.k.u-tokyo.ac.jp}\\
   \AND
   Issei Sato \\
   The University of Tokyo, RIKEN\\
   \texttt{sato@k.u-tokyo.ac.jp}\\
   \And
  Masashi Sugiyama \\
  RIKEN, The University of Tokyo\\
   \texttt{sugi@k.u-tokyo.ac.jp} \\
}

\maketitle
\begin{abstract}
In Bayesian inference, the posterior distributions are difficult to obtain analytically for complex models such as neural networks. Variational inference usually uses a parametric distribution for approximation, from which we can easily draw samples.
Recently discrete approximation by particles has attracted attention because of its high expression ability.
An example is Stein variational gradient descent (SVGD), which iteratively optimizes particles. Although SVGD has been shown to be computationally efficient empirically, its theoretical properties
have not been clarified yet and no finite sample bound of the convergence rate is known. Another example is the Stein points (SP) method, which minimizes kernelized Stein discrepancy directly. Although a finite sample bound is assured theoretically, SP is computationally inefficient empirically, especially in high-dimensional problems. In this paper, we propose a novel method named \emph{maximum mean discrepancy minimization by the Frank-Wolfe algorithm (MMD-FW)}, which minimizes MMD in a greedy way by the FW algorithm. Our method is computationally efficient empirically and we show that its finite sample convergence bound is in a linear order in finite dimensions.
\end{abstract}

\section{Introduction}
In Bayesian inference, approximating the posterior distribution $p(x)$ over parameter $x$ is the most important task in general. When we express the prior distribution as $p_0(x)$ and the likelihood as $L(D|x)$ where $D$ denotes observations, the posterior distribution can be obtained up to a constant factor as $p(x)\propto L(D|x)p_0(x)$. 
 In many cases, analytical expression of the normalizing constant cannot be obtained; thus, we need an approximated posterior $\hat{p}(x)$, which can be used, e.g., for calculating the predictive distribution \cite{PRML}:
\begin{align}\label{integral_0}
Z_{L,\hat{p}}=\int L(y|x)\hat{p}(x)dx,
\end{align}
where $L(y|x)$ is the likelihood function of a new observation $y$ given parameter $x$.
Variational inference (VI) is widely used as an approximation method for the posterior distribution \cite{blei2017variational}. VI approximates the target distribution with a parametric distribution, from which we can easily draw samples.
In VI, we often consider the mean field assumption and use parametric models in the exponential family \cite{blei2017variational}. Although these assumptions are used to make optimization computationally tractable, they are often too restrictive to approximate the target distribution well. Therefore, the approximate distribution never converges to the target distribution in general, which means that the approximation of Eq.(\ref{integral_0}) is biased, and no theoretical guarantee is assured.

An alternative way is a discrete approximation of the target distribution by using a set of particles \cite{PRML}, $\hat{p}(x)=\sum_{n=1}^N \delta(x,x_n)/N$, where $N$ is the number of particles and $\delta$ is the Dirac delta function. Particle approximation is free of VI assumptions and thus is more expressive. The Monte Carlo (MC) method is typically used to draw particles randomly and independently \cite{PRML}. However, the drawbacks of MC are that vast computational resources are required to sample from multi-modal and high-dimensional distributions, and it is hard to estimate when to stop the algorithm.

Recently, methods that optimize particles through iterative updates have been explored. A representative example is Stein variational gradient descent (SVGD) \cite{liu2016stein}, which iteratively updates all particles in the direction that is characterized by kernelized Stein discrepancy (KSD). The update is actually implemented by gradient descent and SVGD empirically works well in high-dimensional problems. 
However, theoretical properties of SVGD have not been clarified and no finite sample bound of the convergence rate is known \cite{liu2017stein}.
Another example is the Stein points (SP) method \cite{chen2018stein}, which directly minimizes KSD. Although this method is assured by a finite sample convergence bound, it is not practically feasible in high-dimensional problems due to the curse of dimensionality, because gradient descent is not available and sampling or grid search needs to be used for optimization. Moreover, the number of evaluations of the gradient of the log probability, which usually requires vast computation costs, is four times that of SVGD.

We aim to develop a discrete approximation method that is computationally efficient, works well in high-dimensional problems, and also has a theoretical guarantee for the convergence rate.
In this paper, we propose \emph{maximum mean discrepancy minimization by the Frank-Wolfe  algorithm (MMD-FW)} in a greedy way.
Our convex formulation of discrete approximation enables us to use the Frank-Wolfe (FW) algorithm \cite{jaggi2013revisiting} and to derive a finite sample bound of the convergence rate.

Our contributions in this paper are three-fold:\\
1. We formulate a discrete approximation method in terms of convex optimization of MMD in a reproducing kernel Hilbert space (RKHS), and solve it with the FW algorithm.\\
2. Our algorithm is computationally efficient and empirically works well in high-dimensional problems. It has a guaranteed finite sample bound of the convergence rate. \\
3. We show empirically that our method compares favorably with existing particle optimization methods.

\section{Preliminary}
In this section, we review two existing particle optimization methods, SVGD and SP. After that, we introduce MMD which is our objective function. We assume that $x \in \mathbb{R}^d$ and let $k:X\times X\rightarrow\mathbb{R}$ be the reproducing kernel of an RKHS $\mathcal{H}$ of functions $X\rightarrow\mathbb{R}$ with the inner product $\langle \cdot,\cdot \rangle$ and $\|\cdot\|_\mathcal{H}$ is the assosiated norm, where $X\subseteq \mathbb{R}^d$ denotes the input domain.

\subsection{Stein variational gradient descent (SVGD)}
We first prepare initial particles $\hat{p}_0(x)=\sum_{n=1}^N \delta(x,x_n)/N$ and iteratively update them by a transformation, $T(x)=x+\epsilon \phi(x)$, where $\phi(x)$ is a perturbation direction. When the current empirical distribution is $\hat{p}(x)=\sum_{n=1}^N \delta(x,x_n)/N$, then $\phi(x)$ is chosen to maximally decrease the Kullback-Leibler (KL) divergence between the empirical distribution $\hat{p}$ formed by the particles and the target distribution $p$, $\phi^*(x)=\argmax_{\phi \in \mathcal{F}}\left\{-\frac{d}{d\epsilon}\mathrm{KL}(\hat{p}_{[T]}\|p)|_{\epsilon=0}\right\}$, where $\mathcal{F}$ denotes a set of candidate functions from which we choose map $\phi$, and $\hat{p}_{[T]}(z) = \hat{p}(T^{-1}(z))\cdot|\det(\nabla_zT^{-1}(z))|$. Liu et al. \cite{liu2016stein} proved that this problem is characterized by the Stein operator, $-\frac{d}{d\epsilon}\mathrm{KL}(\hat{p}_{[\epsilon \phi]}\|p)|_{\epsilon=0}=\mathbb{E}_{x\sim \hat{p}}[\mathcal{S}_p\phi(x)]$, where $\mathcal{S}_p$ denotes the Stein operator $\mathcal{S}_p\phi(x)=\nabla\ln p(x)\phi(x)^\top+\nabla\cdot\phi(x)$ which acts on a $d \times 1$ vector function $\phi$ and returns a scalar value function.
Thus, the optimization problem is $\mathcal{S}(\hat{p}\|p):=\max_{\phi \in \mathcal{F}}\left\{\mathbb{E}_{x\sim \hat{p}}[\mathcal{S}_p\phi(x)]\right\}$. 
The problem is how to choose an appropriate $\mathcal{F}$. Liu et al. \cite{liu2016stein} showed that when $\mathcal{F}$ is the unit ball in an RKHS with kernel $k$, the optimal map can be expressed in the following way. Let $\mathcal{H}_0$ be an RKHS defined by a kernel $k(x, x')$ and $\mathcal{H}=\mathcal{H}_0\times\cdots\times\mathcal{H}_0$ be the $d\times 1$ vector-valued RKHS. We define $\mathcal{S}_p\otimes k(x,\cdot):=\nabla\ln p(x)k(x,\cdot)+\nabla_xk(x,\cdot)$, then, the optimal direction is given by $\phi_{\hat{p},p}^*(\cdot)=\mathbb{E}_{x\sim \hat{p}}[\nabla_x\ln p(x)k(x,\cdot)+\nabla_xk(x,\cdot)]$.
We iteratively update particles following the above direction and obtain the empirical approximation with $\{x_n\}_{n=1}^N$. Theoretical analysis has been conducted in terms of the gradient flow and has shown convergence to the true target distribution asymptotically \cite{liu2017stein}. However, no finite sample bound has been established. The norm of the optimal direction, $\mathcal{S}(\hat{p}\|p)=\|\phi_{\hat{p},p}^*\|_{\mathcal{H}}=\sqrt{\mathbb{E}_{x,y\sim \hat{p}}k_s(x,y)}$ where $k_s(x,y)=\nabla_x\nabla_yk(x,y)+\nabla_xk(x,y)\nabla_y\ln p(y)+\nabla_yk(x,y)\nabla_x\ln p(x)+k(x,y)\nabla_x\ln p(x)\nabla_y\ln p(y)$, is called kernelized stein discrepancy (KSD)\cite{liu2016kernelized}.

\subsection{Stein points(SP)}
SP \cite{chen2018stein} minimizes the above KSD directly. When $q$ is given by a discrete approximation $\hat{p}=\sum_{n=1}^N \delta(x, x_n)/N$, KSD can be written as $\mathcal{S}(\hat{p}\|p)=\sqrt{\sum_{i,j=1}^Nk_s(x_i,x_j)}$. 
In SP, to obtain the $n$-th particle, we solve $\displaystyle\argmin_{x} \sum_{i=1}^{n-1}k_s(x_i,x)$ or  $\displaystyle\argmin_{x} \sum_{i=1}^{n-1}k_s(x_i,x)+k_s(x,x)/2$. To solve these problems, the authors of the paper \cite{chen2018stein} proposed using sampling methods or grid search. However, those methods are not applicable to high-dimensional problems due to the curse of dimensionality. Although an alternative way is to use gradient descent, this is computationally difficult in high-dimensional problems since this method needs to calculate the Hessian at each iteration. Moreover, the computation cost for evaluating the derivative of the log probability is 4 times compared to SVGD. An advantage of this method is that a finite sample convergence bound is assured theoretically.

\subsection{Maximum mean discrepancy}
SVGD and SP use KSD as the direction of the update and the objective function. In our proposed method, we use MMD as the objective function. MMD is a kind of the worst-case error between expectations. For a given test function $f$, we express the integral with respect to the true posterior distribution $p$ as $Z_{f,p}=\int f(x) p(x) dx$.
We denote an approximation of $Z_{f,p}$ as $Z_{f,\hat{p}}$, where $p$ is approximated by $\hat{p}$ in the same way as Eq.\,(1).
From here, we consider the weighted empirical distribution $\hat{p}(x)=\sum_{n=1}^N w_n\delta(x, x_n)$, where $w_n$ are the weights of each particle.
Then MMD \cite{gretton2012kernel} is defined as
\begin{align}\label{mmd_def}\small
&\mathrm{MMD}(\{w_i,x_i\}_{i=1}^N)^2 :=\frac{1}{2}\underset{f\in \mathcal{H}:\|f\|_\mathcal{H}=1}{\mathrm{sup}}\left|Z_{f,p}-\sum_{i=1}^Nw_if(x_i)\right|^2\nonumber \\
&=\frac{1}{2}\|\mu_p-\mu_{\hat{p}}\|_\mathcal{H}^2 \nonumber \\
&=\frac{1}{2}\left|\left|\mu_p-\sum_{i=1}^Nw_ik(x_i,\cdot)\right|\right|_\mathcal{H}^2,
\end{align}
where $\mu_p=\int k(\cdot,x)p(x)dx \in \mathcal{H}$ and we introduce the coefficient $\frac{1}{2}$ for convenience in later calculation. We also express $\mathrm{MMD}(\{w_i,x_i\}_{i=1}^N)^2$ as $\mathrm{MMD}(\mu_{\hat{p}})^2$ for simplicity.

\section{Proposed methods}
In this section, we formally develop our MMD-FW. We will introduce the FW algorithm in an RKHS, propose our MMD-FW, and give a finite sample convergence bound of our method.
\subsection{MMD minimization by the FW algorithm (MMD-FW)}
On the basis of the existing methods reviewed in Section 2, we would like to obtain a method to approximate the posterior by discrete particles, which has high computational efficiency and theoretical guarantee. The key idea is to perform discrete approximation by minimizing MMD, instead of KSD since it causes computational problems as we mentioned in the previous section. We minimize $\mathrm{MMD}(\mu_{\hat{p}})^2=\frac{1}{2}\|\mu_p-\mu_{\hat{p}}\|_\mathcal{H}^2$, introduced by Eq. \,(\ref{mmd_def}), in a greedy way. Since this is a convex function in an RKHS, we can use the FW algorithm. 

The FW algorithm, also known as the conditional gradient method \cite{jaggi2013revisiting}, is a convex optimization method. It focuses on the problem $\displaystyle\min_{x\in \mathcal{D}}f(x)$, where $f$ is a convex and continuous differentiable function and $\mathcal{D}$ is the domain of the problem, which is also convex. As the procedure is shown in Alg.~\ref{alg:FW}, the FW algorithm optimizes the objective in a greedy way.
In each step, we solve the linearization of the original $f$ at the current state $\bm{x}_n$ as shown in Line $3$ of Alg.~\ref{alg:FW}. This step is often called the linear minimization oracle (LMO). The new state $\bm{x}_{n+1}$ is obtained by a convex combination of the previous state $\bm{x}_n$ and the solution of the LMO, $\bm{s}$, in Line $6$ of Alg.~\ref{alg:FW}. The common choice of the coefficient of the convex combination is the constant step or the line search. 
\begin{algorithm}[tb]
   \caption{Frank-Wolfe (FW) Algorithm}
   \label{alg:FW}
\begin{algorithmic}[1]
   \STATE Let $\bm{x}_0 \in \mathcal{D}$
   \FOR{$n=0 ,\ldots, N$}
   \STATE Compute $\bm{s}=\mathrm{argmin}_{\bm{s}\in \mathcal{D}}\langle \bm{s},\nabla f(\bm{x}_n)\rangle$
   \STATE Constant step: $\lambda _n=\frac{1}{n+1}$
   \STATE [Instead of constant step, use line search: $\lambda _n=\mathrm{argmin}_{\lambda \in [0,1]} f((1-\lambda)\bm{x}_{n}+\lambda \bm{s})$]
   \STATE Update $\bm{x}_{n+1}=(1-\lambda _n)\bm{x}_n+\lambda _n \bm{s}$
   \ENDFOR
\end{algorithmic}
\end{algorithm}

Bach et al. \cite{bach_herding_equi} and Briol et al. \cite{FWBQ} clarified the equivalence between kernel herding \cite{chen2010super} and the FW algorithm for MMD. In our situation, we minimize MMD on the marginal polytope $\mathcal{M}$ of the RKHS  $\mathcal{H}$, which is defined as the closure of the convex hull of $k(\cdot,x)$. We also assume that all sample points $x_i$ are uniformly bounded in the RKHS, i.e., for any sample point $x_i$, $\exists r>0 : \|k(\cdot,x)\|_{\mathcal{H}}\leq r$.

By applying the FW algorithm, we want to obtain $\mu_{\hat{p}}$ which minimizes the objective $\mathrm{MMD}(\mu_{\hat{p}})^2=\frac{1}{2}\|\mu_p-\mu_{\hat{p}}\|_\mathcal{H}^2$. We express the solution after $n$-steps FW algorithm as $\mu_{\hat{p}}^n=\sum_{i=1}^nw_i^nk(x,x_i)$, where $\{x_i\}_{i=1}^n$ are the particles and $w_i^{n}$ denote the weights of the $i$-th particle at the $n$-th iteration. We can obtain $\{x_i\}_{i=1}^n$ in a greedy way by the FW algorithm. The method of deriving the weights are discussed later.

The LMO calculation in each step is  $\mathrm{argmin}_{g \in \mathcal{M}}\langle \mu_{\hat{p}}^{n}-\mu_p,g \rangle$. It is known that the minimizer of a linear function in a convex set is one of the extreme points of the domain \cite{bach_herding_equi}, and thus we derive
\footnotesize
\begin{align}\label{LMO_herding_extreme}
\argmin_{g \in \mathcal{M}}\langle \mu_{\hat{p}}^{n}-\mu_p,g \rangle &=\argmin_{x} \langle \mu_{\hat{p}}^{n}-\mu_p,k(\cdot,x) \rangle \nonumber \\
&=\argmin_{x}\sum_{i=1}^{n}w_i^{n}k(x_i,x)-\mu_p(x).
\end{align}
\normalsize
We solve this LMO by gradient descent. We initialize each $x$ to prepare $g=k(\cdot,x)$ in LMO by sampling it from the prior distribution. Since the objective of LMO is non-convex, we cannot obtain the global optimum by gradient descent in general. Fortunately, even if we solve LMO approximately, FW enables us to establish a finite sample convergence bound \cite{locatello2017boosting, jaggi2013revisiting,lacoste2013block,lacoste2015global,locatello2017unified}.
In such an approximate LMO, we set the accuracy parameter $\delta \in (0,1]$ and consider the following approximate problem which returns approximate minimizer $\tilde{g}$ of Eq.(\ref{LMO_herding_extreme}) instead of the original strict LMO:
\begin{align}\label{approx-lmo}
\langle \mu_{\hat{p}}^{(n)}-\mu_p,\tilde{g} \rangle&=\delta \mathrm{min}_{g \in \mathcal{M}}\langle \mu_{\hat{p}}^{(n)}-\mu_p,g \rangle \nonumber \\
&=\delta \mathrm{min}_{x}\sum_{i=1}^{n}w_i^{n}k(x_i,x)-\mu_p(x).
\end{align}
This kind of relaxation of the LMO has been widely used and shown to be reliable \cite{locatello2017boosting, jaggi2013revisiting,lacoste2013block,lacoste2015global,locatello2017unified},
which is much easier to solve than the original strict LMO. We call this step Approx-LMO, and we will use gradient descent to solve Approx-LMO. The derivative with respect to $x$ when we use the symmetric kernel $k$ can be written as follows:
\footnotesize
\begin{align}\label{grad}
&\nabla_{x}\langle \mu_{\hat{p}}^{(n)}-\mu_p,g \rangle \nonumber \\
&\approx\frac{1}{n}\sum_{i=1}^{n}w_i^{(n)}\left(\nabla_{x}k(x_i,x)+k(x,x_i)\nabla_{x_i}\ln p(x_i)\right).
\end{align}
\normalsize
The derivation of Eq.(\ref{grad}) is given in Appendix. Using this gradient, we solve Eq.(\ref{approx-lmo}). As repeatedly pointed out \cite{locatello2017boosting, jaggi2013revisiting,lacoste2013block,lacoste2015global,locatello2017unified}, an approximate solution of the LMO is enough to assure the convergence which we describe later. For this reason, we will use gradient descent in our algorithm and also a rough estimate of the gradient is enough in our situation. A similar technique has also been discussed in \cite{locatello2017boosting}.

For the FW algorithm, we have to specify the initial particle $x_1$ and the step size choice of the algorithm. 
We found that the initial particle $x_1$ by the MAP estimation or approximate MAP estimation shows good performance empirically and it is recommended to prepare $x_1$ as a near MAP point (we will discuss other choices later). In this approach, the constant step size and line search are not recommended because those methods uniformly reduce the weights of all the particles which has already been obtained. When we use $x_1$ as a near MAP point, it is located near the highest probability mass regions, and thus we should not reduce its weight uniformly.
Based on this observation, we set the step size in the same way as the fully corrective Frank-Wolfe algorithm \cite{lacoste2015global}, this method calculates all the weights at each iteration, and we can circumvent the above problem. For full correction, we use the Bayesian quadrature (BQ) weight \cite{huszar2012optimally}, $w_i=\sum_m z_m K_{im}^{-1}$, where $K$ is the Gram matrix, $z_m=\int k(x, x_m)p(x)dx$, and we approximately compute the integral with particles. Since we use the empirical approximation, this makes the convergence rate slower. We will analyze the effect of this inexact step size later.

To summarize, our proposed algorithms are given in Alg.~\ref{alg:stoc_LMO} and Alg.~\ref{alg:FW_vanilla}, which greedily increase the number of particles whithin the FW framework to minimize MMD. 
\begin{algorithm}[tb]\small
   \caption{Approx-LMO}
   \label{alg:stoc_LMO}
\begin{algorithmic}[1]
   \STATE {\bfseries Input:} $\mu_{\hat{p}}^{(n)}$
   \STATE {\bfseries Output:} $k(\cdot,x^{L+1})$
   \STATE Prepare $g^0=k(\cdot,x^0)$ where $x$ is initialized by \\randomly or sample from prior
   \FOR{$l=0 \ldots L$}
   \STATE Compute $\nabla_{x}\langle \mu_{\hat{p}}^{(n)}-\mu_p,g^l \rangle$ by Eq.(\ref{grad})
   \STATE Update $x^{(l+1)}\leftarrow x^{(l)}+\epsilon^{(l)}\cdot\nabla_{x}\langle \mu_{\hat{p}}^{(n)}-\mu_p,g^i \rangle$
   \ENDFOR
\end{algorithmic}
\end{algorithm}
\begin{algorithm}[tb]\small
   \caption{{\small MMD minimization by Frank-Wolfe algorithm (MMD-FW)}}
   \label{alg:FW_vanilla}
\begin{algorithmic}[1]
   \STATE {\bfseries Input:} A target density $p(x)$
   \STATE {\bfseries Output:} A set of particles $(\{w_i,x_i\}_{i=1}^N)$
   \STATE Calculate approximate MAP estimation for $\mu_{\hat{p}}^{(1)}$
   \FOR{$n=2 \ldots N$}
   \STATE $k(\cdot, x_n)=$Approx-LMO($\mu_{\hat{p}}^{(n-1)}$)
   \STATE Empirical BQ weight: $\hat{w}_i^n=\sum_{m=1}^n \hat{z}_m K_{im}^{-1}, \hat{z}_m=\sum_{l=1}^n k(x_l, x_m)/n$
   \STATE Update $\mu_{\hat{p}}^{(n+1)}=\sum_{i=1}^{n}\hat{w}_i^{n}k(x, x_i)$
   \ENDFOR
\end{algorithmic}
\end{algorithm}

\subsection{Theoretical guarantee}
First, we describe the condition to limit the deviation of empirically approximated BQ weights from the true ones so that the condition described below are satisfied.
This is necessary for the theoretical guarantee of particle approximation.

\begin{thm}$\mathrm{(Approximate\ step\ size)}$\label{thm:step_size} 
In Alg.~\ref{alg:FW_vanilla} at the $n$-th iteration, let $\beta_i^n$ be the ratio between $\hat{z}_i^n$ and $z_i^n$, i.e.,  $\beta_i^n=\hat{z}_i^n/z_i^n$. When $\mathcal{H}$ is finite dimensional, if 
\begin{align}\label{condition}
\int k(x,y)p(x)p(y)dxdy- \sum_{i,j=1}^n\beta_i^n\beta_j^nz_i^nK^{-1}_{ij}z_j^n>0
\end{align}
holds, then Theorems \ref{thm:bound} and \ref{thm:contraction} hold. When  $\mathcal{H}$ is infinite dimensional, no condition about the deviation of the weight is needed for Theorems 2 and 3 to hold.
\end{thm}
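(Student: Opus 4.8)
The first move is purely algebraic: rewrite condition (\ref{condition}) in terms of mean embeddings. Since $\beta_i^n z_i^n=\hat z_i^n$ by definition, the subtracted double sum is $\sum_{i,j}\hat z_i^n K^{-1}_{ij}\hat z_j^n=(\hat z^n)^\top K^{-1}\hat z^n$. Because the empirical BQ weights are $\hat w^n=K^{-1}\hat z^n$ (Line 6 of Alg.~\ref{alg:FW_vanilla}), this equals $(\hat w^n)^\top K\hat w^n=\|\mu_{\hat p}^n\|_{\mathcal H}^2$, while the first term is $\int k(x,y)p(x)p(y)\,dxdy=\langle\mu_p,\mu_p\rangle=\|\mu_p\|_{\mathcal H}^2$. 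Hence (\ref{condition}) is equivalent to the transparent statement $\|\mu_{\hat p}^n\|_{\mathcal H}<\|\mu_p\|_{\mathcal H}$, i.e. the empirically reweighted iterate never leaves the ball of radius $\|\mu_p\|_{\mathcal H}$ centred at the origin.

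Next I would isolate exactly where this sign enters the convergence analysis. For the exact BQ (fully corrective) weights $w^n=K^{-1}z^n$, the iterate is the orthogonal projection $\mu_{\hat p}^n=\Pi_n\mu_p$ of $\mu_p$ onto the span of the current atoms, so $\langle\mu_{\hat p}^n,\mu_{\hat p}^n-\mu_p\rangle=0$ and, by Pythagoras, the tracked objective collapses to $\mathrm{MMD}(\mu_{\hat p}^n)^2=\tfrac12\bigl(\|\mu_p\|_{\mathcal H}^2-\|\mu_{\hat p}^n\|_{\mathcal H}^2\bigr)$, which is automatically nonnegative. Using $\hat z^n$ in place of $z^n$ perturbs the cross term, and a direct expansion of $\mathrm{MMD}(\mu_{\hat p}^n)^2=\tfrac12\|\mu_p-\mu_{\hat p}^n\|_{\mathcal H}^2$ shows that the discrepancy from this surrogate is exactly the residual $\langle\mu_{\hat p}^n,\mu_{\hat p}^n-\mu_p\rangle$, which vanishes only for exact weights. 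I would then argue that (\ref{condition}) is precisely what keeps the surrogate $\tfrac12(\|\mu_p\|_{\mathcal H}^2-\|\mu_{\hat p}^n\|_{\mathcal H}^2)$ a legitimate, nonnegative proxy for the MMD inside the recursion, so that the step induced by the inexact weights still decreases the objective.

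With this in hand, the finite-dimensional claim follows by feeding the surrogate into the FW contraction behind Theorems \ref{thm:bound} and \ref{thm:contraction}. Because $f(\mu)=\tfrac12\|\mu_p-\mu\|_{\mathcal H}^2$ is $1$-strongly convex and $1$-smooth and, in finite dimensions, $\mu_p$ lies in the relative interior of the marginal polytope $\mathcal M$, the exact fully corrective iteration contracts $f$ geometrically; the empirical weights enter each step only through the residual above, and (\ref{condition}) is the condition that keeps the per-step decrease positive, so the contraction factor stays below one and the linear rate is preserved. The infinite-dimensional case is different in kind: there the relevant statement is the $O(1/n)$ curvature bound, which follows from the finite curvature constant (finite because $\|k(\cdot,x)\|_{\mathcal H}\le r$) together with the approximate-LMO guarantee (\ref{approx-lmo}); that argument never lower-bounds the LMO gap, so the sign issue repaired by (\ref{condition}) never arises, and no condition on the weight deviation is needed.

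The main obstacle I anticipate is the second step: cleanly quantifying the residual $\langle\mu_{\hat p}^n,\mu_{\hat p}^n-\mu_p\rangle$ induced by replacing $z^n$ with $\hat z^n$ and proving that its effect on the contraction recursion is governed \emph{exactly} by the sign in (\ref{condition}). In particular, I must verify that (\ref{condition}) keeps the surrogate nonnegative throughout the entire run rather than at a single iteration, and that strong convexity of $f$ turns this into a uniform lower bound on the per-step progress; reconciling this with the interior-point hypothesis on $\mu_p$ in the finite-dimensional regime is where the care is required.
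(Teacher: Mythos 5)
Your opening algebra is correct, and it lands exactly on the paper's central object: since $\hat z_i^n=\beta_i^n z_i^n$ and $\hat w^n=K^{-1}\hat z^n$, the left-hand side of (\ref{condition}) equals $\|\mu_p\|_{\mathcal H}^2-\|\mu_{\hat p}^n\|_{\mathcal H}^2$, which (up to the factor $\tfrac12$) is your surrogate and is precisely the ``variance computed with the approximated BQ weights'' that the paper's appendix proof tracks and uses as the denominator of its contraction ratio. Your reading of the two regimes is also the paper's: the finite-dimensional argument is multiplicative, so it needs the surrogate to stay positive, while the infinite-dimensional argument is an additive FW induction that never divides by the surrogate, so no sign condition is needed. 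The Pythagoras/projection identity for exact BQ weights is likewise the same optimality fact the paper exploits in order to dominate the line-search contraction rate.

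However, the proposal defers exactly the step that constitutes the proof, and you say so yourself in the final paragraph. Theorems \ref{thm:bound} and \ref{thm:contraction} are quantitative: their bounds contain the explicit factor $\delta_{BQ}$, so proving Theorem \ref{thm:step_size} means exhibiting that factor, not merely arguing that ``the contraction factor stays below one.'' The paper produces it by a computation absent from your plan: it diagonalizes the bordered Gram matrix $K_{(n+1)}$ to show that the exact fully-corrective step decreases the surrogate by an explicit amount $\alpha z_{n+1}^2$ with $\alpha>0$; replacing $z$ by $\hat z=\beta z$ rescales this decrease to $\beta_{n+1}^2\alpha z_{n+1}^2$; dividing by the inexact surrogate --- legitimate exactly when (\ref{condition}) holds --- defines $\delta_{BQ}$ as $\beta_{n+1}^2$ times the ratio of the exact to the inexact surrogate, and comparison with the line-search rate (the interior-ball argument) yields the contraction factor $1-\delta_{BQ}\,\delta^2R^2/(\|g\|+\rho_s\|M\|)^2$. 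Your alternative route through the residual $\langle\mu_{\hat p}^n,\mu_{\hat p}^n-\mu_p\rangle$ is a genuine obstacle rather than a technicality: that residual has no definite sign, and the paper avoids it entirely by running the whole recursion on the surrogate, never on the true MMD (the true MMD re-enters only at the end, via optimality of the exact BQ weights). Finally, your infinite-dimensional claim is slightly too strong: the weight deviation still matters there --- the paper's induction tracks $\beta'=\max_i\beta_i$ and sets $\delta_{BQ}=\beta'^2$, which appears in the infinite-dimensional bounds of Theorems \ref{thm:bound} and \ref{thm:contraction} --- it is only the positivity \emph{condition} that becomes unnecessary, not the bookkeeping of the deviation.
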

In Eq.(\ref{condition}), since $\int k(x,y)p(x)p(y)dxdy$ is fixed and $\int k(x,y)p(x)p(y)dxdy- \sum_{i,j=1}^nz_i^nK^{-1}_{ij}z_j^n>0$, $\beta_i^n$ should be in some {\it moderate} range to satisfy the condition of Eq.(\ref{condition}).
More intuitively, this condition states that if the deviation of the empirical estimate of BQ weights from the true ones is below a certain criterion, then convergence guarantee of the algorithm still holds even if the step size is inexact. The proof is given in Appendix.
We also analyzed the effect of inexact step size in line search; see Appendix for details.

Next, we state the theoretical guarantee of our algorithm. We obtain $\hat{p}(x)=\sum_{n=1}^N w_n\delta(x, x_n)$ by Alg.~\ref{alg:FW_vanilla} which approximates the true posterior $p(x)$. Let  $f$ be the test function, then we can bound the error $|Z_{f,p}-Z_{f,\hat{p}}|=|\int f(x)p(x)dx-\sum_{i=1}^Nw_if(x_i)|$ as follows:
\begin{thm}$\mathrm{(Consistency)}$\label{thm:bound} Under the condition of Theorem \ref{thm:step_size}, the error $|Z_{f,p}-Z_{f,\hat{p}}|$ of Alg.~\ref{alg:FW_vanilla} is bounded at the following rate:
\begin{align}
&|Z_{f,p}-Z_{f,\hat{p}}|\nonumber \\
&\leq\mathrm{MMD}(\{(w_n, x_n)\}_{n=1}^N) \nonumber \\
&\leq \begin{cases}
      \sqrt{2}r e^{-\delta_{BQ}\frac{R^2\delta ^2N}{2r^2}}  & \text{if $\mathcal{H}$ is finite dimensional,} \\
      \sqrt{\frac{(\delta_{BQ}\delta+1)2^2r^2}{\delta(N\delta_{BQ}\delta+2)}} & \text{if $\mathcal{H}$ is infinite dimensional,}\\
    \end{cases}
\end{align}
where $r$ is the diameter of the marginal polytope $\mathcal{M}$, $\delta$ is the accuracy parameter of the LMO, and $R$ is the radius of the smallest ball centered at $\mu_p$ included $\mathcal{M}$ ($R$ is strictly above 0 only when the dimension of $\mathcal{H}$ is finite). $\delta_{BQ}$ denote the error caused by the empirical approximation of the BQ weights; for details, please see Appendix.
\end{thm}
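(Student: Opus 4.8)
The plan is to separate the argument into a deterministic reduction from the integration error to $\mathrm{MMD}$, followed by a Frank-Wolfe convergence analysis of $\mathrm{MMD}$ itself, with the finite- and infinite-dimensional regimes handled by two different lower bounds on the Frank-Wolfe gap. For the first inequality $|Z_{f,p}-Z_{f,\hat p}|\le\mathrm{MMD}$, I would invoke the reproducing property $f(x)=\langle f,k(\cdot,x)\rangle_{\mathcal H}$ to rewrite the error as an inner product against the difference of mean embeddings, $Z_{f,p}-Z_{f,\hat p}=\langle f,\mu_p-\mu_{\hat p}\rangle_{\mathcal H}$, and then apply Cauchy--Schwarz to get $|Z_{f,p}-Z_{f,\hat p}|\le\|f\|_{\mathcal H}\|\mu_p-\mu_{\hat p}\|_{\mathcal H}$; the supremum form of $\mathrm{MMD}$ in Eq.(\ref{mmd_def}) then identifies the right-hand side with $\mathrm{MMD}$ up to its normalizing constant. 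This step is routine.

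The core is to bound $\mathrm{MMD}^2=\tfrac12\|\mu_p-\mu_{\hat p}^{N}\|_{\mathcal H}^2$, which I read as the suboptimality $h_N:=f(\mu_{\hat p}^{N})$ of the convex, quadratic objective $f(\mu)=\tfrac12\|\mu_p-\mu\|_{\mathcal H}^2$ minimized over the marginal polytope $\mathcal M$ by the greedy iterates of Alg.~\ref{alg:FW_vanilla}. I would first establish the generic one-step descent inequality: expanding $f$ along $\mu_{\hat p}^{(n+1)}=(1-\lambda_n)\mu_{\hat p}^{(n)}+\lambda_n\tilde g$ and bounding the quadratic remainder by the squared diameter $r^2$ gives $h_{n+1}\le h_n-\lambda_n\langle\nabla f(\mu_{\hat p}^{(n)}),\mu_{\hat p}^{(n)}-\tilde g\rangle+\tfrac{\lambda_n^2}{2}r^2$. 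Here the Approx-LMO guarantee of Eq.(\ref{approx-lmo}) replaces the exact gap $g_n$ by $\delta g_n$, while the inexactness of the empirical BQ weights contributes the factor $\delta_{BQ}$ delivered by Theorem~\ref{thm:step_size}, so that the linear term becomes at least $\delta_{BQ}\delta\lambda_n g_n$.

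The two regimes then differ only in how I lower-bound $g_n$. In the infinite-dimensional case ($R=0$) I would use the convexity bound $g_n\ge h_n$, substitute a decreasing step size $\lambda_n$, and unroll the resulting recursion by induction, yielding the sublinear rate of the stated form $\sqrt{(\delta_{BQ}\delta+1)4r^2/\big(\delta(N\delta_{BQ}\delta+2)\big)}$; this is the standard inexact Frank-Wolfe bound and is mechanical. The main obstacle is the finite-dimensional case, where the exponential rate demands exploiting that $\mu_p$ sits strictly inside $\mathcal M$: since the ball of radius $R>0$ centered at $\mu_p$ lies in $\mathcal M$, feeding the feasible point $s=\mu_p-R(\mu_{\hat p}^{(n)}-\mu_p)/\|\mu_{\hat p}^{(n)}-\mu_p\|_{\mathcal H}$ into the linear oracle forces $g_n\ge R\|\mu_{\hat p}^{(n)}-\mu_p\|_{\mathcal H}=R\sqrt{2h_n}$. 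Optimizing the one-step inequality over $\lambda_n$ with this bound converts additive descent into the geometric contraction $h_{n+1}\le h_n\big(1-\delta_{BQ}\delta^2R^2/r^2\big)$, and iterating from the diameter bound on $h_0$ produces $h_N\le 2r^2\exp(-\delta_{BQ}R^2\delta^2N/r^2)$, which gives the claimed bound after a square root. I expect the delicate points to be verifying that $R>0$ occurs exactly in finite dimensions (so that the interior-ball argument is available only there, and $R=0$ forces the weaker rate otherwise), and propagating both $\delta$ and $\delta_{BQ}$ consistently so that the positivity condition of Theorem~\ref{thm:step_size} is precisely what keeps the contraction factor $1-\delta_{BQ}\delta^2R^2/r^2$ inside $(0,1)$.
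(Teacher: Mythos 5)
Your proposal is correct and takes essentially the same route as the paper: the first inequality is the same Cauchy--Schwarz argument with $\|f\|_{\mathcal{H}}\leq 1$, the finite-dimensional exponential rate rests on the same interior-ball lower bound on the Frank--Wolfe gap (the paper phrases it via Propositions 3.1/3.2 of Beck--Teboulle's conditional-gradient analysis and then argues that the fully-corrective/BQ step dominates line search, rather than your direct feasible-point construction with step-size optimization, but the underlying inequality is identical), and the infinite-dimensional case is the same standard inexact-FW induction with the convexity bound on the gap. One bookkeeping caveat: placing $\delta_{BQ}$ on the linear term of the descent inequality, as you propose, would yield a contraction factor $1-(\delta_{BQ}\delta)^2R^2/r^2$ rather than the stated $1-\delta_{BQ}\delta^2R^2/r^2$; the paper instead introduces $\delta_{BQ}$ as a multiplicative correction to the squared contraction ratio of the exact-weight analysis (under the positivity condition of Theorem 1), which is where the linear dependence on $\delta_{BQ}$ comes from.
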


A proof of Theorem 2 can be found in Appendix. Moreover, on the basis of the Bayesian quadrature, we can regard $Z_{f,\hat{p}}$ as the posterior distribution  of the Gaussian process \cite{huszar2012optimally} (see Appendix for details) and assure the posterior contraction rate \cite{FWBQ}. 
Intuitively, the posterior contraction rate indicates how fast the probability of the estimated parameter residing outside a specified region (which includes the true parameter) decreases when the size of the region is increased.
\begin{thm}$\mathrm{(Contraction)}$\label{thm:contraction} Let $S\subseteq \mathbb{R}$ be an open neighborhood of the true integral $Z_{f,p}$ and let $\gamma=\mathrm{inf}_{r'\in S^c}|r'-Z_{f,p}|>0$. Then the posterior probability on $S^c=\mathbb{R}\setminus S$ vanishes at the following rate:
\begin{align}
&\mathrm{prob}(S^c)\nonumber \\
&\leq \begin{cases}
      \frac{2r}{\sqrt{\pi}\gamma}e^{-\delta_{BQ}\frac{R^2\delta^2N}{2r^2}-\frac{\gamma^2}{4r^2}e^{\delta_{BQ}\frac{R^2\delta^2N}{r^2}}} \\ \hspace{50pt} \text{if $\mathcal{H}$ is finite dimensional,} \\
      \sqrt{\frac{2}{\pi}} \sqrt{\frac{(\delta_{BQ}\delta+1)2^2r^2}{\delta(N\delta_{BQ}\delta+2)}}e^{-\frac{\gamma^2}{2}\frac{\delta(N\delta_{BQ}\delta+2)}{(\delta_{BQ}+\delta)r^2r^2}} \\ \hspace{50pt} \text{if $\mathcal{H}$ is infinite dimensional,}\\
    \end{cases}
\end{align}
where $r$ is the diameter of the marginal polytope $\mathcal{M}$, $\delta$ is the accuracy parameter, and $R$ is the radius of the smallest ball centered at $\mu_p$ that includes $\mathcal{M}$. $\delta_{BQ}$ denotes the error caused by the empirical approximation of the BQ weights; for details, please see Appendix.
\end{thm}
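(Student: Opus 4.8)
The plan is to exploit the Bayesian quadrature (BQ) reinterpretation of $Z_{f,\hat{p}}$ invoked just before the statement: under a Gaussian-process prior with covariance $k$ on the integrand, conditioning on the $N$ particle locations $\{x_n\}_{n=1}^N$ produced by Alg.~\ref{alg:FW_vanilla} turns the unknown value $Z_{f,p}$ into a Gaussian random variable. The posterior mean is exactly the BQ estimate $Z_{f,\hat{p}}=\sum_{n=1}^N w_n f(x_n)$ with the empirical BQ weights of Alg.~\ref{alg:FW_vanilla}, and the posterior variance equals the squared worst-case integration error, so that the posterior law on the integral is $\mathcal{N}(Z_{f,\hat{p}},\sigma_N^2)$ with $\sigma_N=\mathrm{MMD}(\{(w_n,x_n)\}_{n=1}^N)$ (in the $\tfrac{1}{2}$ normalisation of Eq.~(\ref{mmd_def})). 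First I would establish this identification carefully, checking that with the empirical weights the positivity condition of Theorem~\ref{thm:step_size}, Eq.~(\ref{condition}), guarantees $\sigma_N^2>0$ so that the Gaussian posterior is genuinely well defined.

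With the Gaussian posterior in hand, $\mathrm{prob}(S^c)$ is simply the mass that $\mathcal{N}(Z_{f,\hat{p}},\sigma_N^2)$ assigns to the complement of the neighborhood $S$. Since every $r'\in S^c$ satisfies $|r'-Z_{f,p}|\geq\gamma$, the triangle inequality together with the consistency bound $|Z_{f,\hat{p}}-Z_{f,p}|\leq\mathrm{MMD}$ of Theorem~\ref{thm:bound} lets me reduce the event to a centered Gaussian tail at radius essentially $\gamma$ (the $O(\mathrm{MMD})$ mean offset being of lower order). I would then apply a Mills-ratio tail bound of the form
\begin{align}
\mathrm{prob}(|X-\mu|\geq t)\leq\sqrt{\tfrac{2}{\pi}}\,\frac{\sigma}{t}\,e^{-t^2/(2\sigma^2)},
\end{align}
with $\sigma=\sigma_N$ and $t\asymp\gamma$, giving $\mathrm{prob}(S^c)\leq\sqrt{2/\pi}\,(\sigma_N/\gamma)\,e^{-\gamma^2/(2\sigma_N^2)}$.

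The final step is to substitute $\sigma_N=\mathrm{MMD}$ and insert the two convergence rates of Theorem~\ref{thm:bound}. In the finite-dimensional case $\sigma_N\leq\sqrt{2}\,r\,e^{-\delta_{BQ}R^2\delta^2 N/(2r^2)}$ appears twice: once in the prefactor $\sqrt{2/\pi}\,\sigma_N/\gamma$, which collapses to $\tfrac{2r}{\sqrt{\pi}\gamma}e^{-\delta_{BQ}R^2\delta^2 N/(2r^2)}$, and once through $\sigma_N^2\leq 2r^2 e^{-\delta_{BQ}R^2\delta^2 N/r^2}$ inside $e^{-\gamma^2/(2\sigma_N^2)}$, which produces the nested exponential $e^{-\frac{\gamma^2}{4r^2}e^{\delta_{BQ}R^2\delta^2 N/r^2}}$; combining the two exponents under a single exponential yields the stated finite-dimensional bound. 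In the infinite-dimensional case $\sigma_N$ is only polynomially small, so it is simply carried into the polynomial prefactor and into the exponent of $e^{-\gamma^2/(2\sigma_N^2)}$, giving the second branch.

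I expect the main obstacle to be the first step rather than the tail estimate: namely making the BQ/GP identification rigorous under the empirical, approximate weights, so that the posterior variance genuinely equals $\mathrm{MMD}^2$ (and not merely an upper bound on it), while simultaneously controlling the bias $|Z_{f,\hat{p}}-Z_{f,p}|$ so that the whole probability can be expressed through $\gamma$ and $\sigma_N$ alone. This is exactly the place where the moderate-deviation condition of Theorem~\ref{thm:step_size} is indispensable, since an empirical weight error that is too large would destroy the positivity of the posterior variance and invalidate the Gaussian-tail argument; once that is secured, the remaining algebra of plugging in Theorem~\ref{thm:bound} is routine.
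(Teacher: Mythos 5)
Your proposal follows essentially the same route as the paper's proof: interpret $Z_{f,\hat{p}}$ as the mean of the Gaussian BQ posterior on the integral with standard deviation $\sigma_N=\mathrm{MMD}$, apply a Gaussian (Mills-ratio) tail bound to $S^c$, and substitute the rates of Theorem~\ref{thm:bound} --- the paper simply outsources the tail estimate to Eq.~(31) of \cite{FWBQ} instead of writing it out. The one place you overcomplicate things is the empirical-weight issue: the paper does not need the GP identification to hold exactly under approximate weights, since the exact BQ weights are MMD-optimal, giving $\sigma_N\leq\hat{\sigma}_N$, and the tail bound is monotone increasing in the standard deviation, so one may evaluate it at the empirical-weight MMD, which is precisely what Theorem~\ref{thm:bound} controls.
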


In the proposed method, kernel selection is crucial both numerically and theoretically. In the above convergence proof, linear convergence occurs only under the assumption that there exists a ball with centered at $\mu_p$ whose radius $R$ is positive within the affine hull $\mathcal{M}$. Bach et al. \cite{bach_herding_equi} proved that, for infinite dimensional RKHSs, such as the case of radial basis function (RBF) kernels, such an assumption never holds. Thus, we can only have sub-linear convergence for RBF kernels in general. However, as pointed out by Briol et al. \cite{FWBQ} , even if we use RBF kernels, thanks to finite-precision rounding error in computers, we are treating in simulations are actually essentially finite dimensional. This also holds in our situation, and in experiments, we empirically observed the linear convergence of our algorithm. We will show such a numerical result later.

A theory for the constant step size and line search is shown in Appendix.

\subsection{Discussion}
For specifying the initial particle $x_1$, we can sample it from the prior distribution. The merit of this approach is that we can choose the step size in a computationally less demanding way such as the constant step size and line search (shown in Appendix) since the initial particle is not in a high probability mass region, uniformly decreasing less important weights by constant step size or line search. However, we empirically found in our preliminary experiments that this initialization does not perform well compared to MAP initialization. We suspect that the gradient of Eq.(\ref{grad}) is too inexact when initial particles are sampled from the prior.

Let us analyze the reason why MAP initialization performs well as follows.  Although the gradient is incorrect, the LMO can be solved with error to some extent because the first particle is close to the MAP estimation and the evaluation points of the expectation include, at least, a high density region on $p(x)$. If the LMO is $\delta$-close to the true value, the weights of old incorrect particles will be updated to be small enough to be ignored as the algorithm proceeds. For such a reason, the framework using processed particles works.

The empirical approximation of the BQ weights can also be justified almost in the same way as above. Since the empirical distribution includes, at least, a high density region on $p(x)$, the deviation of the step size (e.g., error due to the empirical approximation) from the exact BQ weight is smaller than the criterion in Theorem \ref{thm:step_size}.

In summary, since we prepare the initial particles at a high probability mass region, the FW algorithm successfully finds the next particle even though the gradient for LMO or weights are inexact. As the algorithm proceeds, the weights of less reliable particles become small and accuracy of the estimation is increased. This is an intuition how the proposed algorithm works.

\section{Related works}
In this section, we discuss the relationship between our method and SVGD, SP and variational boosting. 
\subsection{Relation to SVGD}
SVGD is a method of optimizing a fixed number of particles simultaneously. On the other hand, MMD-FW is a greedy method adding new particles one per step. Both methods can work in high-dimensional problems since they use the information of the gradient of the score function. To approximate a high-dimensional target distribution, we may need many particles, but it is unclear how many particles are needed beforehand. Thus, a greedy approach is preferable for high-dimensional problems. Since in SVGD it is unclear how we can increase the number of particles after we finish the optimization, MMD-FW is more convenient in such a case. However, simultaneous optimization is sometimes computationally more efficient and show better performance compared to a greedy approach(See the experimental results).

Based on this fact, we combine SVGD and MMD-FW by focusing on the fact that the update equations of SVGD and MMD-FW are almost the same except for the weights. 
More specifically, we prepare particles by SVGD first, and then apply MMD-FW by treating particles obtained by SVGD as the initial state of each greedy particle. For details, please see Appendix. This combination enables us to enjoy the efficient simultaneous optimization of SVGD and the greedy property and theoretical guarantee of MMD-FW.

In terms of computation costs, SVGD is $\mathcal{O}(N^2)$ per iteration. In MMD-FW, we only optimize one particle, and thus, its computation cost is $\mathcal{O}(N)$ at each step inside Approx-LMO . Up to the $N$-th particle, the total cost is $\mathcal{O}(N(N+1)/2)$, which is in the same order as SVGD.  However, the number of LMO iterations in MMD-FW is much smaller than that of SVGD since the problem involves only one particle in MMD-FW, which is much easier to solve than SVGD which treats $N$ particles simultaneously. Therefore, we can expect the computation cost of MMD-FW to be cheaper than SVGD.

\subsection{Relation to SP}
The biggest difference between MMD-FW and SP is the objective function. Due to this difference, we use gradient descent to obtain new particles which is still computationally effective in high-dimensional problems. However, SP minimizes KSD, so we cannot use gradient descent since the calculation requires evaluations of the Hessian at each step, which is impossible in high-dimensional problems. To cope with this problem, SP uses sampling or grid search for optimization, which does not work in high-dimensional problems due to the curse of dimensionality. As we will see later, SP does not work well with complex models such as a Bayesian neural net.

Another difference is that our method can reliably use an approximate step size for the weights of particles. We have shown how the deviation
of the approximate weights from the exact ones affects the convergence rate, which justified the use of our method even when the exact step size is unavailable.

Lastly, we use FW to establish a greedy algorithm. This enables us to utilize many useful variants of the FW algorithm such as a distributed variant\cite{wang2016parallel}. For details, see Appendix. 

However, compared with SP, we cannot evaluate the objective function directly, so we resort to other performance measures such as the log likelihood, accuracy, or RMSE in test datasets. For SP, we can directly evaluate KSD at each iteration.
\subsection{Relation to variational boosting}
The proposed method is closely related to variational boosting \cite{locatello2017boosting}. In \cite{locatello2017boosting}, the authors analyzed the variational boosting by using the FW algorithm and showed the convergence to the target distribution.
In variational boosting, a mixture of Gaussian distributions are used as an approximate posterior and its flexibility is increased the number of components in the mixture of Gaussian distributions.
An intuition behind the convergence of variational boosting is that any distribution can be expressed by appropriately combining Gaussian mixture distributions.
That situation is quite similar to MMD-FW, where we increase the number of particles greedily. In MMD-FW, we can regard each particle as being corresponding to each component of variational boosting.
In both methods, the flexibility of the approximate posterior grows as we increase the number of components or particles and this allows us to establish the linear convergence under certain conditions.
The difference is that we consider the solution in an RKHS and minimize MMD to approximate the posterior for MMD-FW, while variational boosting minimizes the KL divergence and treats the posterior in the parameter space.

\subsection{Relation to kernel herding and Bayesian quadrature}
In this paper, we are assuming that $p(x)$ is the posterior distribution. On the other hand, if $p(x)$ is a prior distribution, kernel herding \cite{chen2010super} or Bayesian quadrature \cite{ghahramani2003bayesian}, are useful.
In those methods, $x_n$'s are decided to directly minimize some criterions. For example, the kernel herding method \cite{chen2010super,bach_herding_equi} minimizes MMD in a greedy way.
The biggest difference from our method is that if $p(x)$ is the prior distribution, we can sample many particles from $p(x)$ and thus we can only choose the best particle that decreases the objective function maximally at each iteration. In MMD-FW, on the other hand, we cannot prepare the particles beforehand, and thus, we directly derive particles by gradient descent.

\subsection*{Other related work}
Recently, there has been a tendency to combine an approximation of the posterior with optimization methods, which assures us of some theoretical guarantee, e.g, \cite{locatello2017boosting,dai2016provable}. Our approach also performs discrete approximation by convex optimization in an RKHS. Another related example is sequential kernel herding\cite{lacoste2015sequential}. They applied the FW algorithm to particle filtering in state space models. While their method focused on the state space models, our proposed method is a general approximation method for Bayesian inference.

\section{Numerical experiments}

\begin{table*}[htb]\small
\centering
\caption{Benchmark results on test RMSE and log likelihood by Bayesian neural net regression model}
\vspace{0.05in}
\label{tab:bench_reg}
\resizebox{1.\textwidth}{!}{
\begin{tabular}{c|c|cc|cc|c}
\toprule
\fontsize{9}{7.2}\selectfont \bf{\multirow{2}{*}{Dataset}} & 
\fontsize{9}{7.2}\selectfont \bf{Posterior} & 
\multicolumn{2}{|c}{\fontsize{9}{7.2}\selectfont Avg. Test RMSE} & \multicolumn{2}{|c}{\fontsize{9}{7.2}\selectfont Avg. Test log likelihood} & \multicolumn{1}{|c}{\fontsize{9}{7.2}\selectfont Fixed Wall clock} \\
&\fontsize{9}{7.2}\selectfont \bf{dimension} & \fontsize{9}{7.2}\selectfont SVGD & \fontsize{9}{7.2}\selectfont Ours & \fontsize{9}{7.2}\selectfont SVGD & \fontsize{9}{7.2}\selectfont Ours &\multicolumn{1}{|c}{\fontsize{9}{7.2}\selectfont Time (Secs)} \\
\hline
\fontsize{8}{7.2}\selectfont Naval (N=11,934, D=17) &
\fontsize{8}{7.2}\selectfont 953 &
\fontsize{8}{7.2}\selectfont 4.9e-4$\pm$7.5e-5 &
\fontsize{8}{7.2}\selectfont \bf{4.2e-4$\pm$5.3e-5} &
\fontsize{8}{7.2}\selectfont $6.08\pm0.11$ &
\fontsize{8}{7.2}\selectfont \bf{6.00$\pm$0.12} &
\fontsize{8}{7.2}\selectfont 150 \\
\fontsize{8}{7.2}\selectfont Protein (N=45730, D=9) &
\fontsize{8}{7.2}\selectfont 553 &
\fontsize{8}{7.2}\selectfont $4.51\pm0.057$ &
\fontsize{8}{7.2}\selectfont \bf{4.43$\pm$0.035} &
\fontsize{8}{7.2}\selectfont $-2.93\pm0.013$ &
\fontsize{8}{7.2}\selectfont \bf{-2.91$\pm$0.0073} &
\fontsize{8}{7.2}\selectfont 40 \\
\fontsize{8}{7.2}\selectfont Year (N=515344, D=91) &
\fontsize{8}{7.2}\selectfont 9203 &
\fontsize{8}{7.2}\selectfont $9.54\pm0.08$ &
\fontsize{8}{7.2}\selectfont \bf{9.50$\pm$0.09} &
\fontsize{8}{7.2}\selectfont \bf{-3.65$\pm$0.005} &
\fontsize{8}{7.2}\selectfont \bf{-3.65$\pm$0.011} &
\fontsize{8}{7.2}\selectfont 300 \\
\bottomrule
\end{tabular}}
\end{table*}

We experimentally confirmed the usefulness of the proposed method compared with SVGD and SP in both toy datasets and real world datasets. 
Other than comparing the performance measured in terms of the accuracy or RMSE of the proposed method with SVGD and SP, we also have the following two purposes for the experiments. The first purpose of the experiments is to confirm that our algorithm is faster than SVGD in terms of wall clock time. This is because, as mentioned before in the section of relation to SVGD, it solves simple problems compared with SVGD, thus we need less number of iterations to optimize each particle than that of SVGD. The second purpose is to confirm the convergence behavior.

In all experiments, we used the radial basis function kernel, $k(x,x')=\mathrm{exp}(-\frac{1}{2h^2}|x-x'|^2)$ for proposed method and SVGD, where $h$ is the kernel bandwidth. The choice of $h$ is critical to the success of the algorithms. There are three methods to specify the bandwidth, fixed bandwidth, median trick, gradient descent.
We experimented on the above three choices and found that a fixed kernel bandwidth and the median trick are stable in general, and thus, we only show the results obtained by the median trick in this section. The results of other methods are shown with other detailed experimental settings in Appendix.
For the kernel of SP, we used the three kernels proposed by the original paper \cite{chen2018stein}: IMQ kernel $k_1(x, x') = (\alpha + ||x-x'||_2^2)^\beta$, inverse log kernel $k_2(x, x') = (\alpha + \log(1+||x-x'||_2^2))^{-1}$, and IMQ score kernel $k_3(x, x') = (\alpha + ||\nabla \log p(x) - \nabla \log p(x')||_2^2)^\beta$, where $\alpha = 1.0$ and $\beta=0.5$ are used as suggested in the original paper.

For the approx-LMO, we used Adam \cite{kingma2014adam} for all experiments. Due to space limitations, the toy data results are shown in Appendix. About the benchmark experiment, we split dataset 90$\%$ for training and 10$\%$ for testing.
\subsection*{Bayesian logistic regression}
We considered Bayesian logistic regression for binary classification. The settings were the same as in those \cite{liu2016stein}, where we put a Gaussian prior $p_0(w|\alpha)=N(0,\alpha^{-1})$ for regression weights $w$ and $p_0(\alpha)=\mathrm{Gamma}(1, 0.01)$. As the dataset, we used Covertype \cite{Dua:2017}, with 581,012 data points and 54 features. The posterior dimension is $56$. The results are shown in Fig.~\ref{classification_picture}. In Fig.~\ref{fig:cov_time_accuracy}, the vertical axis is the test accuracy and the horizontal axis is wall clock time. 
\begin{figure}[tb!]
 \centering
 \hspace{-0.in}
 \subfigure[Comparison of MMD-FW and SVGD in terms of wall clock time with the test accuracy]{
 \includegraphics[width=1.\linewidth]{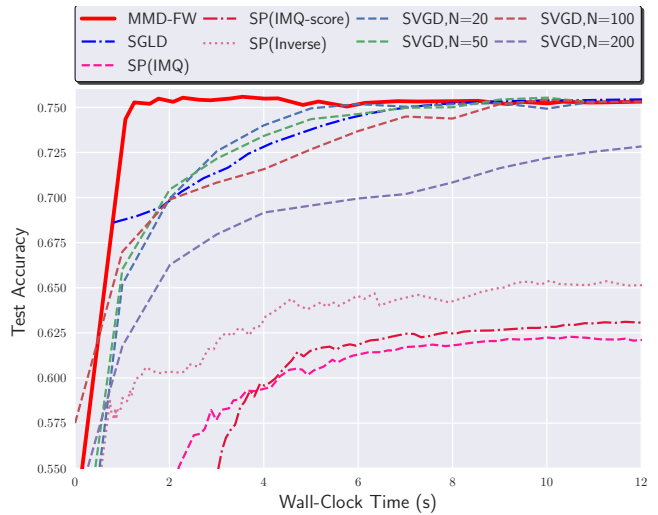}
 \label{fig:cov_time_accuracy}}
 \\\vspace{-1.mm}
 \subfigure[Convergence behavior in terms of number of the particles with $\mathrm{MMD}^2$]{
 \includegraphics[width=0.9\linewidth]{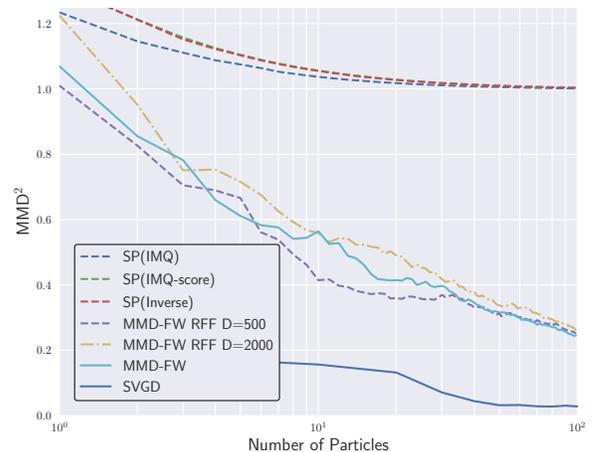}
 \label{fig:cov_mmd_fin}}
 \vspace{-2.mm}
 \caption{Comparison for the logistic regression model}
\label{classification_picture}
\end{figure}
As we discussed in Section 4.1, our algorithm was faster than SVGD in terms of wall clock time. SP did not work well. We also compared MMD-FW with stochastic gradient Langevin dynamics(SGLD) \cite{welling2011bayesian} and faster than SGLD. In Appendix, we also studied the situation where the first particle does not correspond to MAP estimation, and instead random initialization.

Fig.~\ref{fig:cov_mmd_fin} shows the convergence behavior, where the vertical axis is $\mathrm{MMD}^2$ and the horizontal one is the number of particles in the log scale. To calculate MMD, we generated ``true samples'' by Hamiltonian Monte Carlo \cite{neal2011mcmc}. 
Since RBF kernel is an infinite dimensional kernel, to further check the convergence behavior under the finite dimensional kernel, we approximated the RBF kernel by random Fourier expansion (RFF) (See Appendix for the details of the RFF). In Fig.~\ref{fig:cov_mmd_fin},  $D$ is the number of frequency of RFF. Also, we still compared with SP on MMD although this comparison is a little unfair since the objective of SP is kernelized Stein discrepancy.
As discussed in the previous section, although the convergence is sub-linear order theoretically since we used RBF kernel which is an infinite dimensional kernel, we observed the linear convergence thanks to the rounding error in the computer. The convergence speed of RBF kernel approximated by RFF showed the linear, which is the expected behavior since the approximated kernel by RFF is the finite dimensional kernel. 

SVGD had a smaller MMD than the proposed method, which is due to the fact that SVGD simultaneously optimizes all particles and tries to put particles in the best position in correspondence with the global optima. In contrast, MMD-FW only increased the particles greedily, and this resulted in local optima. Hence, the better performance of SVGD compared with MMD-FW with the same number of particles in terms of MMD is a natural result.

\begin{figure}[ht!]
 \centering
 \hspace{-0.in}
 \subfigure[Comparison of MMD-FW and SVGD in terms of wall clock time with test RMSE]{
 \includegraphics[width=.9\linewidth]{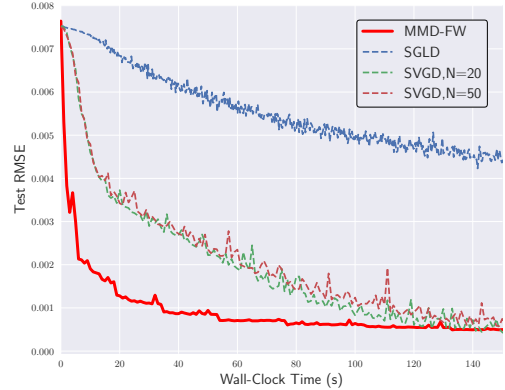}
 \label{fig:bnn_rmse_naval}}
 \\
 \vspace{-3.mm}
 \subfigure[Convergence behavior in terms of the number of the particles with $\mathrm{MMD}^2$]{
 \includegraphics[width=.9\linewidth]{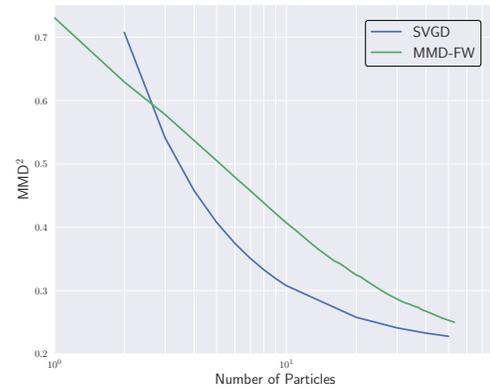}
 \label{fig:bnn_mmd_naval}}
 \vspace{-2.mm}
 \caption{Comparison for the Bayesian neural net regression model}
 \label{pic:bnn_naval_fin}
\end{figure}

\subsection*{Bayesian neural net regression}

We experimented with Bayesian neural networks for regression. The settings were the same as those in \cite{liu2016stein}. We used a neural network with one hidden layer, 50 units, and the ReLU activation function. As the dataset, we used the Naval data from the UCI \cite{Dua:2017}, which contains 11,934 data points and 17 features. The posterior dimension was 953. The results are shown in Fig.\,\ref{pic:bnn_naval_fin}. In Fig.~\ref{fig:bnn_rmse_naval}, the vertical axis is the test RMSE, and the horizontal axis is wall clock time. In Fig~\ref{fig:bnn_mmd_naval}, the vertical axis is the $\mathrm{MMD}^2$, and the horizontal axis is the number of particles. Since it is difficult to prepare MAP initialization for Bayesian neural networks at first in MMD-FW, we consider non-MAP initialization, and we gradually reduced earlier weight sizes by adjusting the step size. The posterior dimension was much higher than that of the logistic regression, but our algorithm was faster than SVGD in terms of wall clock time and linearly converged, which is consistent with the theory.

Results for other datasets are shown in Table~\ref{tab:bench_reg}, where we fixed the wall clock time and applied MMD-FW and SVGD within that period. SP did not work well because of the high dimensionality so its results are not shown. We experimented 5 random trials for changing the splitting of the dataset. For the Protein data, we used the same model as the Naval data, and for the Year data, we used the same model as others except that the number of hidden units is 100.
From these benchmark dataset experiments, we confirmed that our method shows almost the same performance as SVGD in many cases but shows faster optimization. Moreover, it shows linear convergence. 
\section{Conclusions}
In this work, we proposed MMD-FW, a novel approximation method for posterior distributions. Our method enjoys empirically good performance and theoretical guarantee simultaneously. In practice, our algorithm is faster than existing methods in terms of wall clock time and works well even in high-dimensional problems.
As future work, we will further apply this framework other than the posterior approximation and further analyze the effect of rounding error on convergence rate.

\section{Acknowledgment}
FF was supported by JST AIP-PRISM Grant Number JPMJCR18ZH Japan and
Google Fellowship,
IS was supported by KAKENHI 17H04693,
and
MS was supported by KAKENHI 17H00757.

\bibliography{stein.bib}

\appendix

\section{Proof of Eq.~(5) in the main paper}\label{sup:gradient}
For the symmetric kernel $k$, the relation $\nabla_x k(x-y)=\nabla_y k(y-x)$ holds, and we apply the partial integral method to the first term, then
\begin{align}
&\nabla_{x_n}\int k(x,x_n)p(x)dx \nonumber \\
&=\int \nabla_{x_n}k(x,x_n)p(x)dx \nonumber \\
&=\int \{\nabla_{x}k(x_n,x)\}p(x)dx\nonumber \\
&=k(x_n,x)p(x)\big|_{-\infty}^\infty-\int k(x_n,x)\nabla_{x}p(x)dx\nonumber \\
&=-\mathbb{E}_{p(x)} \left[k(x,x_n)\nabla_{x}\ln p(x)\right]
\end{align}
To approximate the integral, we usually use importance sampling when the analytic form of the integral is not available. 
Here, MMD-FW is the greedy approach, therefore we have particles which had already been processed. 
Thus we approximate the expectation by the empirical distributions which are composed of processed particles. 
The FW framework does not need the exact solution of LMO.
We just approximately solve it.
At the early stage of the algorithm, 
there are not so many particles and there might exist the unreliable particles, 
hence the expectation is not so reliable. 
Fortunately, they are enough to solve the LMO approximately. 
And as the algorithm proceeds,
the weights of those early unreliable particles are gradually reduced by the step size.
Thus, we solve the LMO by the gradient which is written by
\begin{align}
\nabla_{x_n}\int k(x,x_n)p(x)dx\simeq -\frac{1}{N}\sum_{m=1}^N k(x_m,x_n)\nabla_{x_m}\ln p(x_m).
\end{align}
Thus, we can obtain the update equation,
\begin{align}
&\nabla_{x}\langle \mu_{\hat{p}}^{(n)}-\mu_p,g \rangle \nonumber \\
&=\frac{1}{n}\sum_{l=1}^{n}w_l^{(n)}\nabla_{x}k(x_l,x)+\frac{1}{n}\sum_{l=1}^{n} w_l^{(n)}k(x,x_l)\nabla_{x_l}\ln p(x_l).
\end{align}
In the above expression, the first term corresponds to the regularization term, which try to scatter the particles. When we use the RBF kernel, the first term is proportional to the inverse of the bandwidth. Thus, it is easily understand that small bandwidth makes regularization term large, and vise versa. The second term try to move particles in high mass regions.

The justification of using the existing particle for the integral approximation is based on no need to strictly solve the LMO. Although the gradient is incorrect, the LMO can be solved with error to some extent because the first particle is close to MAP and the evaluation points of the expectation include, at least, one region with high density on p(x). If the LMO is $\delta$-close to the true value, the weights of old incorrect particles will be updated to be small enough to ignore as the algorithm proceeds. Therefore the framework using processed particles works. The key trick for this is that the initial particle is close to MAP. This kind of inexact gradient descent is widely used in FW algorithm.

\section{Step size selection}\label{sup:step_size}
\begin{algorithm}[t]
   \caption{{\small MMD minimization by Frank-Wolfe algorithm (MMD-FW)}}
   \label{alg:FW_three_steps}
\begin{algorithmic}[1]
\STATE {\bfseries Input:} A target density $p(x)$
   \STATE {\bfseries Output:} A set of particles $(\{w_i,x_i\}_{i=1}^N)$
   \STATE Calculate approximate MAP estimation for $\mu_{\hat{p}}^{(1)}$
   \FOR{$n=2 \ldots N$}
   \STATE $k(\cdot, x_n)=$Approx-LMO($\mu_{\hat{p}}^{(n-1)}$)
   \IF{Constant step}
   \STATE $\lambda _n=\frac{1}{\gamma+1}$
   \STATE Update $\mu_{\hat{p}}^{(n+1)}=(1-\lambda _l)\mu_{\hat{p}}^{(n)}+\lambda _n \bar{g}_n$
   \ELSIF{Line search: }
   \STATE $\lambda _n=\mathrm{argmin}_{\lambda \in [0,1]} J((1-\lambda)\mu_{\hat{p}}^{(n)}+\lambda \bar{g}_n)$
   \STATE Update $\mu_{\hat{p}}^{(n+1)}=(1-\lambda _l)\mu_{\hat{p}}^{(n)}+\lambda _n \bar{g}_n$
   \ELSE
   \STATE Empirical BQ weight: $\hat{w}_i^n=\sum_{m=1}^n \hat{z}_m K_{im}^{-1}, \hat{z}_m=\sum_{l=1}^n k(x_l, x_m)/n$
   \STATE Update $\mu_{\hat{p}}^{(n+1)}=\sum_{i=1}^{n}\hat{w}_i^{n}k(x, x_i)$
   \ENDIF
   \ENDFOR
\end{algorithmic}
\end{algorithm}

An appropriate step size is crucial for the success of the FW.
Generally, there are three choices as shown in Alg~\ref{alg:FW_three_steps}.
Common choices are the constant step size and Line search.
The step size of line search can be written as
\begin{align}
\lambda _n=\frac{\langle \mu_{\hat{p}}^{(n)}-\mu_p,\mu_{\hat{p}}^{(n)}-\bar{g}_{n-1} \rangle}{\|g_{i-1}-\bar{g}_{n}\|^2_\mathcal{H}}.
\end{align}
The point is that they constantly reduce the weights of earlier particles.
Thus, those weights are preferable when the early particles are not reliable.
In our algorithm, 
those weights are not preferable since we use the near MAP initialization.

Another choice of the step size is the Fully correction.
As the name means, 
this method updates the weights of all particles which have already been obtained at the previous steps.
In our algorithm,
we used the Bayesian Quadrature weights since they are the optimal weights for the MMD.
For more details of the Bayesian Quadrature,
please see the Appendix of quadrature rules.
The weights of BQ can be calculated by
\begin{align}
w_\mathrm{BQ}^{(n)}=\sum_m z_j^\top K_{nm}^{-1},
\end{align}
where $K$ is the gram matrix, $z_n=\int k(x,x_n)p(x)dx$ and we approximate the integral by particles. 
Fully correction is preferable when the early particles are important. So, this choice is preferable in our algorithm

The step size choice affects the convergence rate directly.
In the main text, we only showed the results of Fully correction. Actually, line search error bound is the same as the BQ step size. Thus here we only show the error bound of constant step and $\mathcal{H}$ is finite dimensional.
\begin{thm}$\mathrm{(Consistency)}$ Under the condition of Theorem 1 in the main paper, the error $|Z_{f,p}-Z_{f,\hat{p}}|$ of Alg.~1 is bounded at the following rate:
\begin{align}
&|Z_{f,p}-Z_{f,\hat{p}}|
\leq\mathrm{MMD}(\{(w_n, x_n)\}_{n=1}^N)
\leq \frac{2r^2}{R\delta N}
\end{align}
where $r$ is the diameter of the marginal polytope $\mathcal{M}$, $\delta$ is the accuracy parameter, and $R$ is the radius of the smallest ball of center $\mu_p$ included $\mathcal{M}$.
\end{thm}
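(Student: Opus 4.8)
The plan is to prove the two inequalities separately. The first, $|Z_{f,p}-Z_{f,\hat{p}}|\le \mathrm{MMD}$, is immediate from the RKHS structure: writing $Z_{f,p}-Z_{f,\hat{p}}=\langle f,\mu_p-\mu_{\hat{p}}\rangle$ by the reproducing property, Cauchy--Schwarz gives $|Z_{f,p}-Z_{f,\hat{p}}|\le\|f\|_{\mathcal{H}}\,\|\mu_p-\mu_{\hat{p}}\|_{\mathcal{H}}$, and the right-hand side is controlled by the $\mathrm{MMD}$ of Eq.\,(\ref{mmd_def}). So the real content is the second inequality, a Frank--Wolfe rate for the objective $J(\mu)=\tfrac12\|\mu_p-\mu\|_{\mathcal{H}}^2$ over $\mathcal{M}$. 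First I would record that since $\mu_p=\int k(\cdot,x)p(x)\,dx\in\mathcal{M}$, the optimum is $J^{*}=0$, so $J(\mu_{\hat{p}}^{(n)})=\mathrm{MMD}_n^2$ throughout, and the task reduces to bounding $J(\mu_{\hat{p}}^{(N)})$.

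Writing $\mu_n:=\mu_{\hat{p}}^{(n)}$, $g_n:=\mu_n-\mu_p=\nabla J(\mu_n)$, and $\tilde{g}_n$ for the Approx-LMO output, I would next establish a one-step descent inequality. Expanding the square after the constant-step update $\mu_{n+1}=(1-\lambda_n)\mu_n+\lambda_n\tilde{g}_n$ with $\lambda_n=1/(n+1)$ gives $J(\mu_{n+1})=J_n-\lambda_n\langle g_n,\mu_n-\tilde{g}_n\rangle+\tfrac{\lambda_n^2}{2}\|\mu_n-\tilde{g}_n\|_{\mathcal{H}}^2$, and the last term is at most $\tfrac{\lambda_n^2 r^2}{2}$ since $\mathcal{M}$ has diameter $r$ (this is where the curvature constant of the quadratic, bounded by $r^2$, enters). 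The \emph{crucial} step is to lower-bound the linearized decrease $\langle g_n,\mu_n-\tilde{g}_n\rangle$ using both the oracle accuracy $\delta$ and the geometry of $\mathcal{M}$. The accuracy-$\delta$ guarantee of Eq.\,(\ref{approx-lmo}) lets me pass from $\tilde{g}_n$ to the exact LMO minimizer up to a factor $\delta$; and the ball assumption $B(\mu_p,R)\subseteq\mathcal{M}$ lets me use the admissible test point $\mu_p-R\,g_n/\|g_n\|_{\mathcal{H}}\in\mathcal{M}$, for which $\langle g_n,s\rangle\le\langle g_n,\mu_p\rangle-R\|g_n\|_{\mathcal{H}}$. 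Combining these yields $\langle g_n,\mu_n-\tilde{g}_n\rangle\ge\delta\big(\|g_n\|_{\mathcal{H}}^2+R\|g_n\|_{\mathcal{H}}\big)\ge \delta R\sqrt{2J_n}$, and hence the recursion
\begin{align}
J_{n+1}\le J_n-\lambda_n\,\delta R\sqrt{2J_n}+\tfrac{\lambda_n^2 r^2}{2}.
\end{align}

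Finally I would solve this recursion by induction to obtain $\mathrm{MMD}_n=\sqrt{J_n}\le \frac{2r^2}{R\delta n}$. Setting $t=\sqrt{J_n}$, the right-hand side above is an upward parabola in $t$, so over the inductive range $t\in[0,c/n]$ with $c=2r^2/(R\delta)$ its maximum is attained at an endpoint; checking $t=0$ requires $c\ge r/\sqrt{2}$, and checking $t=c/n$ requires, to leading order, $c\ge r^2/(2\sqrt{2}\,\delta R)$, both of which hold because a ball of radius $R$ inside a set of diameter $r$ forces $2R\le r$, hence $\delta R\le r/2$. The base case $\sqrt{J_1}=\mathrm{MMD}_1\le r/\sqrt{2}\le c$ follows again from the diameter bound.

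The main obstacle I expect is exactly this last induction: unlike the textbook $O(1/\sqrt{N})$ Frank--Wolfe bound (which uses only convexity, i.e. $\langle g_n,\mu_n-s_n\rangle\ge\|g_n\|_{\mathcal{H}}^2$), the improved $O(1/N)$ rate for $\mathrm{MMD}$ hinges on the $R\|g_n\|_{\mathcal{H}}$ term produced by the interior-ball geometry, and on verifying that the non-monotone, parabola-shaped one-step map cannot push the iterate above the $c/n$ envelope at either endpoint. A secondary delicate point is reconciling the multiplicative Approx-LMO convention of Eq.\,(\ref{approx-lmo}) with the duality-gap form used in the descent step, and recalling that $R>0$ — and hence the entire linear-in-$1/N$ argument — is available only in the finite-dimensional setting assumed by the theorem.
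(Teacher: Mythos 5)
Your overall strategy is the same as the paper's. The paper proves this statement (in the appendix part ``Next, we consider the constant step case'' of the proof of Theorem~2) with exactly your ingredients: Cauchy--Schwarz and $\|f\|_{\mathcal H}\le 1$ for the first inequality; then the constant-step expansion of the quadratic objective, a diameter bound for the second-order term, the interior-ball inequality $\langle v_k,\tilde w_k\rangle\le-\delta R\|v_k\|$ (borrowed from \cite{beck2004conditional}, which is your test point $\mu_p-R\,g_n/\|g_n\|_{\mathcal H}\in\mathcal{M}$ plus the $\delta$-oracle), and an induction giving the $1/N$ envelope. The paper's induction is run on the rescaled residual $v_k'=k\,v_k$, showing $\|v_k'\|\le C$ for a fixed constant $C$, which is your $c/n$ envelope in different bookkeeping.

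However, your execution has a genuine gap at the point where you form the recursion. You correctly derive $\langle g_n,\mu_n-\tilde g_n\rangle\ge\delta\bigl(\|g_n\|_{\mathcal H}^2+R\|g_n\|_{\mathcal H}\bigr)$ but then discard the quadratic term $\delta\|g_n\|_{\mathcal H}^2=2\delta J_n$, keeping only $\delta R\sqrt{2J_n}$. That term is not slack: it supplies the contraction factor in front of $J_n$ (in the paper's bookkeeping it is preserved automatically, since the exact identity $(n+1)v_{n+1}=n v_n+\tilde w_n$ carries the coefficient $n^2/(n+1)^2$ on $\|v_n\|^2$). Your weakened recursion $J_{n+1}\le J_n-\lambda_n\delta R\sqrt{2J_n}+\tfrac{\lambda_n^2r^2}{2}$ cannot deliver the theorem: while $J_n$ is of constant order, the guaranteed decrease per step is only $O(\delta R r/n)$, so the recursion is consistent with $J_n$ remaining of order $r^2$ for $e^{\Omega(r/(\delta R))}$ iterations, which no $O(1/N)$ envelope survives once $r/(\delta R)$ is large. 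Moreover, the induction you sketch breaks even in the most favorable geometry, precisely in the transition region $n\approx\sqrt2\,c/r$ that your phrase ``to leading order'' skips over. Take $r=1$, $\delta=1$, $R=\tfrac12$, so $c=4$: the hypothesis $\sqrt{J_6}\le\tfrac46$ gives, at the endpoint $t=\tfrac23$ of the parabola,
\begin{align}
J_7\;\le\;\tfrac{4}{9}-\tfrac{\sqrt{2}}{21}+\tfrac{1}{98}\;\approx\;0.387\;>\;\bigl(\tfrac{4}{7}\bigr)^2\;\approx\;0.327,
\end{align}
so the inductive step fails at $n=7$. The fix is small and brings you back to the paper's argument: keep the full duality-gap bound to obtain $J_{n+1}\le(1-2\delta\lambda_n)J_n-\lambda_n\delta R\sqrt{2J_n}+\tfrac{\lambda_n^2r^2}{2}$ (with this, the same step gives $J_7\le\tfrac57\cdot\tfrac49-\tfrac{\sqrt2}{21}+\tfrac1{98}\approx0.260$, which passes), or equivalently run the induction on the rescaled quantity $n\sqrt{2J_n}$, showing it stays below a fixed constant, as the paper does.
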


Also, when we reweight the obtained particles by using Bayesian quadrature weights and interpret it as the posterior, the following contraction property holds (line search result is the same as the BQ, and infinite RKHS result of constant step is the same as BQ, we only show the result of constant step in finite RKHS).
\begin{thm}$\mathrm{(Contraction)}$ Let $S\subseteq \mathbb{R}$ be an open neighborhood of the true integral $Z_{f,p}$ and let $\gamma=\mathrm{inf}_{r\in S^c}|r-Z_{f,p}|>0$. Then the posterior probability of mass on $S^c=\mathbb{R}\setminus S$ by Alg~3 in the main paper vanishes at the rate:
\begin{align}
&\mathrm{prob}(S^c)\leq\frac{2\sqrt{2}r^2}{\sqrt{\pi}R\delta\gamma N}e^{-\frac{\gamma^2R^2\delta^2N^2}{8r^4}} \nonumber \\
\end{align}
where $d$ is the diameter of the marginal polytope $\mathcal{M}$, $\delta$ is the accuracy parameter, $R$ is the radius of the smallest ball of center $\mu_p$ included $\mathcal{M}$.
\end{thm}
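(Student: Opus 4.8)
The plan is to leverage the Bayesian quadrature (BQ) interpretation of the reweighted estimate, under which the contraction statement reduces to a one-dimensional Gaussian tail bound driven by the consistency rate (this mirrors the posterior contraction analysis of \cite{FWBQ}). First I would recall that when the particles $\{x_i\}_{i=1}^N$ produced by Alg.~3 are reweighted with the BQ weights, $Z_{f,\hat{p}}=\sum_i w_i f(x_i)$ is the posterior mean of the Gaussian process model placed on the integrand, and the induced posterior over the true integral is Gaussian, $Z_{f,p}\mid\{x_i\}\sim\mathcal{N}(Z_{f,\hat{p}},\sigma_N^2)$. The key identity I would establish is that this posterior variance coincides with the squared MMD: expanding $\|\mu_p-\mu_{\hat{p}}\|_\mathcal{H}^2=\langle\mu_p,\mu_p\rangle-2\langle\mu_p,\mu_{\hat{p}}\rangle+\langle\mu_{\hat{p}},\mu_{\hat{p}}\rangle$ reproduces exactly the BQ variance $\iint k(x,y)p(x)p(y)\,dx\,dy-2\sum_i w_i z_i+\sum_{i,j}w_iw_j k(x_i,x_j)$ with $z_i=\int k(x,x_i)p(x)\,dx$, so the RKHS distance and the GP posterior variance are the same object up to the normalization chosen in the definition of $\mathrm{MMD}$. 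The condition of Theorem~1 is precisely what guarantees that, even with the empirically approximated BQ weights $\hat{w}_i=\sum_m \beta_m z_m K^{-1}_{im}$, this variance stays strictly positive, so the Gaussian interpretation remains valid.

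Next I would reduce the mass on $S^c$ to a Gaussian tail. Since $S$ is an open neighborhood of $Z_{f,p}$ and $\gamma=\inf_{r'\in S^c}|r'-Z_{f,p}|$, every point of $S^c$ lies at distance at least $\gamma$ from $Z_{f,p}$; combined with the consistency bound $|Z_{f,p}-Z_{f,\hat{p}}|\leq\mathrm{MMD}$, every point of $S^c$ lies at distance at least $\gamma-\mathrm{MMD}$ from the posterior mean $Z_{f,\hat{p}}$. Therefore $\mathrm{prob}(S^c)\leq\mathrm{prob}_{Z\sim\mathcal{N}(Z_{f,\hat{p}},\sigma_N^2)}(|Z-Z_{f,\hat{p}}|\geq\gamma-\mathrm{MMD})$, and I would apply the standard Gaussian tail inequality $\mathrm{prob}(|Z-m|\geq t)\leq\sqrt{2/\pi}\,(\sigma/t)\,e^{-t^2/(2\sigma^2)}$. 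Because $\gamma$ is fixed while $\mathrm{MMD}\to 0$, the gap $\gamma-\mathrm{MMD}$ can be replaced by $\gamma$ in the leading-order rate, which is what produces the clean $\gamma$ appearing in the statement.

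Finally I would substitute the constant-step, finite-dimensional consistency rate $\mathrm{MMD}\leq 2r^2/(R\delta N)$ (the companion theorem above) for $\sigma_N$. Since $g(\sigma)=\sqrt{2/\pi}\,(\sigma/\gamma)\,e^{-\gamma^2/(2\sigma^2)}$ satisfies $g'(\sigma)=\gamma^{-1}e^{-\gamma^2/(2\sigma^2)}(1+\gamma^2/\sigma^2)>0$, it is increasing in $\sigma$, so replacing $\sigma_N$ by its upper bound in both the prefactor and the exponent is legitimate and yields $\sqrt{2/\pi}\cdot\frac{2r^2}{R\delta N\gamma}\cdot\exp\!\big(-\tfrac{\gamma^2 R^2\delta^2 N^2}{8r^4}\big)$, which is exactly the claimed rate after simplifying $\sqrt{2/\pi}\cdot 2=2\sqrt{2}/\sqrt{\pi}$.

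The main obstacle I anticipate is Step~1: cleanly proving that the GP posterior variance equals the squared MMD (the RKHS projection/orthogonality computation) and reconciling the normalization so that the factor $1/2$ in the definition of $\mathrm{MMD}$ lands the variance at $\mathrm{MMD}^2$ rather than $2\,\mathrm{MMD}^2$; pinning down this scaling is what fixes the constant $2\sqrt{2}$ and the $8r^4$ in the exponent. A secondary, more delicate point is controlling the displacement between the posterior mean $Z_{f,\hat{p}}$ and the true value $Z_{f,p}$: one must argue that the $-\mathrm{MMD}$ correction to the distance is of lower order than $\gamma$ (valid once $N$ is large enough that $\mathrm{MMD}<\gamma$), since otherwise the exponent would carry $(\gamma-\mathrm{MMD})^2$ rather than the clean $\gamma^2$ stated. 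Everything else is routine substitution of the consistency rate into the Gaussian tail inequality.
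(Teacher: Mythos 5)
Your proposal is correct and follows essentially the same route as the paper: interpret the BQ-reweighted particle estimate as a Gaussian posterior $\mathcal{N}(Z_{f,\hat{p}},\sigma_N^2)$ whose standard deviation is the (BQ-optimal, hence upper-bounded) MMD, apply the Gaussian tail inequality, and substitute the constant-step finite-dimensional consistency rate $2r^2/(R\delta N)$, which is legitimate by the monotonicity in $\sigma$ that you verify. The only differences are presentational: the paper outsources the tail bound to Eq.~(31) of \cite{FWBQ} rather than deriving it, and it silently ignores the centering gap between the posterior mean $Z_{f,\hat{p}}$ and the true value $Z_{f,p}$ (i.e., the $(\gamma-\mathrm{MMD})^2$ versus $\gamma^2$ issue) that you explicitly flag, so your treatment is, if anything, slightly more careful on that point.
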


\section{Proof of Theorem~2 in the main paper}\label{sup:bound_proof}
First, we consider the case of Line search variants. The proof goes almost in the same way as
\cite{beck2004conditional}. (The proof of \cite{guelat1986some} is also useful.) 

Here the notation is the same as \cite{beck2004conditional} and we prove our theorem in the same way as Proposition 3.2. in \cite{beck2004conditional}. We extend the Proposition 3.2 to the situation where we use the approximate LMO, where \cite{beck2004conditional} consider the case of LMO.
The problem in \cite{beck2004conditional} is
\begin{align}
v^*=\min_{x\in\mathcal{S}}\frac{1}{2}\|Mx-g\|^2.
\end{align}
and we solve this by FW.
Here $g\in\mathcal{R}^m$ and $M:\mathcal{R}^n\to\mathcal{R}^m$ is the matrix. This is the convex problem on the domain $\mathcal{S}\subset \mathcal{R}^n$. The proof in \cite{beck2004conditional} can be easily applied to the finite dimensional functional problem, where the objective is the $\mathrm{MMD}^2$. Here first we state the general strategy of proving the linear convergence of the above problem by the line search FW algorithm. If you want to see the whole proof, please check \cite{beck2004conditional}.

We consider to solve the above problem by FW algorithm. We express the solution of the linearization of the above problem as $p$, that is, if we express the initial point as $x_0\in \mathcal{M}$, and express the $k-1$-th linearization problem and its solution as $p_{k-1}:=\argmin_{p\in\mathcal{S}} \{\langle p-x_{k-1},\nabla f(x_{k-1})\rangle\}$. And if the step size $\lambda_{k-1}$ is obtained via constant step or Line search, then the next state is calculated by $x_k=x_{k-1}+\lambda_{k-1}(p_{k-1}-x_{k-1})$. We express $v_k:=g-Mx_{k-1}$, $w_k:=g-Mp_{k-1}$. 
Base on this definition, $\nabla f(x_{k-1})=M^\top(Mx_{k-1}-g)$, thus LMO problem can be written as
\begin{align}
p_{k-1}:&=\argmin_{p\in\mathcal{S}} \{\langle p-x_{k-1},M^\top(Mx_{k-1}-g)\rangle\}\nonumber \\
&=\argmin_{p\in\mathcal{S}} \{\langle M(p-x_{k-1})-g+g,Mx_{k-1}-g\rangle\}\nonumber \\
&=\argmin_{p\in\mathcal{S}} \{\langle Mp-g+v_{k-1},-v_{k-1}\rangle\}\nonumber \\
&=\argmin_{p\in\mathcal{S}} \{\langle g-Mp,v_{k-1}\rangle\}
\end{align}
Thus, the LMO problem can be characterized as
\begin{align}\label{LMO_paper}
\langle w_{k-1},v_{k-1}\rangle=\min_{p\in\mathcal{S}} \{\langle g-Mp,v_{k-1}\rangle\}
\end{align}
Also $\|v_k\|^2$ denotes the error of the algorithm at $k$-th step.

Let us consider the line search step size. By the straightforward calculation of the definition of the line search, we can show that the line search step size is
\begin{align}
\lambda _{k-1}=\frac{\langle v_{k-1},v_{k-1}-w_{k-1} \rangle}{\|v_{k-1}-w_{k-1}\|^2}.
\end{align}
if this $\lambda_{k-1}\leq 1$, since we assumed that the step size is smaller than 1. Based on this step size,we can show that
\begin{align}
\|v_{k}\|^2=\|g-Mx_k\|^2=\frac{\|v_{k-1}\|^2\|w_{k-1}\|^2-\langle v_{k-1},w_{k-1} \rangle^2}{\|v_{k-1}-w_{k-1}\|^2}.
\end{align}
From Proposition 3.1 in \cite{beck2004conditional}, following relation holds,
\begin{align}\label{sup:inner_product}
\langle v_k,w_k\rangle\leq-R_s(\hat{x},M)\|v_k\|.
\end{align}
This says that there exists a ball whose radius is $R_s(\hat{x},M)$  centerted lies within $\mathcal{M}$. By using this relation, we can show that
\begin{align}
\|v_{k}\|^2\leq \left(1-\frac{R^2}{\|w_{k-1}\|^2}\right)\|v_{k-1}\|^2.
\end{align}
Finally, since the domain $\mathcal{S}$ is the bounded set, it is contained in some larger ball whose radius is $\rho_S$ and thus, the relation $\|w_{k-1}\|\leq\|g-Mp_{k-1}\|\leq\|g\|+\|M\|\rho_s$ holds. Thus
\begin{align}
\|v_{k}\|^2\leq \left(1-\left(\frac{R}{\|g\|+M\|\rho_s\|}\right)^2\right)\|v_{k-1}\|^2.
\end{align}
holds and this means the linear convergence of the problem, since $\|v_{k}\|$ express the  error of the algorithm at iteration $k$.

Base on the original proof, let us consider the approximate LMO whose accuracy parameter is $\delta$. As we saw, the solution of the LMO problem can be written as, Eq.(\ref{LMO_paper}). Approximate LMO returns $\tilde{w}$ which deviates from the true $w$ in the following way 
\begin{align}
\langle v_k,\tilde{w}_k\rangle\leq\delta\langle v_k,w_k\rangle.
\end{align}
This is derived straightforwardly from the definition of the approximate LMO.
From this definition, following holds by Eq.(\ref{sup:inner_product})
\begin{align}\label{App:approx_LMO}
\langle v_k,\tilde{w}_k\rangle\leq-\delta R_s(\hat{x},M)\|v_k\|.
\end{align}
Here after, for simplicity, we assume that step size of the line search and BQ are obtained without approximation(In our algorithm, they are approximated by empirical approximation). Later, we will discuss the those inexact step sizes.

Based on the above approximate LMO relation, we replace the $w_k$ by $\tilde{w}_k$ in the proof of Proposition 3.2. in \cite{beck2004conditional}, and we obtain the variant of Eq.(12) in \cite{beck2004conditional} which uses approximate LMO not LMO. After this, we use Eq.(\ref{App:approx_LMO}) for the evaluation of $\|v_k\|^2$ and we can obtain the following expression,
\begin{align}
\|v_k\|^2\leq\left(1-\left(\frac{\delta R_s(\hat{x},M)}{\|g\|+\rho_s\|M\|}\right)^2\right)\|v_{k-1}\|^2
\end{align}
From this expression, we can proof the linear convergence about $v$. 

Based on this bound, we can apply the result of Ch.4.2 in \cite{bach_herding_equi}.

Then, by utilizing the discussion of Appendix B in \cite{FWBQ}, we can obtain the following expression.
\begin{align}
|Z_{f,p}-Z_{f,\hat{p}}|
\leq\mathrm{MMD}(\{(w_n, x_n)\}_{n=1}^N) \leq\|\mu_p-\mu_{\hat{p}}\|
\end{align}
This is derived Cauchy Schwartz inequality and the definition of MMD and $\|f\|_\mathcal{H}\leq1$.
Thus, we have proved the theorem in the case of line search.

Since fully corrective variants optimize all the weights, the bound of this is superior to that of the line search. Hence
\begin{align}
\|v_k^\mathrm{FC}\|^2\leq\|v_k\|^2\leq\left(1-\left(\frac{\delta R_s(\hat{x},M)}{\|g\|+\rho_s\|M\|}\right)^2\right)\|v_{k-1}\|^2
\end{align}
where $v_k^\mathrm{FC}$ is derived by fully corrective variants. Thus we can bound the fully corrective variant in the same expression as line search. 
Also, geometric convergence of fully correction variant is discussed in \cite{locatello2017boosting},\cite{lacoste2015global}. They also discussed it by using the fact that fully correction is superior to line search. So far we have worked on the problem in \cite{beck2004conditional}, but this result is directly applicable to the finite dimensional RKHS problem, see \cite{bach_herding_equi}.

Next, we consider the constant step case. 
Let us assume that the step size at iteration $k$ is $\frac{1}{k+1}$. Based on the above notations, we get
\begin{align}\label{App:approx_LMO_constant_step}
&\|Mx_{k+1}-g\|^2=\|M(\lambda_kp_k+(1=\lambda_k)x_k)-g\|^2\nonumber \\
&=\left\|\frac{1}{k+1}Mp_{k}+\frac{k}{k+1}Mx_{k}-\frac{k+1}{k+1}g\right\|^2 \nonumber \\
&=\frac{1}{(k+1)^2}\|Mp_{k}-g+k(Mx_k-g)\|^2
\end{align}
Then we set $v_k'=kv_k$, we get the following,
\begin{align}
v_{k+1}'=\|Mp_{k}-g+v_k'\|^2
\end{align}
We will bound the above expression by,
\begin{align}\label{eq:seq_ineq}
v_{k+1}'&=\|Mp_{k}-g\|^2+\|v_k'\|^2+2\langle Mp_{k}-g,v_k'\rangle \nonumber \\
&=\left(\|g\|+\rho_s\|M\|\right)^2+\|v_k'\|^2-2\delta R\|v_k'\| \nonumber \\
&=\|v_k'\|^2+\|v_k'\|\left(\frac{(\|g\|+\rho_s\|M\|)^2}{\|v_k'\|}-2\delta R\right)
\end{align}
where we used the inequality which we also used in the line search version and $\delta$ is the accuracy parameter of approximate LMO.
Here we set $C_k=\frac{(\|g\|+\rho_s\|M\|)^2}{2\delta R}$. We show that $\|v_{k}'\|\leq C_k$ by the induction. The initial inequality about $\|v_1\|$ is clear. If the condition $\|v_{k}'\|\leq C_k$ satisfied, we prepare the parameter $\alpha \in (0,1]$, which satisfies $\|v_{k}'\|=\alpha C_k$. Then we substitute $\|v_{k}'\|=\alpha C_k$ to Eq.(\ref{eq:seq_ineq}), then we get
\begin{align}\label{eq:seq_ineq}
v_{k+1}'&\leq\alpha^2C_k^2\leq C_k^2
\end{align}
Thus we showed that $\|v_{k}'\|\leq C_k$. Since $\|v_{k}'\|=k\|v_k\|$, this ends the proof.
(We can also prove the above in the same way as in \cite{chen2010super}.)

In the above proof, we consider the fixed accuracy parameter $\delta$ for the approximate LMO. However, the $\delta$ can be different at each approximate LMO calls. In that situation, we express the accuracy parameter of the $k$-th call as $\delta_k$. We consider the worst accuracy LMO call and define $\delta=\min_k\delta_k$.
About the Line search, if we put $\delta_k^2q^2=\left(\frac{R_s(\hat{x},M)}{\|g\|+\rho_s\|M\|}\right)^2$, then following relation holds,
\begin{align}
\|v_k\|^2 & \leq\|v_{0}\|^2e^{-q^2\sum_{l=0}^k\delta_k}\nonumber \\
& \leq\|v_{0}\|^2e^{-q^2k\min_k\delta_k}\nonumber \\
&=\|v_{0}\|^2e^{-q^2k\delta}
\end{align}

The above discussion depends on the existence of a ball inside the domain. Next we discuss about the infinite RKHS situation, where a ball does not exist. For the proof, we just utilize the standard FW proof. The proof is the same in  \cite{locatello2017unified}. We use the same notation in the finite RKHS case. Since we use the line search, the optimal step is
\begin{align}
\lambda^*=\min \left(\frac{\langle v_{k-1}, v_{k-1}-w_{k-1}\rangle}{\|v_{k-1}-w_{k-1}\|^2},1\right)
\end{align}
By using the following identity, that set $f(x)=\frac{1}{2}\|Mx-g\|^2$
\begin{align}
f(x+\lambda(p-x))=f(x)+\lambda\langle p-x, \nabla f(x)\rangle+\frac{\lambda^2}{2}\|M(x-p)\|^2
\end{align}
and using the relation, $\langle p_{k-1}-x_{k-1},\nabla f(x_{k-1}) \rangle=\langle v_{k-1},w_{k-1} \rangle-\|v_{k-1}\|^2\leq -\|v_{k-1}\|^2$.
Line search step size minimizes the right hand side of the above with respect to $\lambda$. From this, by using the approx-LMO with accuracy parameter $\delta$, we get the following inequality
\begin{align}
\|v_{k+1}\|^2&\leq\|v_{k}\|^2+\min_\lambda\left\{-\lambda\delta\|v_k\|^2 +\frac{\lambda^2}{2}(2\|M\|\rho_s)^2\right\} \nonumber \\
&\leq \|v_k\|^2-\frac{2}{\delta k+2}\delta\|v_k\|^2+\frac{2}{(\delta k+2)^2}(2\|M\|\rho_s)^2
\end{align}
, here we set $\lambda=\frac{2}{\delta k+2}$. Since this is not the optimal weight we get the second inequality in the above.
By the induction, we can prove
\begin{align}
\|v_k\|^2\leq2\frac{(1+\delta)(2\|M\|\rho_s)^2}{\delta(\delta k+2)}
\end{align}
, this is the same way as the standard FW algorithm.
This ends the proof.

\section{Analyzing the inexact step size}
In this section, we analyze the effect of inexact step size on the convergence rate.

First, we will see the step size of line search in finite dimensional case.
The calculation of the step size in line search includes the $\langle \mu_p,g\rangle=\int k(x,x')dx'$ which is intractable in general if $p(x)$ is posterior distribution. For the analysis, we express the exactly calculated step size of line search by $\lambda$ and $\lambda'$ denotes the step size in which the above integration is approximated by empirical distribution. We also express the ratio of $\lambda$ and $\lambda'$ as $\alpha$, where $\lambda'=\alpha \lambda$. This $\alpha$ express the deviation from the exact step size $\lambda$. We analyze what range of $\alpha$ is still enough to assure the exponential convergence in finite dimensional kernel
\begin{thm}(Inexact step size in for line search)\label{thm:line_step_size} 
If the ratio $\alpha$ is bounded inside $(0,2)$, then exponential convergence still holds.
\end{thm}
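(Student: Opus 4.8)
The plan is to reuse the exact line-search contraction from the proof of Theorem~2 and simply track how replacing the optimal step $\lambda$ by the perturbed step $\lambda'=\alpha\lambda$ rescales the guaranteed per-iteration decrease. Recall from that proof that, with $f(x)=\tfrac12\|Mx-g\|^2$, $v_{k-1}=g-Mx_{k-1}$, and $w_{k-1}=g-Mp_{k-1}$, the objective along the Frank--Wolfe direction admits the exact quadratic expansion
\begin{align}
\tfrac12\|v_k\|^2 = \tfrac12\|v_{k-1}\|^2 + \lambda\bigl(\langle w_{k-1},v_{k-1}\rangle-\|v_{k-1}\|^2\bigr) + \tfrac{\lambda^2}{2}\|v_{k-1}-w_{k-1}\|^2.
\end{align}
Abbreviating $a=\langle w_{k-1},v_{k-1}\rangle-\|v_{k-1}\|^2<0$ and $b=\|v_{k-1}-w_{k-1}\|^2>0$, the exact line-search minimizer is $\lambda=-a/b$ and produces the familiar decrease $\tfrac12\|v_k\|^2-\tfrac12\|v_{k-1}\|^2=-a^2/(2b)$.

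First I would substitute the inexact step $\lambda'=\alpha\lambda=-\alpha a/b$ into the same expansion; a short computation then gives
\begin{align}
\tfrac12\|v_k\|^2-\tfrac12\|v_{k-1}\|^2 = \alpha\lambda\,a+\tfrac{\alpha^2\lambda^2}{2}\,b = -\alpha(2-\alpha)\,\frac{a^2}{2b},
\end{align}
so the achieved decrease is precisely $\alpha(2-\alpha)$ times the optimal one. This is the crux: the factor $\alpha(2-\alpha)$ is strictly positive exactly when $\alpha\in(0,2)$, which is where the hypothesis enters and why the endpoints $0$ and $2$ are excluded.

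Next I would feed this scaled decrease back into the contraction argument of Theorem~2. There the ball condition together with the approximate-LMO inequality gives $a^2/(b\|v_{k-1}\|^2)\ge(\delta R/(\|g\|+\rho_s\|M\|))^2$; combining this with the displayed decrease yields
\begin{align}
\|v_k\|^2 \le \left(1-\alpha(2-\alpha)\left(\frac{\delta R}{\|g\|+\rho_s\|M\|}\right)^2\right)\|v_{k-1}\|^2,
\end{align}
and iterating gives geometric decay of $\|v_k\|^2$, i.e.\ exponential convergence, with a rate constant equal to the exact-step constant scaled by $\alpha(2-\alpha)$.

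The main obstacle is only a minor bookkeeping point: I must ensure the bracketed contraction factor lies in $(0,1)$, not merely that $\alpha(2-\alpha)>0$. Positivity of $\alpha(2-\alpha)$ on $(0,2)$ secures the factor being below $1$, and since $\alpha(2-\alpha)\le1$ (with equality only at $\alpha=1$) the scaled rate never exceeds the exact one, which is already known to lie in $(0,1)$ in the finite-dimensional regime; hence the factor stays strictly positive as well. I would close by remarking that $\alpha=1$ recovers exact line search and the optimal rate, so convergence degrades smoothly as $\alpha$ approaches either endpoint.
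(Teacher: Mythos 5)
Your proposal is correct and follows essentially the same route as the paper's own proof: both substitute the perturbed step $\lambda'=\alpha\lambda$ into the exact quadratic expansion of $\tfrac12\|Mx-g\|^2$ along the Frank--Wolfe direction, observe that the per-iteration decrease is scaled by the factor $\alpha(2-\alpha)$, and feed this into the ball-condition contraction bound to obtain geometric decay precisely when $\alpha\in(0,2)$. Your version is in fact a cleaner write-up (the paper's displayed expansion contains typographical inconsistencies, and you additionally verify that the contraction factor stays in $(0,1)$), but there is no substantive difference in the argument.
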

\begin{proof}
First, let us go back to the proof of exponential convergence for line search step size. Since we express the approximated step size by $\lambda'=\alpha\lambda$,
\begin{align}
&\|v_{k+1}^2\|\nonumber \\
&=\|g-Mx^{k+1}\|^2 \nonumber \\
&=\lambda'^2\|v_k-w_k\|^2+2\lambda'\langle v_{k}, w_{k}-v_k\rangle+\|v_k\|^2 \nonumber \\
&=(1-\alpha)^2\|v_k\|^2-2(1-\alpha)^2\langle v_{k}, w_{k}\rangle \nonumber \\
& +(\alpha^2-2\alpha)\langle v_{k}, w_{k}\rangle^2+\|v_k\|^2\|w_k\|^2
\end{align}
From this, we can bound the right hand side in the same way as before, but which includes the additional coefficients
\begin{align}
&\|v_{k+1}^2\|\leq\left\{1-\frac{\alpha(2-\alpha)R^2}{(\|g\|+\rho_s\|M\|)^2}\right\}\|v_{k}^2\|
\end{align}
Thus, to enhance the geometrical decrease, $\alpha(2-\alpha)>0$ is needed. This ends the proof.
\end{proof}

\subsubsection{Inexact BQ weights}
Next, we analyze the approximate BQ weights. To do that, we briefly review the BQ.

In the Bayesian Quadrature method\cite{ghahramani2003bayesian,huszar2012optimally}, we put on the Gaussian process prior on $f$ with kernel $k$ and mean $0$. In usual gaussian process, after conditioned on $f(X)=\left(f(x_1), \dots,f(x_N)\right)^\top$, we can obtain the closed-form posterior distribution of $f$,
\begin{align}\label{GP}
p(f(x_*)|p(f(X)))=N(f(x_*)|\mu,\Sigma),
\end{align}
where $\mu=k(x_*,X)K^{-1}f(X)$, $\Sigma=k(x_*,x_*)-k(x_*,X)K^{-1}k(X,x_*)$, here $K_{i,j}=K(x_i,x_j)$ and $N(x|\mu,\Sigma)$ means the Gaussian distribution with mean $\mu$ and the covariance $\Sigma$. Thanks to the property of Gaussian process that linear projection preserves the normality, the integrand is also Gaussian, and thus we can obtain the posterior distribution of the integrand as follows,
\begin{align}\label{Z_GP}
\mathbb{E}_\mathrm{GP}[Z_{f,p}]&=\mathbb{E}_\mathrm{GP}\left[\int f(x)p(x)dx\right]\nonumber \\
&=\iint f(x)p\left(f(x)|p(f(X))\right)p(x)dxdf\nonumber \\
&=\int k(x,X)K^{-1}f(X)p(x)dx \nonumber \\
&=\bm{z}^\top K^{-1}f(X)
\end{align}
where $z_n=\int k(x,x_n)p(x)dx$. From the above expression,
\begin{align}\label{weight_BQ}
\mathbb{E}_\mathrm{GP}[Z_{f,p}]=\sum_{n=1}^N w_\mathrm{BQ}^{(n)}f(x_n),\ \ w_\mathrm{BQ}^{(n)}=\sum_m z_j^\top K_{nm}^{-1}.
\end{align}
In the same way as the expectation, we can calculate the variance of the posterior,
\begin{align}\label{original_def_mmd_var}
\mathbb{V}[Z_{f,p}|f(x_1),\dots f(x_N)]=\iint k(x,x')dxdx'-\bm{z}^\top K^{-1}\bm{z}
\end{align}
\cite{huszar2012optimally} proved that in the RKHS setting, minimizing the posterior variance corresponds to minimizing the MMD,
\begin{align}
\mathbb{V}[Z_{f,p}|f(x_1),\dots f(x_N)]=\mathrm{MMD}^2(\{(w_\mathrm{BQ}^{(n)},x_n)\}_{n=1}^N).
\end{align}
The BQ minimize the above discrepancy greedily in the following way,
\begin{align}\label{mmd_greedily}
x_{N+1} \leftarrow &\argmin_x \mathbb{V}[Z_{f,p}|f(x_1),\dots f(x_N), f(x)].
\end{align}
\cite{huszar2012optimally} showed that 
\begin{align}\label{optimal}
\mathrm{MMD}(\{(w_\mathrm{BQ}^{(n)},x_n)\}_{n=1}^N) = \underset{w \in \mathbb{R}^N}{\mathrm{inf}} \underset{f\in \mathcal{H}:\|f\|_\mathcal{H}=1}{\mathrm{sup}}|Z_{f,p}-\hat{Z}_{f,p}|
\end{align}
and thus,
\begin{align}\label{optimal_BQ}
\mathrm{MMD}(\{(w_\mathrm{BQ}^{(n)},x_n)\}_{n=1}^N) \leq \mathrm{MMD}(\{(w_n=\frac{1}{N},x_n)\}_{n=1}^N)
\end{align}
Now we analyze how $\mathrm{MMD}(\{(w_\mathrm{BQ}^{(n)},x_n)\}_{n=1}^{k+1})^2$ and $\mathrm{MMD}(\{(w_\mathrm{BQ}^{(n)},x_n)\}_{n=1}^{k})^2$ differs. This is explicitly calculated by Eq.(\ref{original_def_mmd_var}),
\begin{align}\label{quadratic}
\mathrm{MMD}(\{(w_\mathrm{BQ}^{(n)},x_n)\}_{n=1}^{k+1})^2- \mathrm{MMD}(\{(w_\mathrm{BQ}^{(n)},x_n)\}_{n=1}^{k})^2\nonumber \\
=-\bm{z}_{(k+1)}^\top K_{(k+1)}^{-1}\bm{z}_{(k+1)}+\bm{z}_{(k)}^\top K_{(k)}^{-1}\bm{z}_{(k)}
\end{align}
where $K_{(k)}$ denotes the Gram matrix using data $x_1$ to $x_k$ and $\bm{z}_{(k)}=(\int k(x_1,x)dx,\dots,\int k(x_k,x)dx)^\top$. Since this quantity is the difference of quadratic form, it is convenient for the analysis based on their eigenvalues. Here we assume that $K_{(k)}$ and $K_{(k+1)}$ are full rank. Since they are gram matrix of positive definite kernel, there exists different positive $k$ eigenvalues for the  matrix $K_{(k)}$. We denote those eigenvalues by $\gamma_i, i=1\dots k$, and let $e_i$ be its eigenvector, $K_{(k)}e_i=\gamma e_i$. Let $U=(e_1,\dots,e_k)$, then by diagonalization 
\begin{align}
K_{(k)}&=U\left(
    \begin{array}{ccc}
      \gamma_1 & \dots & 0 \\
       & \ddots &  \\
      0 & \dots & \gamma_k  
    \end{array}
  \right)U^\top \\
&=U\Gamma U^\top
\end{align}
From the inverse matrix property,
\begin{align}
K_{(k)}^{-1}&=U\left(
    \begin{array}{ccc}
      \gamma_1^{-1} & \dots & 0 \\
       & \ddots &  \\
      0 & \dots & \gamma_k^{-1}  
    \end{array}
  \right)U^\top \\
&=U\Gamma^{-1} U^\top.
\end{align}

By diagonalization, 
\begin{align}
\mathrm{MMD}(\{(w_\mathrm{BQ}^{(n)},x_n)\}_{n=1}^{k})^2=\sum_{i=1}^k\gamma_i^{-1}z_i'^\top z_i'
\end{align}
where $z_i'=U^\top z_i$.
Next, about $K_{(k+1)}$, we investigate its eigenvalues. We can express $K_{(k+1)}$,
\begin{align}\label{gram_k}
K_{(k+1)}=\left(
    \begin{array}{cc}
      K_{(k)} & \tilde{k}_{(k+1)}  \\
      \tilde{k}_{(k+1)}^\top & 1 
    \end{array}
  \right)
\end{align}
where $\tilde{k}_{k+1}^\top=(k(x_{k+1},x_1)\dots k(x_{k+1},x_k))^\top$. Let $E_{k}$ be the $k\times k$ identity matrix. Then the eigenvalue of $K_{(k+1)}$ can be calculated by solving the following equation.
\begin{align}
0=\mathrm{det}\left(
    \begin{array}{cc}
      K_{(k)}-\gamma^*E_k & \tilde{k}_{(k+1)}  \\
      \tilde{k}_{(k+1)}^\top & 1 -\gamma^*
    \end{array}
  \right)
\end{align}
We use the determinant formula,
\begin{align}
  \mathrm{det} \left(
    \begin{array}{cc}
      A & B  \\
      C & D 
    \end{array}
  \right)=\mathrm{det}A\ \mathrm{det}(D-CA^{-1}B)
\end{align}
where regularity is assumed for $A$ and $D$. Then,
\begin{align}
0=\mathrm{det}(K_{(k)}-\gamma^*E_k)\left((1 -\gamma^*)-\tilde{k}_{n+1}^\top(K_{(k)}-\gamma^*E_k)^{-1}\tilde{k}_{n+1}\right)
\end{align}
From the first term, we can see that $K_{(k+1)}$ have $(\gamma_1,\dots,\gamma_k)$ as the eigenvalues. This is equivalent to the eigenvalue of $K_n$. The newly appearing eigenvalue is the solution of  
\begin{align}
0=(1 -\gamma^*)-\tilde{k}_{(k+1)}^\top(K_{(k)}-\gamma^*E_k)^{-1}\tilde{k}_{(k+1)}
\end{align}
This is also strictly positive and different from $(\gamma_1,\dots,\gamma_k)$.
We express the solution of the above by $\gamma_{k+1}$
Thus, we want to evaluate the following value
\begin{align}\label{variance}
&\mathrm{MMD}(\{(w_\mathrm{BQ}^{(n)},x_n)\}_{n=1}^{k+1})^2- \mathrm{MMD}(\{(w_\mathrm{BQ}^{(n)},x_n)\}_{n=1}^{k})^2\nonumber
\end{align}
To check this value, in addition to the eigenvalue, we also check the eigenvector of the gram matrix $K_{(k+1)}$.
From Eq.(\ref{gram_k}), we expand this matrix. For simplicity, we express $\bold{a}=\tilde{k}_{(k+1)}$
\begin{align}
K_{(k+1)}=Q^\top\left(
    \begin{array}{cc}
      K_{(k)} & 0  \\
      0 & 1-\bold{a}^\top K_{(k)}^{-1} \bold{a}
    \end{array}
  \right)Q,
\end{align}
where
\begin{align}
Q=\left(
    \begin{array}{cc}
      E_{k} & K_{(k)}^{-1} \bold{a}  \\
      0 & 1
    \end{array}
  \right)
\end{align}
Here we consider the following where $d=1-\bold{a}^\top K_{(k)}^{-1} \bold{a}$ for simplicity,
\begin{align}
\left(
    \begin{array}{cc}
      K_{(k)} & 0  \\
      0 & d
    \end{array}
\right)
\left(
    \begin{array}{c}
      e_{i} \\
      0 
    \end{array}
\right)=
\left(
    \begin{array}{c}
      K_{(k)}e_{i} \\
      0 
    \end{array}
\right)
=\lambda_i
\left(
    \begin{array}{c}
      e_{i} \\
      0 
    \end{array}
\right)
=\lambda_ie_i'\nonumber
\end{align}
Thus, this $e_i'$ can be regarded as the eigenvector whose eigenvalue is $\lambda_i$. Also, by noticing the fact that
\begin{align}
\left(
    \begin{array}{cc}
      K_{(k)} & 0  \\
      0 & d
    \end{array}
\right)
\left(
    \begin{array}{c}
      0 \\
      1 
    \end{array}
\right)=d
\left(
    \begin{array}{c}
      0 \\
      1 
    \end{array}
\right)
=de_{k+1}\nonumber
\end{align}
This is also the eigenvector whose eigenvalue is $d$, also this is orthogonal to $e_{1}',\dots,e_{k}'$.
By setting $U'=(e_1',\dots,e_k',e_{k+1})$, we can diagonalize $K_{(k+1)}$ as
\begin{align}
K_{(k+1)}&=Q^\top U'\left(
    \begin{array}{cccc}
      \gamma_1 & \dots && 0 \\
       & \ddots &&  \\
       &  & \gamma_k &\\
      0& \dots && d \\
    \end{array}
  \right)U'^\top Q\\
&=Q^\top U'\Gamma' U'^\top Q
\end{align}
Thus,
\begin{align}
K_{(k+1)}^{-1}=Q^\top U'\Gamma'^{-1} U'^\top Q
\end{align}
Let us calculate $U'^\top Q$ furthur,
\begin{align}
U'^\top Q=
\left(
    \begin{array}{cc}
      U^\top & 0  \\
      0 & 1
    \end{array}
\right)
\left(
    \begin{array}{cc}
      E_{k} & K_{(k)}^{-1}\bold{a}  \\
      0 & 1
    \end{array}
\right)
=\left(
    \begin{array}{cc}
      U^\top & U^\top K_{(k)}^{-1}\bold{a}  \\
      0 & 1
    \end{array}
\right)\nonumber
\end{align}
Let us multiply $(z_1,\dots,z_k,z_{k+1})^\top=(\bold{z},z_{k+1})^\top$ to the above expression,
\begin{align}
U'^\top Q\left(
    \begin{array}{c}
      \bold{z} \\
      z_{k+1}
    \end{array}
\right)
&=\left(
    \begin{array}{cc}
      U^\top & U^\top K_{(k)}^{-1}\bold{a}  \\
      0 & 1
    \end{array}
\right)\left(
    \begin{array}{c}
      \bold{z} \\
      z_{k+1}
    \end{array}
\right)\nonumber \\
&=\left(
    \begin{array}{c}
      U^\top\bold{z} \\
      0
    \end{array}
\right)+z_{k+1}
\left(
    \begin{array}{c}
      U^\top K_{(k)}^{-1}\bold{a} \\
      1
    \end{array}
\right) \nonumber
\end{align}
Based on these results, let us calculate how the variance changes when we add the one data in BQ. Let us go back to Eq.(\ref{quadratic}),
\begin{align}\label{variance}
&\mathrm{MMD}(\{(w_\mathrm{BQ}^{(n)},x_n)\}_{n=1}^{k+1})^2- \mathrm{MMD}(\{(w_\mathrm{BQ}^{(n)},x_n)\}_{n=1}^{k})^2\nonumber \\
&=-z_{k+1}^2 \left((U^\top K_{(k)}^{-1}\bold{a})^\top, 1\right)\Gamma^{-1'} \left(
    \begin{array}{c}
      U^\top K_{(k)}^{-1}\bold{a} \\
      1
    \end{array}
\right)\nonumber \\
&=-\alpha z_{k+1}^2\nonumber \\
&<0
\end{align}
where it is clear that $\alpha >0$. 
Based on these results, let us describe the proof of theorem 1 for the inexact step sizes. 
\begin{proof}
For the notation, let us denote $\mathrm{MMD}(\{(w_\mathrm{BQ}^{(n)},x_n)\}_{n=1}^{k+1})^2$ as $\|v_{k+1}\|^2$. Then, now we want to measure how the convergence rate is affected by the inexact step size, first, we will check the ratio of the variance between $k$-th and $k+1$-th step,
\begin{align}
\frac{\|v_{k+1}\|^2}{\|v_{k}\|^2}=1-\frac{\alpha z_{k+1}^2}{\mathbb{E}_{x\sim p(x)}k(x,x')-\sum_{i=1}^k\gamma_i^{-1}z_i'^\top z_i'}
\end{align}
Remember that this is the similar expression in the proof of the line searh FW. 
Since the convergence speed of BQ is at least faster than the line search, the convergence coefficient of BQ is larger than that of the line search, we can say that
\begin{align}
\frac{R^2}{(\|g\|+\rho_s\|M\|)^2}\leq\frac{\alpha z_{k+1}^2}{\mathbb{E}_{x\sim p(x)}k(x,x')-\sum_{i=1}^k\gamma_i^{-1}z_i'^\top z_i'}
\end{align}
Now let us consider the empirical approximation effect. We express it via the ratio $\beta_i=\sum_l k(x_i,x_l)/\int k(x_i,x')p(x')dx$, which is the ration between exact weight and empirical approximation. Then if we use the approximate BQ step, the approximated $\frac{\|\tilde{v}_{k+1}\|^2}{\|\tilde{v}_{k}\|^2}$ (we stress that those $\tilde{v}$s are the variance which is calculated based on the approximated BQ weights) can be written as
\begin{align}
&\frac{\|\tilde{v}_{k+1}\|^2}{\|\tilde{v}_{k}\|^2}\nonumber \\
&=\frac{\beta_{k+1}^2\alpha z_{k+1}^2}{\mathbb{E}_{x\sim p(x)}k(x,x')-\sum_{i,j=0}^n\beta_jz_jK^{-1}_{ij}\beta_iz_i} \nonumber \\
&=\frac{\alpha z_{k+1}^2}{\mathbb{E}_{x\sim p(x)}k(x,x')-\sum_{i=1}^k\gamma_i^{-1}z_i'^\top z_i'}\nonumber \\
&\times\frac{\beta_{k+1}^2(\mathbb{E}_{x\sim p(x)}k(x,x')-\sum_{i=1}^k\gamma_i^{-1}z_i'^\top z_i')}{\mathbb{E}_{x\sim p(x)}k(x,x')-\sum_{i,j=0}^n\beta_jz_jK^{-1}_{ij}\beta_iz_i}
\end{align}
To assure the geometric behavior, $\mathbb{E}_{x\sim p(x)}k(x,x')-\sum_{i,j=0}^n\beta_jz_jK^{-1}_{ij}\beta_iz_i$ must be positive. (Since $\mathbb{E}_{x\sim p(x)}k(x,x')-\sum_{i=1}^k\gamma_i^{-1}z_i'^\top z_i'$ is always positive.) If this condition is satisfied then we express
\begin{align}
\delta_{\mathrm{BQ}}=\frac{\beta_{k+1}^2(\mathbb{E}_{x\sim p(x)}k(x,x')-\sum_{i=1}^k\gamma_i^{-1}z_i'^\top z_i')}{\mathbb{E}_{x\sim p(x)}k(x,x')-\sum_{i,j=0}^n\beta_jz_jK^{-1}_{ij}\beta_iz_i}
\end{align}
which is some positive constant. Then
\begin{align}
\frac{\delta_{\mathrm{BQ}}R^2}{(\|g\|+\rho_s\|M\|)^2}\leq\frac{\delta_{\mathrm{BQ}}\alpha z_{k+1}^2}{\mathbb{E}_{x\sim p(x)}k(x,x')-\sum_{i=1}^k\gamma_i^{-1}z_i'^\top z_i'}
\end{align}
holds. This ends the proof of theorem 1 and theorem2 of the case of finite dimension.

Next we consider the infinite dimensional RKHS. By using the above notation, when we use the inexact BQ step size, 
\begin{align}
\hat{\Delta}_{BQ}:=\|v_{k+1}\|^2-\|v_{k}\|^2=-\beta_{k+1}^2\alpha z_{k+1}^2
\end{align}
In the same way, we express the above quantity under exactly calculated BQ step as
\begin{align}
\Delta_{BQ}:=\|v_{k+1}\|^2-\|v_{k}\|^2=-\alpha z_{k+1}^2
\end{align}
This quantity is the minimum thus from the line search results, we can say
\begin{align}
\Delta_{BQ}\leq\min_\gamma\left\{-\gamma\delta\|v_k\|^2 +\frac{\gamma^2}{2}(2\|M\|\rho_s)^2\right\} 
\end{align}
Then, following relation holds,
\begin{align}
\hat{\Delta}_{BQ}\leq\beta_{k+1}^2\min_\gamma\left\{-\gamma\delta\|v_k\|^2 +\frac{\gamma^2}{2}(2\|M\|\rho_s)^2\right\} 
\end{align}
To eliminate the dependence of $k$ form $\beta_{k+1}$, let $\beta'$ is the largest of $(\beta_1, \dots, \beta_{k+1})$. And in the same way as the previous discussion, we can conclude by the induction that
\begin{align}
\|v_k\|^2\leq2\frac{(1+\beta'^2\delta)(2\|M\|\rho_s)^2}{\delta(\beta'^2\delta k+2)}
\end{align}
If we set $\delta_{\mathrm{BQ}}=\beta'^2$, this ends the proof.
\end{proof}

\section{Proof of Theorem~3 in the main paper}\label{sup:cont_proof}
Our results are directly obtained by AppendixB of \cite{FWBQ}. We use the proof of the contraction theorem.
The calculations after Eq.(26) in \cite{FWBQ} are held in our case and Eq.(31) in \cite{FWBQ} holds in our situation. Thus all we need to do is to substitute the variance of ours into Eq.(31) in \cite{FWBQ}. Our variance is derived by reweighting the particles obtained by MMD-FW with Bayesian Quadrature weight and calculate the weighted MMD. This is upper bounded by the bound of the result Theorem~2 in the main paper because MMD which is calculated by Bayesian quadrature weight is optimal. Thus, by substituting the result of Theorem~2 in the main paper into Eq.(31) in \cite{FWBQ}, we obtain the result.

Actually, we cannot calculate the Bayesian Quadrature weight analytically, so we approximate it by obtained particles. Even in such a case, we can obtain the upper bound.
The posterior distribution is denoted by $N(Z_{f,\hat{p}},\sigma_N^2)$, where $\sigma_N=\mathrm{MMD}(\{x_i,w_i^{\mathrm{BQ}}\}_{i=1}^N)$, and $w_i^{\mathrm{BQ}}$ is the Bayesian Quadrature weight. Since we approximate this weight empirically and denote the corresponding variance by $\hat{\sigma}_N=\mathrm{MMD}(\{x_i,\hat{w}_i^{\mathrm{BQ}}\}_{i=1}^N)$. Since Bayesian Quadrature weight is the optimal weight,
$\sigma_N\leq\hat{\sigma}_N$. Thus we can upper bound Eq.(31) in \cite{FWBQ} by this variance whose weight is approximated by particles.

\section{Kernel selection}\label{sup:kernel}
The choice of the kernel is crucial numerically and theoretically.
In the above convergence proof, 
we assumed that within the affine hull $\mathcal{M}$,
there exists a ball with center $\hat{x}$ and radius R that is included in $\mathcal{M}$.
\cite{bach_herding_equi, FWBQ} proved that for infinite dimensional RKHS, 
such as the case of RBF kernel, 
this assumption never holds. 
Thus, we can only have the sub-linear convergence for RBF kernels in general.
However, as pointed in \cite{FWBQ}, 
even if we use RBF kernels, 
thanks to the rounding in a computer, 
what we treat in a simulation are actually finite dimensional. 
This holds to our situation, 
and in the experiments, we observed the linear convergence of our algorithm.

\section{Details of Experiments}\label{sup:exps}
In this section, we show the detail experimental settings and results which we cannot show in the main paper due to the limit of the space.

\subsection{Algorithm of SVGD}\label{sup:svgd}
We implemented SVGD by following pseudo codes.

\hspace{3.0mm}
\begin{algorithm}[ht]\small
   \caption{Stein Variational Gradient Descent}
   \label{alg:stein_VI}
\begin{algorithmic}[1]
   \STATE {\bfseries Input:} A target density $p(x)$ and  initial particles $\{x_n^0\}_{n=1}^N$
   \STATE {\bfseries Output:} Particles $\{x_i\}_{i=1}^n$ which approximate $p(x)$
   \FOR{iteration $l$}
   \STATE $x_{n}^{(l+1)}\leftarrow x_n^{(l)}+\epsilon^{(l)}\hat{\phi}^*(x_n^{(l)})$, where $\hat{\phi}^*(x)=\frac{1}{N}\sum_{n=1}^N\left[k(x_n^{(l)},x)\nabla_{x_n^{(l)}}\ln p(x_n^{(l)})+\nabla_{x_n^{(l)}}k(x_n^{(l)},x)\right]$
   \ENDFOR
\end{algorithmic}
\end{algorithm}

\subsection{Toy dataset}\label{sup:toydata}
\begin{figure}[t]
 \vspace{-3.5mm}
 \centering
 \hspace{-0.in}
 \subfigure[2D gaussian with particles obtained by MMD-FW $h=0.2$]{
 \includegraphics[width=0.8\linewidth]{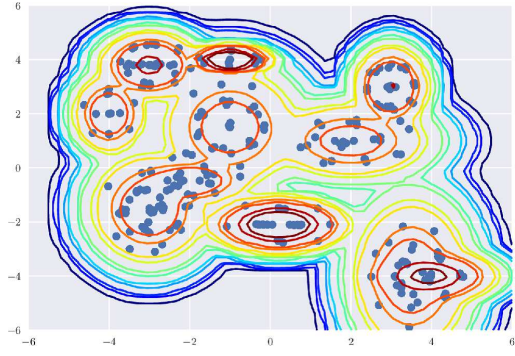}
 \label{fig:toy_pic}}
 \subfigure[The dependence of number of particles]{
 \includegraphics[width=0.8\linewidth]{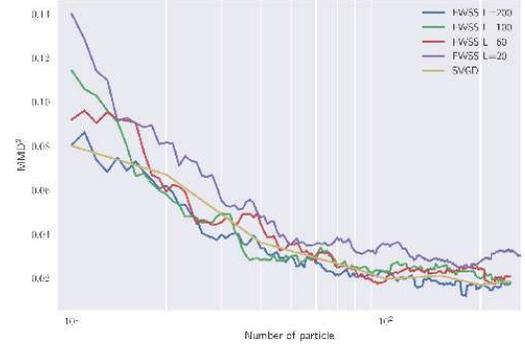}
 \label{MMD_plot}}
 \caption{The results of the toy data by MMD-FW of fixed bandwidth}
 \vspace{-0.2in}
\label{2Dtoy}
\end{figure}

To clarify how our method works, 
we checked our algorithm with a two dimensional toy dataset. 
The true distribution is a two dimensional mixture of Gaussians which is composed of 11 Gaussian distributions. 
Here we applied our MMD-FW and observed how the particles are fitted to the distribution. 
The results are shown in Fig~\ref{2Dtoy}. 
In this figure, the number of particle is 200 and $L=50$ and $h=0.3$. 
We also changed $L$, the number of gradient descent in the approx-LMO, 
and compared how MMD decreases with SVGD in Fig~\ref{MMD_plot}. 
We found that both our method and SVGD decreases linearly.

The selection of the bandwidth is crucial for the success of the method. 
As we had shown in the main paper, there are three ways to specify the bandwidth.
The first choice is using the fixed bandwidth and this choice is often used in Bayesian quadrature, e.g., \cite{FWBQ}. 
The second choice is the median trick which is used in SVGD \cite{liu2016stein}. 
This method enables us to choose the bandwidth adaptively during the optimization. 
The third choice is using the gradient descent for $h$ to minimize the kernelized Stein discrepancy during optimization. This is used in \cite{jitkrittum2017linear}.

In Fig~\ref{2Dtoy_other_band}, we showed other selection of bandwidth. 
As shown in Fig~\ref{fig:toy_pic_01}, 
the small bandwidth makes the particles sparsely scattered.
This is due to the fact that the second term of the update equation, 
which corresponds to the regularization term becomes very large.
Thus particles tend to take a distance from each other. 
If we take a large  bandwidth, 
the regularization term becomes small, 
and thus particles become close to each other which is shown in Fig~\ref{fig:toy_pic_02}.

\begin{figure}[t]
 \vspace{-3.5mm}
 \centering
 \hspace{-0.in}
 \subfigure[2D gaussian with particles obtained by MMD-FW bandwidth $h=0.1$]{
 \includegraphics[width=0.8\linewidth]{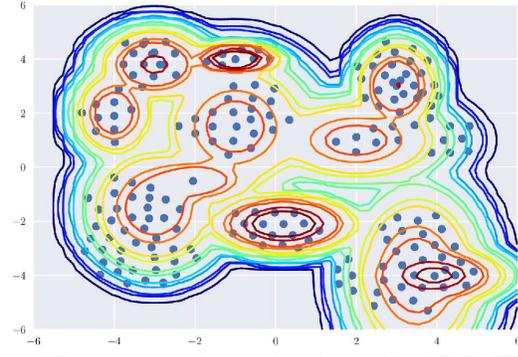}
 \label{fig:toy_pic_01}}\\
 \subfigure[2D gaussian with particles obtained by MMD-FW bandwidth $h=1.0$]{
 \includegraphics[width=0.8\linewidth]{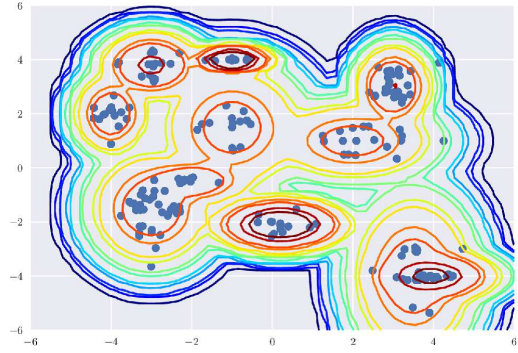}
 \label{fig:toy_pic_02}}
 \caption{The results of the toy data by MMD-FW of fixed bandwidth}
 \vspace{-0.2in}
\label{2Dtoy_other_band}
\end{figure}

Finally, we showed the results of MMD-FW by median trick and SVGD, the result is shown in Fig.~\ref{2Dtoy_median}

\begin{figure}[t]
 \vspace{-3.5mm}
 \centering
 \hspace{-0.in}
 \subfigure[2D gaussian with particles obtained by MMD-FW bandwidth by median trick]{
 \includegraphics[width=0.8\linewidth]{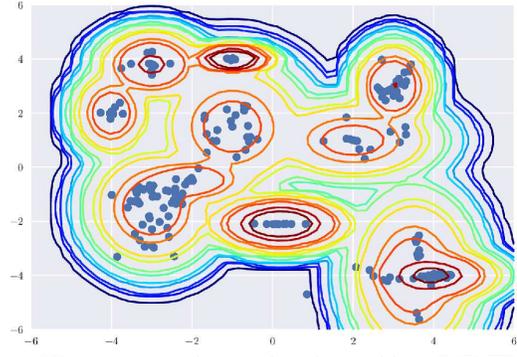}
 \label{fig:toy_pic_05}}\\
 \subfigure[2D gaussian with particles obtained by SVGD]{
 \includegraphics[width=0.8\linewidth]{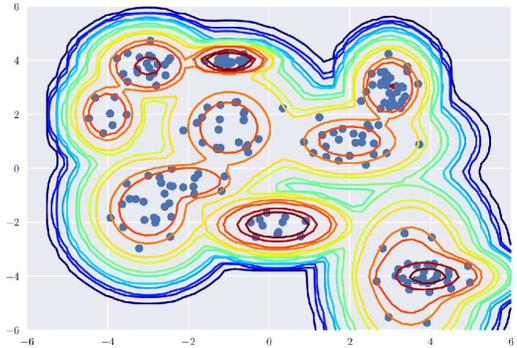}
 \label{fig:toy_pic_06}}
 \caption{Boundaries of logistic regression using ordinary VI and the proposed method}
 \vspace{-0.2in}
\label{2Dtoy_median}
\end{figure}

\subsection{Details of the Benchmark experiments}
\subsubsection*{Bayesian logistic regression}
In this experiment, we used Adam with a learning rate of 0.005 and we split the data, 90$\%$ are used for training and 10$\%$ are used for the test. Minibatch size is 100. For the LMO calculation, we set $L=250$. We used the median trick for the kernel bandwidth. To calculate the MMD, we have to fix the bandwidth of the kernel. We simply take the median of the bandwidth which changes adaptively during the optimization. In the main paper, we used 2.5. To calculate the MMD, we generate samples by Hybrid Monte Carlo (HMC).

\subsubsection*{Bayesian neural net regression}
In this experiment, we used Adam with a learning rate of 0.005 and we split the data, 90$\%$ are used for training and 10$\%$ are used for the test. minibatch size is 100 except for year dataset, where we used 500 minibatch sizes.

We use the zero mean Gaussian for the prior of the weights and we put $\mathrm{Gamma}(1, 0.1)$ prior for the inverse covariances.

For the LMO calculation, we set $L=1000$ except for year dataset where we set $L=2000$. Since it is difficult to obtain the MAP estimation for the initial sates, this corresponds to the case of Non-MAP initialization as we explained the previous section.

In the main text, we showed the figure of naval dataset and here, we show that of the protein data in Fig.~\ref{fig:protein_result}.
\begin{figure}[t]
 \centering
 \hspace{-0.in}
 \subfigure{
 \includegraphics[width=1.0\linewidth]{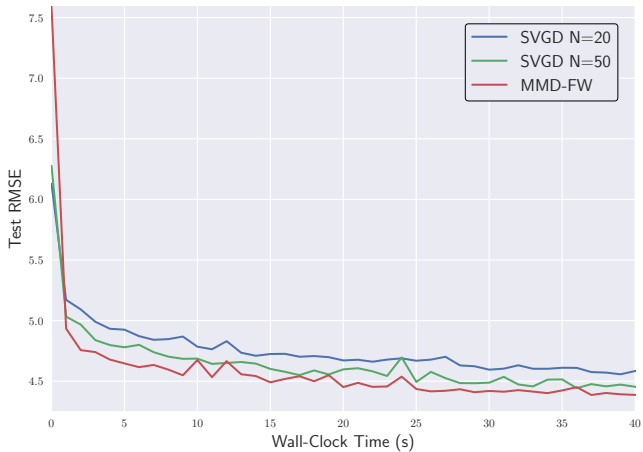}
 }
\caption{Comparison of MMD-FW and SVGD in terms of wall clock time with test accuracy(Protein data)}
\label{fig:protein_result}
\end{figure}

In the main paper, we checked the performance of MMD-FW and SVGD by fixing the computation time. Those fixed time are selected that the test RMSE is sufficiently converged.

\subsection{MMD comparison between MMD-FW and SVGD}\label{sup:experiment_mmd}
We changed the number of particles and plot it. The value of MMD at each number of particles are calculated after the 30000 steps where the optimization had finished. The result is shown in Fig.\ref{MMD_plot}.

To calculate the MMD, we also generate ``true samples'' by HMC. SVGD have smaller MMD compared to ours. This is due to the fact that SVGD simultaneously optimizes all particles and try to put particles in the best position which corresponds to the global optima. On the other hand, MMD-FW just increases the particles greedily and thus this results in a local optimum. Hence the better performance of SVGD compared to MMD-FW at the same number of particles in terms of MMD is a natural result.

\section{Stein Points Experiments}
The stein pints method utilizes two algorithms for the objective function,
the greedy algorithm and the herding algorithm.
In order to perform optimization,
the authors indicate three methods:
the Nelder-Mead method, the Monte Carlo method and the grid search method.
To perform on step optimization in a $n$ dimension parameter space,
the Nelder-Mead method needs to evaluate the objective function at least $n+1$ times,
and the grid search method needs to evaluate $d^n$ times,
where $d$ is the number of the grids in one dimension.

We conducted toy experiments to approximate Gaussian mixtures in $6$ different combinations.

\begin{figure}[tb]
 \centering
 \hspace{-0.in}
 \subfigure[Greedy Monte Carlo]{
 \includegraphics[width=0.45\linewidth]{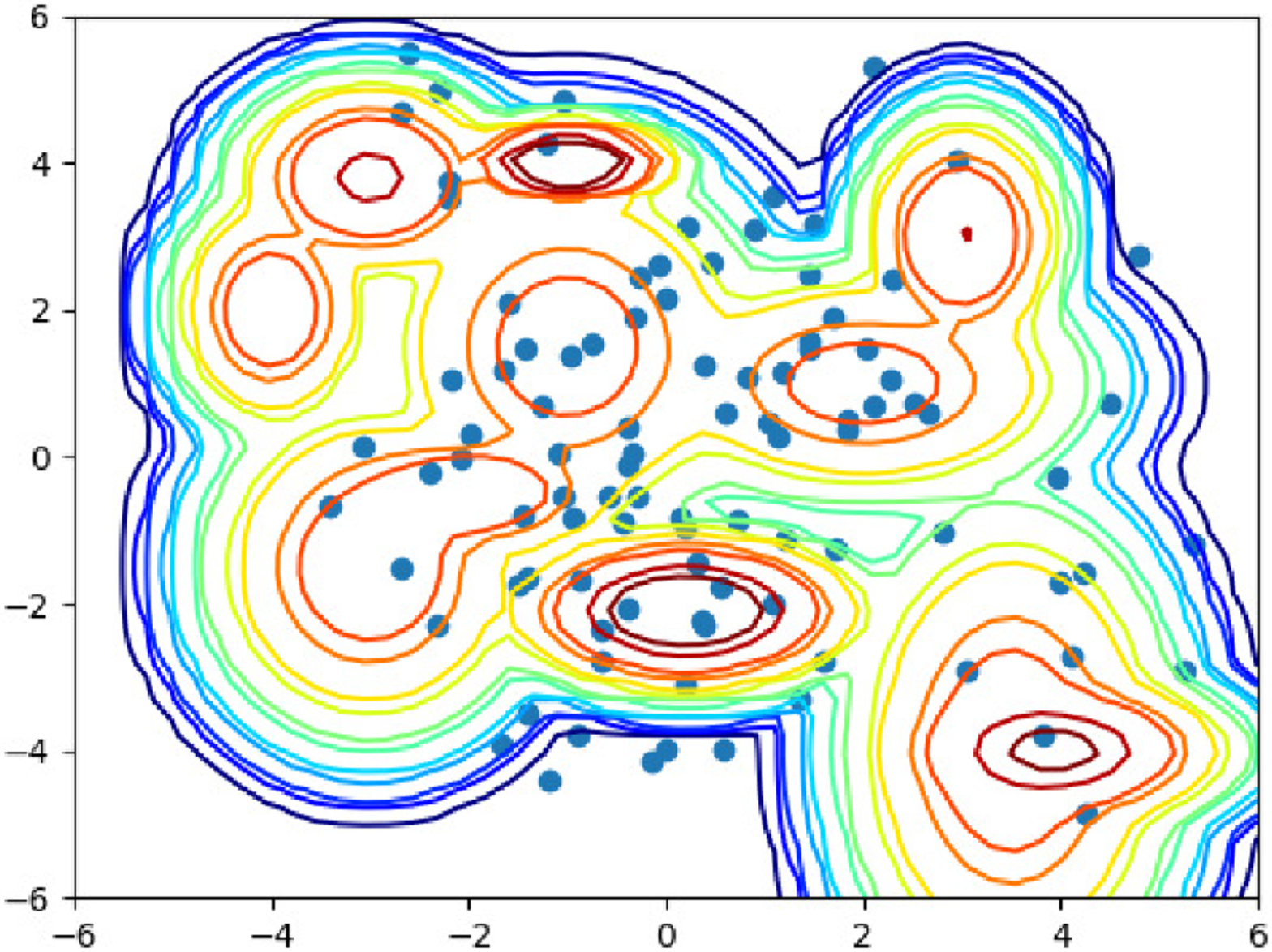}}~
 \subfigure[Herding Monte Carlo]{
 \includegraphics[width=0.45\linewidth]{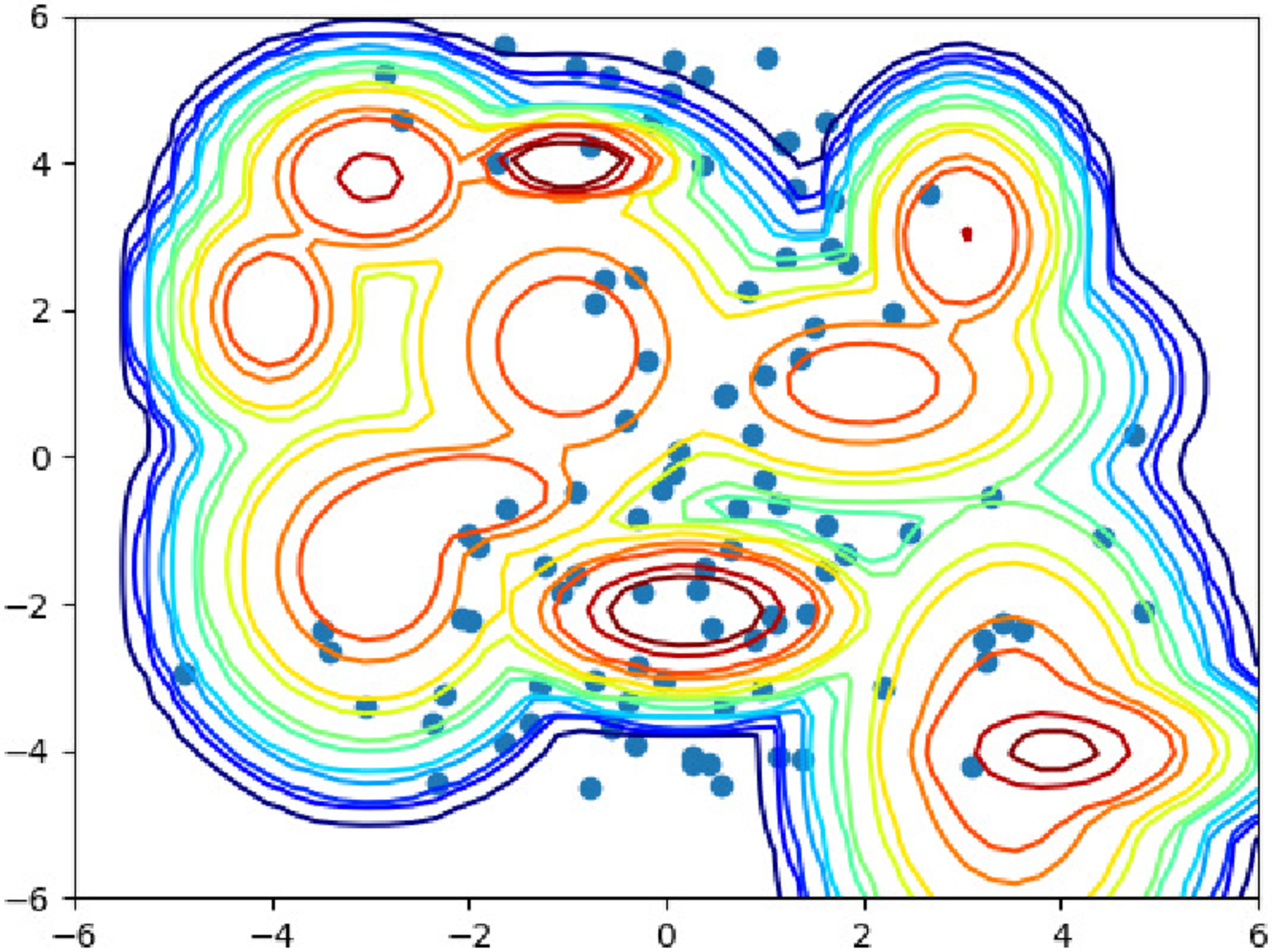}}\\
  \subfigure[Greedy Nelder-Mead]{
 \includegraphics[width=0.45\linewidth]{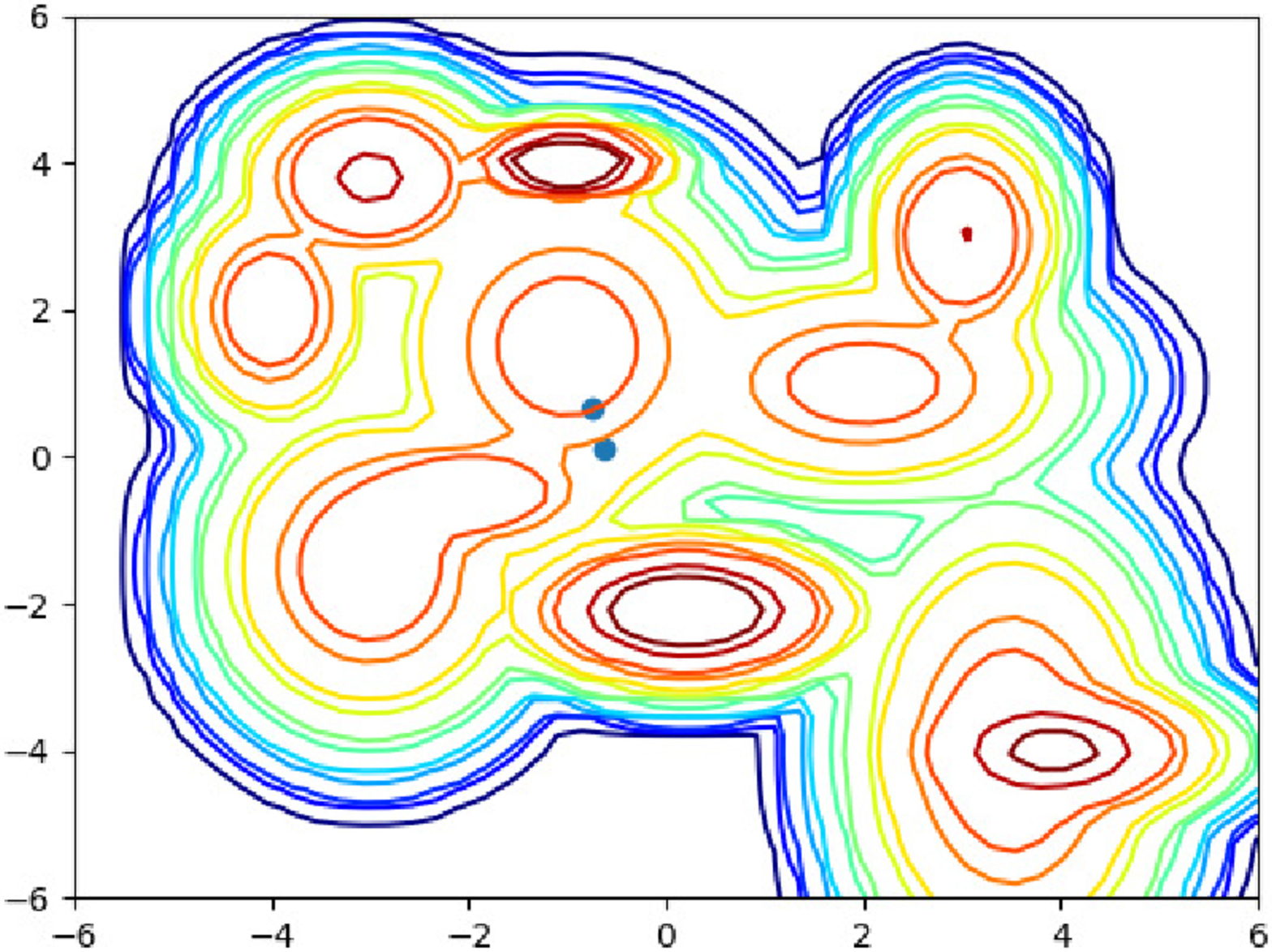}}~
 \subfigure[Herding Nelder-Mead]{
 \includegraphics[width=0.45\linewidth]{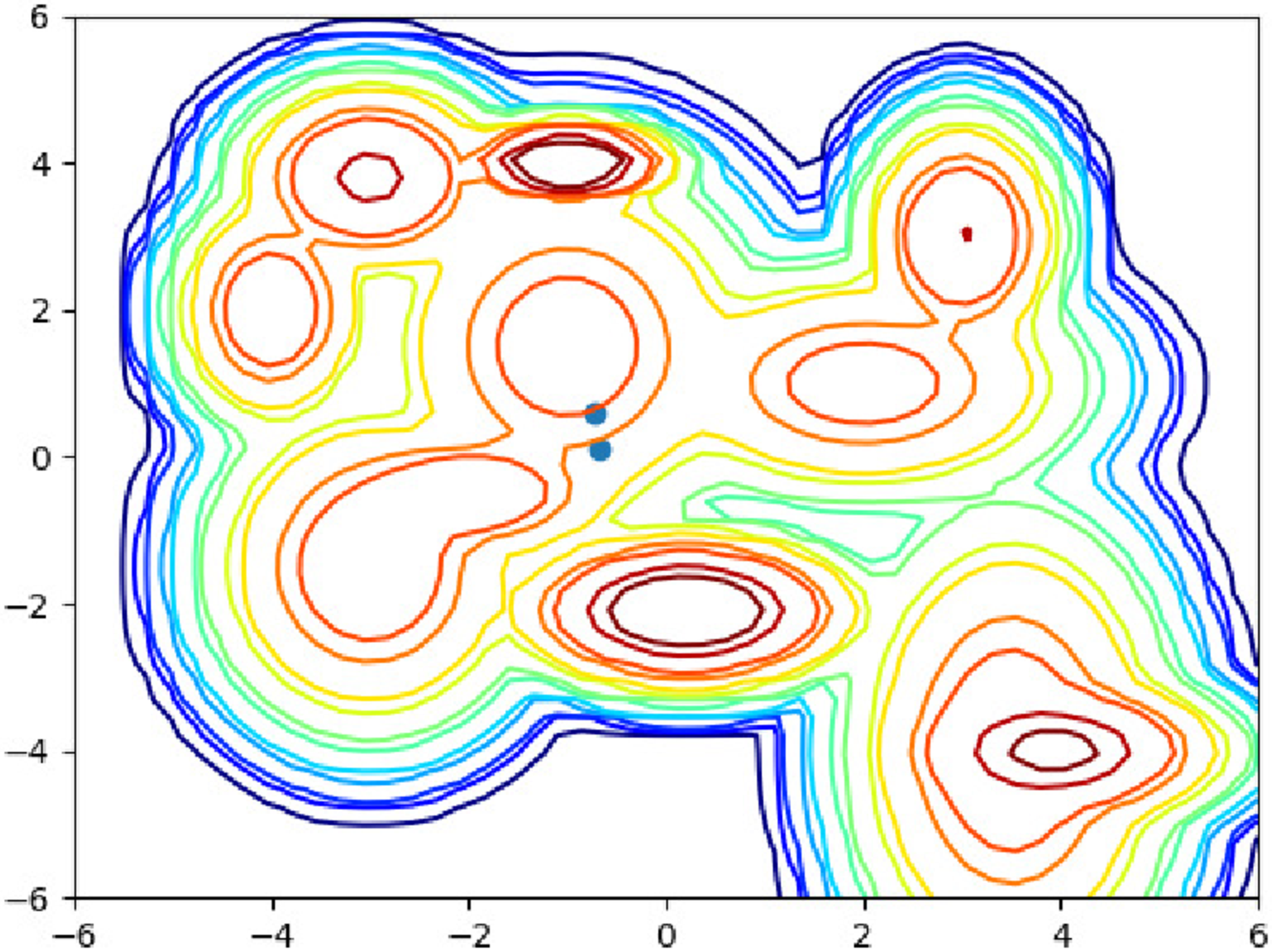}}\\
  \subfigure[Greedy Grid Search]{
 \includegraphics[width=0.45\linewidth]{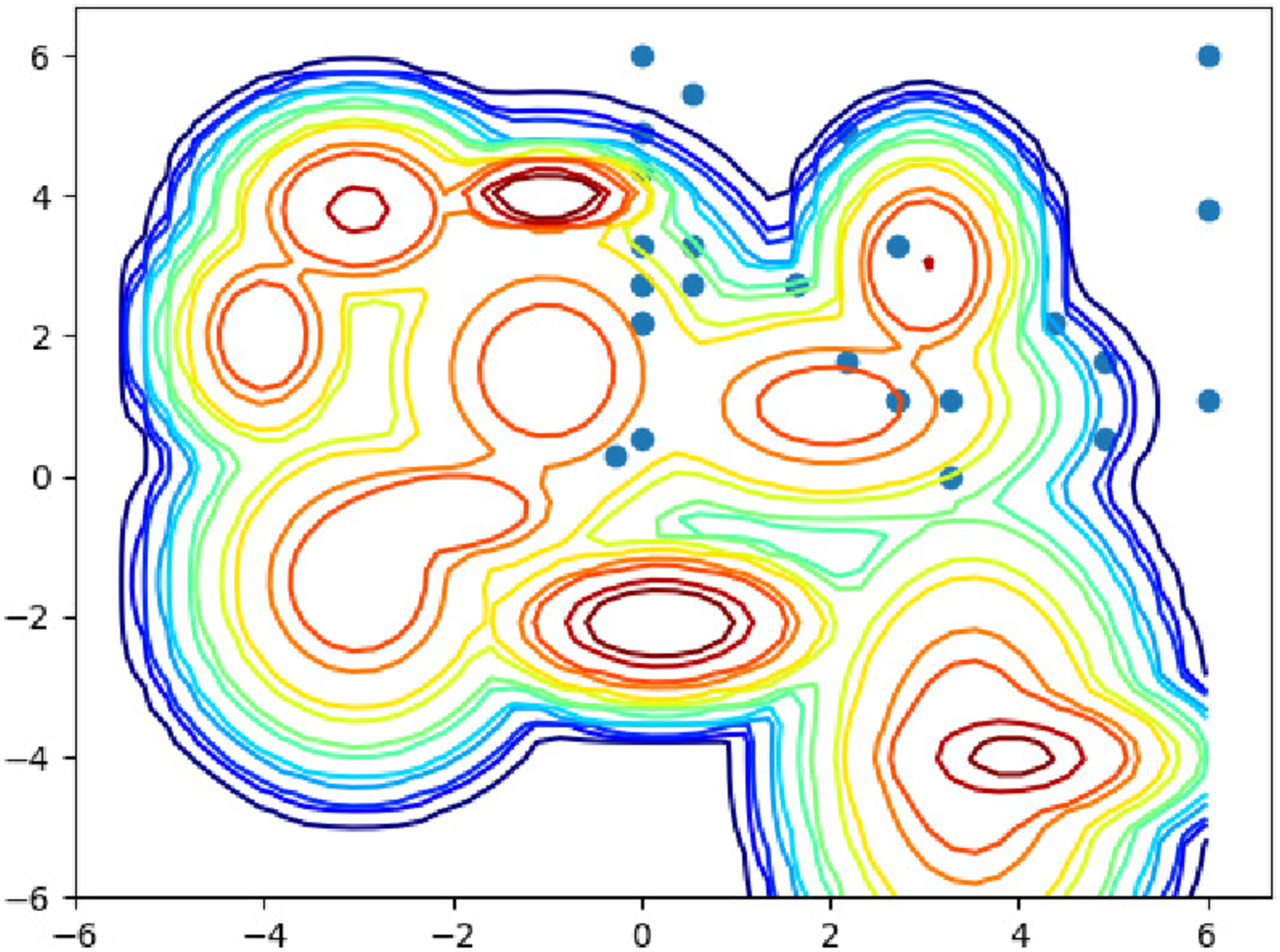}}~
 \subfigure[Herding Grid Search]{
 \includegraphics[width=0.45\linewidth]{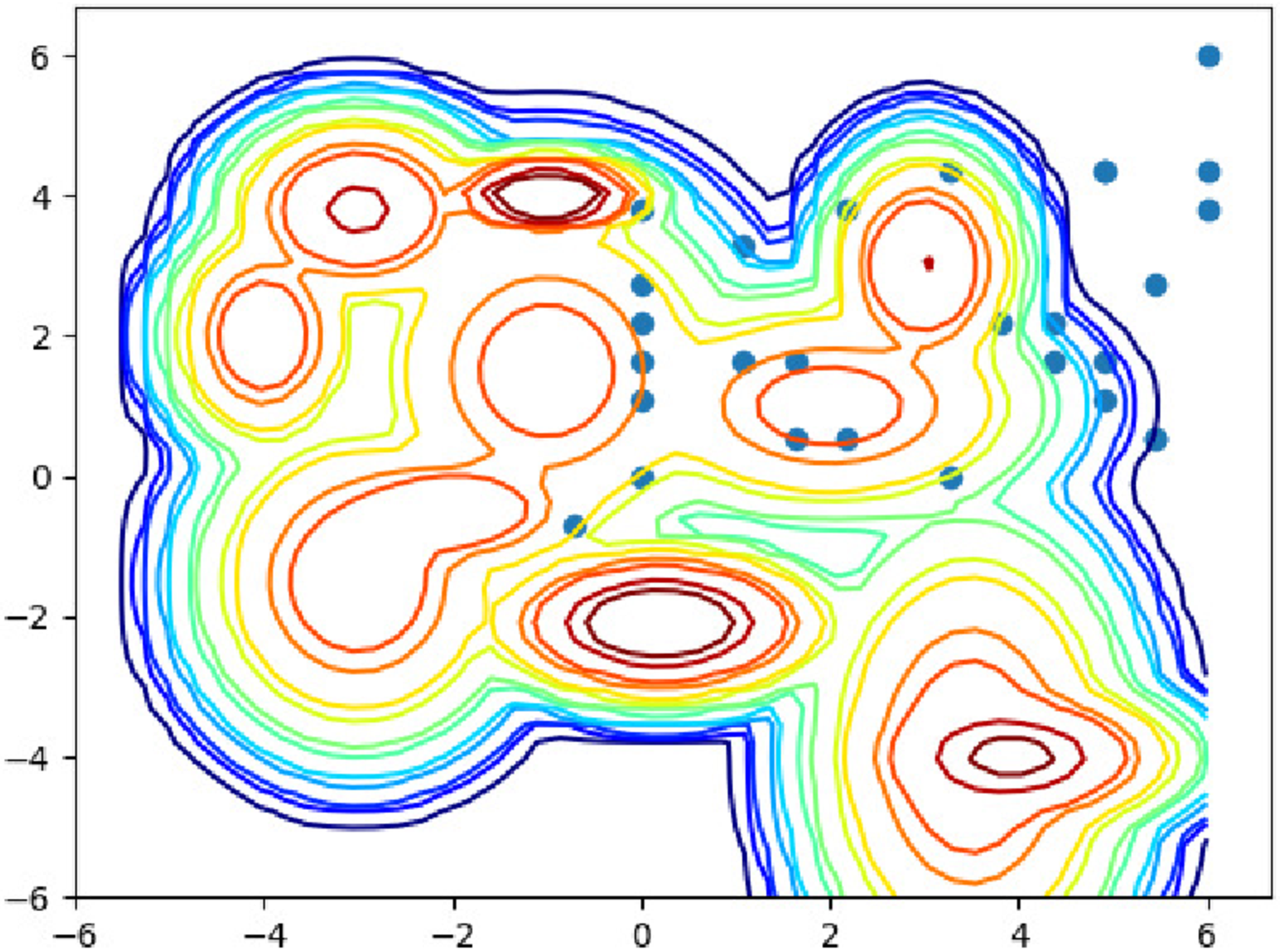}}\\
 \caption{Plots of the toy experiment}
 \label{fig:toy-sp}
\end{figure}

The result shown in Fig. \ref{fig:toy-sp} is not favorable as expected.
First,
the data points need to be bounded in a specifically designed area,
which is not trivial and may require more time than usual.
Second,
it failed to capture the character of the target distribution since it only does exploration.
Since the greedy algorithm together with Monte Carlo seems to perform the best fit,
we use this setting in the Bayesian logistic regression experiment.

The details of the greedy algorithm are elaborated as follows.
The distance we want to minimize between $n$ sampled points and the target distribution is defined as
$$\textrm{D} = \sqrt{\frac1{n^2} \sum_{i,j=1}^n k_0(x_i, x_j)}.$$
Starting from the first MAP data point,
the greedy algorithm tries to find a data points that minimize the distance.
The greedy algorithm solves the optimization problem
$$x_n = \textrm{argmin}_{x} \frac{k_0(x, x)}2 + \sum_{i=1}^{n-1} k_0(x_i, x)$$
for the $n$th data point,
where $k_0(x, x')$ is the Stein repoducing kernel defined as
\begin{align}
k_0(x, x') & = \nabla_x\cdot\nabla_{x'}k(x,x') + \nabla_x k(x, x') \cdot \nabla_{x'} \log p(x') +\nonumber \\
 &\nabla_{x'} k(x, x') \cdot \nabla_x \log p(x) + \nonumber \\
 &k(x, x') \nabla_x \log p(x) \nabla_{x'} \log p(x').
\end{align}
The Gaussian kernel is used for the base kernel $k(x, x')$.

The details of the Monte Carlo methods are elaborated as follows.
The first sample is drawn by performing MAP approximation,
for which we looped $100$ times.
From the second sample,
we take the strategy below.
First,
we uniformly select $20$ base points within existing points.
Then,
we sample $20$ points from a Gaussian distribution,
whose location is the base point and scale is set to be $1$.
We resampled the points until the elements of the $20$ points all fall in the range $[-1, 1]$.
Finally,
we evaluate the $20$ points and select the one performs the best.

However, the experiment is hardly feasible.
Sampling only $4$ data points took $3$ minutes and the accuracy is only $56$\%.

We also tried to test the method on Bayesian neural network settings.
However, it is not realistic since the dimension of the parameter space is too large.

\section{PBC-MMD-FW}\label{sup:APBCMMD-FW}
In FWSS, we can combine a more practically useful variant of FW with our problem setting. When we consider high-dimensional problems, SP does not work.
The computational costs per iteration of SVGD and FWSS are $\mathcal{O}(D^2N^2)$ and $\mathcal{O}(D^2N)$ when the computational costs of $\nabla\ln p(x)$ is $\mathcal{O}(D^2)$, where $D$ is the dimension of the target density. For a high-dimensional problem, we need many particles to approximate the high-dimensional target density $p(x)$. For these reasons, SVGD and FWSS are difficult to apply.
To overcome this limitation due to the high-dimensionality, we use a variant of the FW algorithm, the asynchronous parallel block-coordinate FW (AP-BCFW) algorithm \cite{wang2016parallel}. An advantage of AP-BCFW is that we can optimize the block separable domain problem in parallel. The problem, $\min_x f(x)$ s.t. $x=[x_1,\dots,x_d]\in \times^d_{i=1}\mathcal{M}_i$ can be broken into $d$ independent sub-LMO problems, $\min _{s_i\in \mathcal{M}_i}\langle s_i,\nabla_i f(x) \rangle$. 

This proposed algorithm converges sublinearly or linearly depending on the step width, which is discussed in Appendix.
The computational cost per iteration per worker in APBC-FWSS is $O(d^2N)$, where $d$ is the number of dimensions assigned to each worker, which is $d\leq D$.

In our problem setting, when we use specific kernels, we can factorize the kernel as $k(x,y)=\prod_{i=1}^D k(x_{(i)},y_{(i)})$.
This means that our problem is $J(\mu_{\hat{p}})=\frac{1}{2}\|\mu_p-\mu_{\hat{p}}\|_\mathcal{H}^2$, $\mathcal{H}=\times^d_{i=1}\mathcal{H}_i$. 
Based on this factorization, we propose parallel block coordinate FWSS (PBC-FWSS) given in Alg.~\ref{alg:APBCFWSS}. In the algorithm, each worker calculates $\bar{g}_{n(i)}=k(x_{(i)},\cdot)$, which corresponds to deriving the $i$-th dimension of a particle.

\begin{algorithm}\small
   \caption{Parallel Block Coordinate FWSS (PBC-FWSS)}
   \label{alg:APBCFWSS}
\begin{algorithmic}[1]
   \STATE {\bfseries Input:} A target density $p(x)$, a set of workers $\mathcal{O}$.
   \STATE Calculate MAP estimation for $\mu_{\hat{p}}^{(0)}$
   \STATE {Broadcast $\mu_{\hat{p}}^{(0)}$ to all workers in $\mathcal{O}$}
   \FOR{iteration $n$}
   \STATE {Each oracle calculates $\bar{g}_{n(i)}=\mathrm{Approx-LMO}(\mu_{\hat{p}}^{(n)})\left.\right|_{i=1}^D$}
   \STATE {Keep reserving $(i,\bar{g}_{n(i)})$ from $\mathrm{O}$}
   \STATE {Update $\mu_{\hat{p}}^{(n+1)}=$Fully Correction($\otimes_i^D\bar{g}_{n(i)},\mu_{\hat{p}}^{(n)}$)}
   \STATE Broadcast $\mu_{\hat{p}}^{(n+1)}$ to $\mathcal{O}$.
   \ENDFOR
\end{algorithmic}
\end{algorithm}

In our situation, the domain is the marginal polytope $\mathcal{M}$ of the RKHS  $\mathcal{H}$, which is defined as the closure of the convex hull of $k(\cdot,x)$. Thus we cannot directly apply the block coordinate algorithm to our setting. 
This can be easily confirmed in the following way.
First we denote the convex hull in RKHS as $\sum_i\alpha_ik(x_i,\cdot)=\sum_i\alpha_i\phi_i$ where $\alpha_i$ is a positive coefficient and this is expressed as
\begin{align}
\sum_i\alpha_i\phi_i=\sum_i\alpha_i\otimes_d\phi_i^d,
\end{align}
where we assume some special kernels that we can factorize it into the Cartesian product such as RBF kernels which satisfies $k(x,y)=\prod_{i=1}^d k(x_{(i)},y_{(i)})$.

And to apply the block coordinate algorithm, each domain is the convex hull, that is,
\begin{align}
\otimes_d\left(\sum_{i^d}\alpha_{i^d}\phi_{i^d}\right)=\sum_{i}\sum_{j}\sum_{k}\ldots \alpha_{i}\alpha_{j}\alpha_{k}\ldots (\phi_i\otimes\phi_j\otimes\phi_k\ldots).
\end{align}
Thus, this is a much larger space than our problem. To mediate this situation, we have to consider the synchronize algorithm instead of the asynchronous one.
That is if we can separate our objective into each dimension by using the Cartesian product, we will first solve the LMO in each dimension separately, and wait all the worker finish each calculation. After that, we combine all the results of the workers and update the $\mu_{\hat{p}}^{(n)}$. By using this strategy, in each LMO, we solve subproblems in the convex hull, and can easily find the solution at the extreme points in each dimension, and the synchronous update makes the particles in $\sum_i\alpha_i\otimes_d\phi_i^d$. The drawback of this strategy is that if some workers are slow, we have to wait for them.

The next concern is whether we can separate our objective into each dimension by using the Cartesian product. 
In general, this is impossible since MMD entails the $\mu_p$ and
\begin{align}
\mu_p=\mathbb{E}_{p(x)}[\otimes_d\phi(x_d)]=\int\otimes_d\phi(x_d)p(x_1,\ldots,x_D)dx_1\ldots dx_D.
\end{align}
and this is not separable in general. 
However, we can separate them under conditional probability conditions which is a special case, when we focus on the update equation which we use to solve the LMO, 
and especially the derivative with respect to the $d$-th dimension of $x$,
\begin{align}
&\nabla_{x^{(d)}}\mathrm{MMD}(x)^2\nonumber \\
&=\nabla_{x^{(d)}}\langle \mu_{\hat{p}}^{(n)}-\mu_p,g \rangle \nonumber \\
&=\frac{1}{n}\sum_{l=1}^{n}\nabla_{x^{(d)}}k(x_l,x)+\frac{1}{n}\sum_{l=1}^{n} k(x,x_l)\nabla_{x_l^{(d)}}\ln p(x_l) \nonumber \\
&=\frac{1}{n}\sum_{l=1}^{n}\prod_{i\neq d}k(x_l^{(i)},x^{(i)}) \nonumber\\
&\quad\quad\quad\left(\nabla_{x^{(d)}}k(x_l^{(d)},x^{(d)})+k(x_l^{(d)},x^{(d)})\nabla_{x_l}\ln p(x_l)\right).
\end{align}
So, for the $d$-th dimension, the effect of other dimensions comes from the coefficient $\prod_{i\neq d}k(x_l^{(i)},x^{(i)})$. 
When we use an RBF kernel and optimize the particle, 
the change of the  $\prod_{i\neq d}k(x_l^{(i)},x^{(i)})$ is much smaller than the change inside the bracket. 
Thus we fix the value of $\prod_{i\neq d}k(x_l^{(i)},x^{(i)})$ during the optimization. 
This enables us to separate the update of the particle in each dimension. 
This is especially useful when we use the median trick during the optimization, 
since the median trick tries to $\sum_jk(x_i,x_j)\sim N\exp{(-\mathrm{med}^2/h^2)}=1$ that is the sum of the coefficients of the gradients are coordinated to be 1. 
Hence the coefficient does not change so much during the iteration of the LMO.
Based on this assumption, we can separately solve the LMO in each dimension, and the obtained particle satisfies the accuracy parameter in approximate LMO,
\begin{align}
\langle \mu_{\hat{p}}^{(n)}-\mu_p,\tilde{g} \rangle=\delta \mathrm{min}_{g \in \mathcal{M}}\langle \mu_{\hat{p}}^{(n)}-\mu_p,g \rangle.
\end{align}
We can guarantee the algorithm in the same way as FWSS. We tested the algorithm on the benchmark dataset, and it seems work well. However, the success of this algorithm depends on the above assumption and the validity of the assumption depends on the kernel and its hyperparameters. The RBF kernel and median trick seem to satisfy the assumption and this is the reason the experiment worked well.

\subsection{Results of Other real datasets by PBC-MMD-FW}

We did the comparison of PBC-MMD-FW, MMD-FW, and SVGD for Bayesian neural net regression, where the model is one hidden layer with 100 units and Relu activation function. The data is Year dataset in UCI whose dataset size is 515344 and 91 features. The posterior dimension is 9203. We used minibatch size of 500 and optimize by Adam with 0.005 initial learning rate. For PBC-MMD-FW, we used 2 workers.

The results are shown in Fig~\ref{year_para}. Those two figures are the same plot, where the left one is the enlargement of the right figure at an early time.

As you can see that at an early time, PBC-MMD-FW is the fastest. However, as the optimization proceeds, the advantage of parallel computation had been disappeared. This might be due to our implementation in tensorflow that, for the parallel computation, we first separate the dimensions into each worker, this corresponds to the allocation of variables in tensorflow. Since this allocation makes the computation graph inefficient, and thus we did not gain so much speed up by PBC-MMD-FW.

\begin{figure}[t]
 \centering
 \hspace{-0.in}
 \subfigure[Comparison in terms of wall clock time with test RMSE]{
 \includegraphics[width=0.8\linewidth]{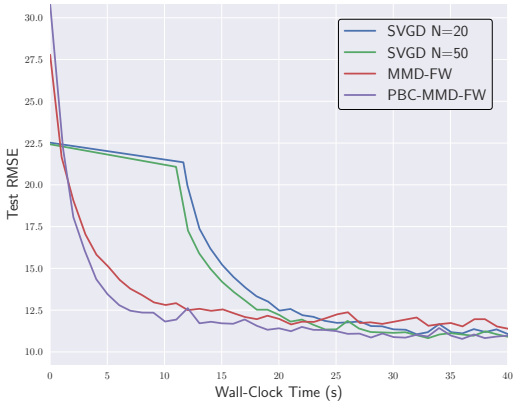}
 \label{fig:cov_time}}\\
 \subfigure[Comparison in terms of wall clock time with test RMSE]{
 \includegraphics[width=0.8\linewidth]{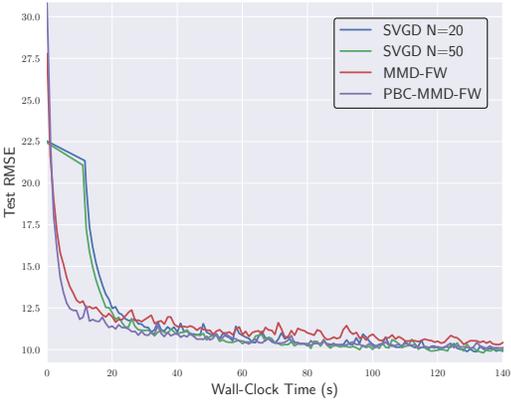}
 \label{fig:cov_mmd}}
 \caption{Comparison in the logistic regression model}
\label{year_para}
\end{figure}

\section{Cache-MMD-FW}\label{sup:CacheMMD-FW}
As we had discussed in the main paper of Section 4.1, we can combine MMD-FW and SVGD. The algorithm is simple. We just replace the Approx-LMO in MMD-FW by Cached approx-LMO as described in Alg.~\ref{alg:cache_LMO}. To use the Cached approx-LMO, we first optimize $N$ particles by SVGD. After finishing the SVGD, we store the optimized particles in the ``Cache''. Then, in the Cached approx-LMO, in each iteration, we first choose the particle which minimizes the absolute value of $\nabla_{x}\mathrm{MMD}(x)^2$ from the Cache. Then we adopt the chosen particle as the initial state of the solution and update it. By doing this, the number of iteration will be drastically small for each iteration. And we eliminate the chosen particle from the cache to prevent from choosing the same particle many times. Based on this Cached approx-LMO, the whole algorithm is given in Alg.~\ref{alg:cache-MMD-FW}. We name this algorithm, Cache-MMD-FW. When we use all the particles which are obtained by SVGD, then we will use the usual Approx-LMO in the Algorithm. The theoretical property of this algorithm is apparently as same as the MMD-FW.

\begin{algorithm}[ht]\small
   \caption{Cached approx-LMO}
   \label{alg:cache_LMO}
\begin{algorithmic}[1]
   \STATE {\bfseries Input:} $\mu_{\hat{p}}^{(k)}$
   \STATE {\bfseries Output:} $k(\cdot,x^{L+1})$
   \STATE $x^{(0)}=\mathrm{argmin}_{x\in\mathrm{cache}}|\nabla_{x}\mathrm{MMD}(x)^2|$
   \STATE Eliminate the chosen $x$ from the Cache
   \FOR{$l=0 \dots L$}
   \STATE Compute $\nabla_{x}\mathrm{MMD}^2$ by Eq.(5) in the main paper
   \STATE Update $x^{(l+1)}\leftarrow x^{(l)}+\epsilon^{(l)}\cdot\nabla_{x}\mathrm{MMD}^2$
   \ENDFOR
\end{algorithmic}
\end{algorithm}

\begin{algorithm}[ht]\small
 \caption{Cached MMD minimization by Frank-Wolfe algorithm}
 \label{alg:cache-MMD-FW}
\begin{algorithmic}
 \STATE $\dots$ as Alg.~3 in the main paper, except for the input of step 1 \\and use the Cached approx-LMO at step 3.
 \STATE {\bfseries Input:} A target density $p(x)$ and particles $\{x_n^{(0)}\}_{n=1}^n$ \\obtained by SVGD
 \STATE $\bar{g}_n= $Cached approx-LMO($\mu_{\hat{p}}^{(n)}$)
\end{algorithmic}
\end{algorithm}

We did the numerical experiment about this algorithm on the toy data which we had explained in the previous section. First, we optimized 200 particles by SVGD. We set the number of iteration $L=10$ in Cached approx-LMO. The results are shown in Fig~\ref{2dcache}.

\begin{figure}[t]
 \centering
 \hspace{-0.in}{
 \includegraphics[width=0.8\linewidth]{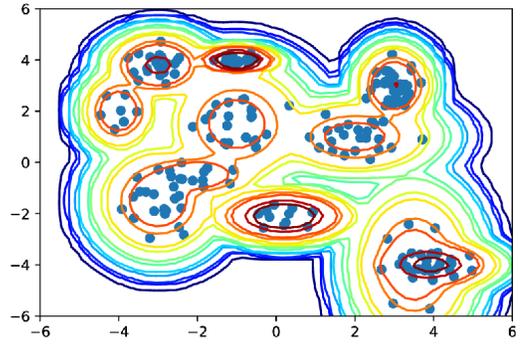}}
 \caption{Toy data results by Cached-MMD-FW}
\label{2dcache}
\end{figure}

\section{Lazy Frank-Wolfe algorithm}\label{sup:lazyFW}
As we had mentioned in the main paper, we can utilize the many variants of the FW to our setting. Here we pick up the Lazy FW \cite{lazy_FW}.

In Lazy FW, instead of calling the LMO at each step, we re-use the particles which had already been processed and stored in some of them satisfy the specified criterion. We call such a procedure as Lazy-LMO and shown in Alg \ref{alg:lazy_oracle}. Actually, this method never improves the sample complexity of the bound, however, it drastically reduces the wall-clock time. When no stored particles satisfy the criterion, we will solve the LMO or update the criterion. When we solve the LMO, we use the Chached approx-LMO of Alg.~\ref{alg:cache_LMO} which also contribute to the drastic reduction of the wall clock time of approximate LMO calculation.

To skip the calling of the LMO, we have to calculate the criterion. To calculate the criterion, we need to calculate the following expression, which is often called the duality gap,
\begin{align}\label{duality_gap}
\mathrm{Dg}(\mu_{\hat{p}}^{(n)},x):&=\langle \mu_{\hat{p}}^{(n)}-\mu_p,\mu_{\hat{p}}^{(n)}-\Phi(x) \rangle \nonumber \\
&=\sum_{l',l=0}^{n-1}w_{l'}^{(n-1)}w_l^{(n-1)}k(x_{l'},x_l)- \nonumber \\
&\quad \sum_{l=0}^{n-1}w_l^{(n-1)}\left(k(x_l,x)+\mu_p(x_l)\right)+\mu_p(x).
\end{align}

The whole algorithm is given in Alg.~\ref{alg:Lazy-MMD-FW}, where we consider the situation that we have already pre-processed particles via SVGD to further reduce the wall clock time. We can also consider the case that particles are not processed by SVGD. In that case, we simply initialize particles sampling from prior or randomly.

\begin{algorithm}[ht]\small
   \caption{Lazy LMO}
   \label{alg:lazy_oracle}
\begin{algorithmic}[1]
   \STATE {\bfseries Input:} $\Phi_n$, K, $\mu_{\hat{p}}^{(n)}$
   \STATE {\bfseries Output:} {\bfseries false} or $k(\cdot,y)$
   \IF{$x$ cached with $\mathrm{Dg}(\mu_{\hat{p}}^{(n)},x)\leq -\Phi/K$ exists}
   \STATE {\bfseries return} $k(x,\cdot) $\{Cache call\}
   \ELSE{}
   \STATE $k(\cdot,x)=$Cached approx-LMO($\mu_{\hat{p}}^{(n)}$)
   \IF{$\mathrm{Dg}(\mu_{\hat{p}}^{(n)},x)\leq -\Phi/K$}
   \STATE {\bfseries return} $k(x,\cdot)$ and {\bfseries add} $x$ to cache
   \ELSE{}
   \STATE {\bfseries return false}
   \ENDIF
   \ENDIF
\end{algorithmic}
\end{algorithm}

\begin{algorithm}[ht]\small
   \caption{Lazy MMD-FW}
   \label{alg:Lazy-MMD-FW}
\begin{algorithmic}[1]
   \STATE {\bfseries Input:} Accuracy parameter $K$, a target density $p(x)$,\\initial particles $\{x_n^{(0)}\}_{n=N}^n$ obtained by SVGD
   \STATE  Add all the initial particles into the cache.
   \STATE $x_{0}=\mathrm{argmin}_{x\in\mathrm{cache}}|\nabla_x\ln p(x)|$
   \STATE $\mu_{\hat{p}}^{(0)}=k(\cdot,x_{0})$
   \STATE $\Phi_{0}=-\mathrm{argmin}_{x\in\mathrm{cache}} \mathrm{Dg}(\mu_{\hat{p}}^{(0)},x)/2$
   \FOR{iteration $n$}
   \STATE $\bar{g}_n= $Lazy-LMO($\Phi_n,K,\mu_{\hat{p}}^{(n)}$)
   \IF{$\bar{g}_n=${\bfseries false}}
   \STATE $\mu_{\hat{p}}^{(n+1)}=\mu_{\hat{p}}^{(n)}$
   \STATE $\Phi_{n+1}=\frac{\Phi_n}{2}$
   \ELSE{}
   \STATE $\lambda _k=\mathrm{argmin}_{\lambda \in [0,1]} J((1-\lambda)\mu_{\hat{p}}^{(n)}+\lambda \bar{g}_n)$
   \STATE Update $\mu_{\hat{p}}^{(n+1)}=(1-\lambda _l)\mu_{\hat{p}}^{(n)}+\lambda _n \bar{g}_n$
   \STATE $\Phi_{n+1}=\Phi_n$
   \ENDIF
   \ENDFOR
\end{algorithmic}
\end{algorithm}

The theoretical behavior of this algorithm is as follows:
\begin{thm}$\mathrm{(Consistency)}$\label{thm:bound:2} Under the condition of Theorem 1 in the main paper, the error $|Z_{f,p}-Z_{f,\hat{p}}|$ of Alg.~3 in the main paper is bounded at the following rate:
\begin{align}
&|Z_{f,p}-Z_{f,\hat{p}}|\nonumber \\
&\leq\mathrm{MMD}(\{(w_n, x_n)\}_{n=1}^N) \nonumber \\
&\leq \begin{cases}
      \sqrt{2}r e^{-\frac{N}{2}\left(\frac{R\epsilon}{KrL}\right)^2},  & \text{$\mathcal{H}$ is the finite dimension} \\
      \frac{(\delta_w+\delta)r^2}{\delta(N\delta_w\delta+2)}, & \text{$\mathcal{H}$ is the infinite dimension}\\
    \end{cases}
\end{align}
where $r$ is the diameter of the marginal polytope $\mathcal{M}$, $\delta$ is the accuracy parameter, and $R$ is the radius of the smallest ball of center $\mu_p$ included $\mathcal{M}$ ($R$ is above 0 only when the dimension of $\mathcal{H}$ is finite.), and $\epsilon=\min_{k}\Phi_k$ which is positively bounded.
\end{thm}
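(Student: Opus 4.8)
The plan is to adapt the proof of Theorem~\ref{thm:bound} essentially verbatim, the only structural change being that the approximate LMO is replaced by the Lazy-LMO of Alg.~\ref{alg:lazy_oracle}, which either returns a cached atom or falls back to the cached approx-LMO, and in both accepted cases certifies a Frank--Wolfe gap of at least $\Phi_n/K$ through the criterion $\mathrm{Dg}(\mu_{\hat{p}}^{(n)},x)\le -\Phi_n/K$. Using the notation of the proof of Theorem~\ref{thm:bound}, I would set $v_k=\mu_p-\mu_{\hat{p}}^{(k)}$ and $w_k=\mu_p-\Phi(x_k)$, so that the duality gap of Eq.~(\ref{duality_gap}) coincides, up to the sign convention, with the pairing $\langle v_k,w_k\rangle$ that drives the line-search recursion of Theorem~\ref{thm:bound}. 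The whole point of laziness is that this pairing is still controlled, now by the threshold sequence $\Phi_n$ rather than by a single oracle-accuracy parameter $\delta$.

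First I would record the two bookkeeping facts that make laziness harmless for the rate, transporting the standard argument of \cite{lazy_FW} into the RKHS. (i) The sequence $\Phi_n$ is nonincreasing and is halved only on a dual (null) step, at which $\mu_{\hat{p}}$ is unchanged; by the stopping rule of Alg.~\ref{alg:lazy_oracle} such a step can occur only when the true gap is itself below $\Phi_n/K$, so $\Phi_n$ can be halved only finitely often and stays bounded below by $\epsilon=\min_k\Phi_k>0$. (ii) On every \emph{progress} step the accepted atom satisfies $\mathrm{Dg}\le -\Phi_n/K\le -\epsilon/K$, which plays exactly the role of the approximate-LMO inequality~(\ref{App:approx_LMO}): it certifies a decrease of at least $\epsilon/K$ in the linearised objective. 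Thus over the $N$ progress steps the effective oracle accuracy is $\epsilon/(KL)$, with $L$ the smoothness constant of the quadratic objective entering the weak-separation oracle.

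Next I would run the line-search recursion from the proof of Theorem~\ref{thm:bound} with $\epsilon/(KL)$ substituted for $\delta$. In the finite-dimensional case a ball of radius $R$ centred at $\mu_p$ lies inside $\mathcal{M}$, so combining the certified gap with relation~(\ref{sup:inner_product}) yields a per-step contraction factor of the form $1-(R\epsilon/(KrL))^2$, where $r$ is the diameter of $\mathcal{M}$. Iterating over the $N$ progress steps and using $1-t\le e^{-t}$ gives $\|v_N\|^2\le 2r^2 e^{-N(R\epsilon/(KrL))^2}$; taking square roots and applying Cauchy--Schwarz with $\|f\|_\mathcal{H}\le 1$ and the definition of MMD, exactly as at the end of the proof of Theorem~\ref{thm:bound}, produces the stated finite-dimensional bound. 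For the infinite-dimensional case $R=0$, so I would instead invoke the sublinear identity used in the infinite branch of Theorem~\ref{thm:bound}, with the empirical BQ-weight deviation contributing the factor $\delta_w$, to obtain the $O(1/N)$ rate $\frac{(\delta_w+\delta)r^2}{\delta(N\delta_w\delta+2)}$.

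The main obstacle is the bookkeeping in steps (i)--(ii): one must show that null steps do not accumulate and that $\epsilon$ is genuinely bounded away from zero, so that the per-progress-step contraction is \emph{uniform}. This is precisely where the lazy parameter $K$ enters, and where care is needed, because the cached approx-LMO only solves its subproblem approximately and $\mu_p$ in Eq.~(\ref{duality_gap}) is itself only available through its empirical approximation; I would therefore have to verify that the certified gap $\Phi_n/K$ survives both the inexactness of the inner optimisation and the empirical evaluation of $\mu_p$, so that the $\delta$-relaxed version of the weak-separation guarantee still holds. Once the uniform gap $\epsilon/K$ is established, the remainder is a transcription of the Theorem~\ref{thm:bound} argument with the modified accuracy parameter.
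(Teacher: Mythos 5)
Your proposal follows essentially the same route as the paper's own proof: both transplant the Beck--Teboulle line-search recursion from Theorem~\ref{thm:bound}, read the Lazy-LMO acceptance test $\mathrm{Dg}(\mu_{\hat{p}}^{(n)},x)\le-\Phi_n/K$ as the approximate-LMO certificate (converted to multiplicative form via the Lipschitz bound $\|v_k\|\le L$), observe that particles are added only on positive calls so the rate is counted over progress steps, and invoke positivity of $\epsilon=\min_k\Phi_k$ together with the boundedness of negative calls (the paper cites Theorem~4.1 of \cite{lazy_FW} for exactly your bookkeeping step (i)). The only discrepancy is cosmetic and is inherited from the paper itself: the paper's proof derives the contraction factor $1-\bigl(\Phi_k/(KL(\|g\|+\rho_s\|M\|))\bigr)^2$ directly from the threshold without using the ball relation of Eq.~(\ref{sup:inner_product}), so the $R$ appearing in the stated rate does not literally emerge from that derivation, and your attempt to ``combine'' the two inequalities papers over the same gap rather than resolving it.
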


\begin{thm}$\mathrm{(Contraction)}$ Let $S\subseteq \mathbb{R}$ be an open neighborhood of the true integral $Z_{f,p}$ and let $\gamma=\mathrm{inf}_{r\in S^c}|r-Z_{f,p}|>0$. Then the posterior probability of mass on $S^c=\mathbb{R}\setminus S$ by Alg~\ref{alg:Lazy-MMD-FW} vanishes at a rate:
\begin{align}
\mathrm{prob}(S^c)\leq\frac{2r}{\sqrt{\pi}\gamma}e^{-\frac{N}{2}\left(\frac{R\epsilon}{KdL}\right)^2-\frac{\gamma^2}{4r^2}e^{\left(\frac{R\epsilon}{KdL}\right)^2N}}\\
\text{for $\mathcal{H}$ is infinite dimension}\nonumber
\end{align}
where $r$ is the diameter of the marginal polytope $\mathcal{M}$, $\delta$ is the accuracy parameter, $R$ is the radius of the smallest ball of center $\mu_p$ included $\mathcal{M}$, and $\epsilon=\min_{k}\Phi_k$ which is positively bounded.
\end{thm}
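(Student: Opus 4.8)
The plan is to derive the contraction rate as a corollary of the consistency bound of Theorem~\ref{thm:bound:2}, reusing the Bayesian-quadrature argument that yields Theorem~\ref{thm:contraction} (Appendix~B of \cite{FWBQ}). First I would recall the Gaussian-process reading of the estimator: putting a mean-zero GP prior with covariance $k$ on the test function $f$ and conditioning on the evaluations $f(x_1),\dots,f(x_N)$ at the particles returned by Alg.~\ref{alg:Lazy-MMD-FW}, the integral admits a Gaussian posterior $N(Z_{f,\hat{p}},\sigma_N^2)$ whose mean is the quadrature estimate $Z_{f,\hat{p}}$ and whose standard deviation is $\sigma_N=\mathrm{MMD}(\{(w_i^{\mathrm{BQ}},x_i)\}_{i=1}^N)$. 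Since the exact BQ weights minimise the MMD over all weightings of a fixed particle set, $\sigma_N$ is bounded by the MMD attained by the weights that Alg.~\ref{alg:Lazy-MMD-FW} actually outputs, which is in turn bounded by Theorem~\ref{thm:bound:2}; the empirically reweighted variance $\hat{\sigma}_N$ satisfies $\sigma_N\le\hat{\sigma}_N$ and is absorbed exactly as in the proof of Theorem~\ref{thm:contraction}.

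Next I would invoke the one-dimensional Gaussian tail estimate. Because $\gamma=\inf_{r'\in S^c}|r'-Z_{f,p}|$ and the consistency part gives $|Z_{f,p}-Z_{f,\hat{p}}|\le\mathrm{MMD}\le\sigma_N$, every point of $S^c$ lies at distance at least $\gamma-\sigma_N$ from the posterior mean, so for $X\sim N(Z_{f,\hat{p}},\sigma_N^2)$ one has $\mathrm{prob}(S^c)\le\mathrm{prob}(|X-Z_{f,\hat{p}}|\ge\gamma-\sigma_N)$. Feeding this into the standard bound $\mathrm{prob}(|X-\mu|\ge t)\le\frac{\sqrt{2}\,\sigma_N}{\sqrt{\pi}\,t}\exp(-t^2/(2\sigma_N^2))$ — which is Eq.(31) of \cite{FWBQ} and is the sole analytic ingredient — reduces the whole problem to substituting the rate for $\sigma_N$.

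For the substitution I would use the finite-dimensional (geometric) branch of Theorem~\ref{thm:bound:2}, which is what the stated double-exponential form actually requires; writing $a=(R\epsilon/(KrL))^2$ this reads $\sigma_N\le\sqrt{2}\,r\,e^{-\tfrac{N}{2}a}$, hence $1/\sigma_N^2\ge\tfrac{1}{2r^2}e^{aN}$. Plugging the first inequality into the prefactor $\tfrac{\sqrt{2}\sigma_N}{\sqrt{\pi}\gamma}$ produces the single exponential $e^{-\tfrac{N}{2}a}$ and collapses the constants into $\tfrac{2r}{\sqrt{\pi}\gamma}$, while plugging the second into the exponent $-\gamma^2/(2\sigma_N^2)$ produces the double exponential $-\tfrac{\gamma^2}{4r^2}e^{aN}$; together these give the claimed bound (the $d$ appearing in the exponents is the polytope diameter written $r$ in Theorem~\ref{thm:bound:2}, and the margin loss $\gamma-\sigma_N\to\gamma$ is handled as in \cite{FWBQ}).

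The only genuine difficulty is inherited rather than introduced at this stage, and it lives entirely inside the constant $a$: the rate is informative only when $\epsilon=\min_k\Phi_k>0$, that is, when the lazy duality-gap thresholds $\Phi_k$ — halved on every cache-miss (``false'') step of Alg.~\ref{alg:Lazy-MMD-FW} — stay bounded away from zero across the $N$ iterations, and when each accepted lazy step still certifies an approximate LMO with effective accuracy of order $\epsilon/(KL)$ relative to the ball radius $R$. Establishing that $\epsilon>0$ and that the lazy acceptance criterion realises the approximate-LMO guarantee is the delicate part, but it is precisely the content of Theorem~\ref{thm:bound:2}, which I am entitled to assume; consequently the contraction statement follows by the monotone substitution above, with the empirical BQ approximation controlled through the optimality inequality $\sigma_N\le\hat{\sigma}_N$.
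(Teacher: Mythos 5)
Your proposal is correct and follows essentially the same route as the paper's own proof: interpret the estimate as the Bayesian-quadrature/GP posterior mean $N(Z_{f,\hat{p}},\sigma_N^2)$, bound $\sigma_N$ by the algorithm's MMD via optimality of the BQ weights (with $\sigma_N\le\hat{\sigma}_N$ absorbing the empirical approximation), and substitute the consistency rate of Theorem~\ref{thm:bound:2} into the Gaussian tail bound, Eq.~(31) of \cite{FWBQ}. You actually supply more detail than the paper, which disposes of this theorem with ``we can prove it in the same way as MMD-FW,'' and you correctly note that the stated double-exponential bound comes from the finite-dimensional (geometric) branch despite the theorem's ``infinite dimension'' label, as well as the $d$-versus-$r$ notational slip in the exponent.
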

Those proofs are shown later in this section.

Practically, we have to calculate Eq.~(\ref{duality_gap}) and this is difficult since this includes the integral $\mu_p$. We tried to approximate this term by the technique of biased importance sampling \cite{bamler2017perturbative}, but not work well. Thus, the practical implementation of this algorithm is the future work.

\subsection{Proofs}
The proof goes in the same way as MMD-FW and we have to be careful about how the approximate LMO describes. We use the proof of Proposition 3.2 in \cite{beck2004conditional} as we did in the proof of MMD-FW. The notation below is the same as \cite{beck2004conditional}.

In the Lazy-LMO, we do not solve the LMO for every iteration but first, we check whether the stored particles satisfies the following condition or not,
\begin{align}
\langle v_k,\tilde{w}_k\rangle\leq-\Phi_k/K.
\end{align}
If there exists the particle which satisfies above condition, then we do not solve the LMO but just return the particle which satisfies above condition. If no particle satisfies above condition, then we solve approximate LMO or update the accuracy parameter $\Phi_k$ following the algorithm. First, we consider the case that we did not update the accuracy parameter, that is, all the procedures consist of only positive calls\cite{lazy_FW}.

Here we assume that objective function is $L/M$ lipshitz, where we introduced $M$ to skip the rescaling the Lipschitz constant later. Then $\|v_k\|\leq L$ holds. Thus, $\frac{1}{L}\leq\frac{1}{\|v_k\|}$ holds.
When we consider the bound of $\|v_k\|^2$, then the discussion up to Eq.(12) in \cite{beck2004conditional} holds by replacing all the $w$ by $\tilde{w}$. And thus
\begin{align}
\|v_k\|^2=\frac{\|v_{k-1}\|^2\|\tilde{w}_{k-1}\|^2-\langle v_{k-1},\tilde{w}_{k-1}\rangle^2}{\|v_{k-1}-\tilde{w}_{k-1}\|^2}
\end{align}
holds. When we use $-\frac{1}{L^2}\geq-\frac{1}{\|v_k\|^2}$, following relation holds
\begin{align}
\|v_k\|^2&=\frac{\|v_{k-1}\|^2\|\tilde{w}_{k-1}\|^2-\langle v_{k-1},\tilde{w}_{k-1}\rangle^2}{\|v_{k-1}-\tilde{w}_{k-1}\|^2} \nonumber \\
&\leq \frac{\|v_{k-1}\|^2(\|\tilde{w}_{k-1}\|^2-\frac{\Phi^2_k}{(LK)^2}}{\|v_{k-1}-\tilde{w}_{k-1}\|^2} \nonumber \\
&\leq \left(1-\left(\frac{\Phi_k}{KL(\|g\|+\rho_s\|M\|)}\right)^2\right)\|v_{k-1}\|^2 \nonumber \\
&= \left(1-\left(\frac{\Phi_k}{C}\right)^2\right)\|v_{k-1}\|^2 
\end{align}
Here we assume that all the calls are only positive, that no negative call exists. Then following relation holds,
\begin{align}\label{App:B_lazy_bound}
\|v_k\|^2\leq\|v_{0}\|^2e^{-\frac{1}{C^2}\sum_{l=0}^k\Phi_k}\leq\|v_{0}\|^2e^{-\frac{k}{C^2}\min_k\Phi_k}
\end{align}
However as shown by Theorem4.1 in \cite{lazy_FW}, it is impossible to construct the algorithm that no negative call exists and the number of successive positive call are bounded.
Fortunately, now we want to bound the objective function by the number of particles not the iteration of the algorithm, this is different from \cite{lazy_FW}. And the number of particles increases only when the positive call is used. This means we can use the bound of Eq.(\ref{App:B_lazy_bound}) directly. And as shown in theorem4.1 in \cite{lazy_FW}, the number of the negative call is also bounded, that is the lazy algorithm surely increase the number of particles and the situation that no positive call exists does not occur. 

By definition, the value of $\min_k\Phi_k$ is positively bounded and thus the theorem directly obtained in the same way as the proof of MMD-FW.
Above proof is about the line search, but we can show the variant of constant step and fully correction easily by the same discussion with MMD-FW.

The value of Lipshitz constant is, our objective function is $J(\mu_{\hat{p}})=\frac{1}{2}\|\mu_p-\mu_{\hat{p}}\|_\mathcal{H}^2$, and thus
\begin{align}
L_k&=\max_{g \in \mathcal{M}}\langle \mu_{\hat{p}}^{(k)}-\mu_p,g \rangle \nonumber \\
&=\max_{x}\sum_{l=0}^{k}w_l^{(k)}k(x_l,x)-\int k(x',x)p(x)dx
\end{align}
and the lipschitz constant becomes $L=\max_k L_k$

About the contraction theorem, we can prove it in the same way as MMD-FW.

\section{Quadrature rules}\label{sup:preliminary}
\subsection{Herding and Quadrature}
When exact integration cannot be done, we often resort to use the quadrature rule approximations. A quadrature rules approximate the integral by weighted sum of functions at the certain points,
\begin{align}\label{integral_1}
\hat{Z}_{f,p}=\sum_{n=1}^N w_nf(x_n),
\end{align}
where we approximated $p(x)$ by $\hat{p}(x)=\sum_{n=1}^N w_n\delta(x_n)$ and $\delta(x_n)$ is a Dirac measure at $x_n$. There are many ways to specifying the combination of $\{(w_n,x_n)\}_{n=1}^N$. In this paper, we call $w_n$s as \emph{weights} and $x_n$s as \emph{particles}. Most widely used quadrature rule is the Monte Carlo(MC). We simply set all the $w_n=\frac{1}{N}$ and we produce $x_n$s by drawn from $p(x)$ randomly. This non-deterministic sampling based approximation converges at a rate $\mathcal{O}(\frac{1}{\sqrt{N}})$. On the other hand, in the Quasi Monte Carlo, we decide $x_n$s to directly minimize the some criterion.

In the kernel herding method \cite{chen2010super,bach_herding_equi}, the discrepancy measure is the Maximum Mean Discrepancy (MMD). Let $\mathcal{H}$ be a Hilbert space of functions equipped with the inner product $\langle\cdot, \cdot\rangle_\mathcal{H}$ and associated norm $\|\cdot\| _\mathcal{H}$. The MMD is defined by
\begin{align}\label{mmd}
\mathrm{MMD}(\{(w_n,x_n)\}_{n=1}^N)=\underset{f\in \mathcal{H}:\|f\|_\mathcal{H}=1}{\mathrm{sup}}|Z_{f,p}-\hat{Z}_{f,p}|
\end{align}
If we consider $\mathcal{H}$ be a reproducing kernel Hilbert space(RKHS) with a kernel $k$. In this setup, we can rewrite the MMD using $k(x,x')$ and set all the $w_i=\frac{1}{N}$,
\begin{align}\label{mmd_kernel}
&\mathrm{MMD}^2(\{(w_i=\frac{1}{N},x_i)\}_{i=1}^N)\nonumber \\
&=\underset{f\in \mathcal{H}:\|f\|_\mathcal{H}=1}{\mathrm{sup}}|Z_{f,p}-\hat{Z}_{f,p}|^2=\|\mu_p-\mu_{\hat{p}}\|_\mathcal{H}^2 \nonumber \\
&=\mathrm{Const.}-2\iint k(x,x')p(x)\hat{p}(x')dxdx'+ \nonumber\\
&\qquad\qquad\qquad\iint k(x,x')\hat{p}(x)\hat{p}(x')dxdx'\nonumber \\
&=\mathrm{Const.}-\frac{2}{N}\sum_{n=1}^N\int k(x,x_n)p(x)dx+\frac{1}{N^2}\sum_{n,m=1}^{N}k(x_n,x_m)
\end{align}where $\mu_p=\int k(\cdot,x)p(x)dx \in \mathcal{H}$. The herding algorithm greedily minimize the above discrepancy in the following way,
\begin{align}\label{herding_greedily}
x_{N+1} &\leftarrow \argmin_x 
[\mathrm{MMD}^2(\{(w_n=\frac{1}{N+1},x_n)\}_{n=1}^N, \nonumber\\
&\qquad\qquad\qquad (w_{N+1}=\frac{1}{N+1},x))]\nonumber \\
&=\argmax_x [\frac{2}{N+1}\int k(x,x')p(x')dx'- \nonumber\\
&\qquad\qquad\qquad\frac{2}{N+1}\sum_{n=1}^{N}k(x,x_n)]
\end{align}
It is widely known that, under certain assumption, they converges at a rate $\mathcal{O}(\frac{1}{N})$.

\end{document}